\def\delequal{\mathrel{\ensurestackMath{\stackon[1pt]{=}{\scriptstyle\Delta}}}}
\theoremstyle{plain}
\newtheorem{Thm}{Theorem}[section]
\theoremstyle{definition}
\newtheorem{Def}[Thm]{Definition}
\theoremstyle{remark}
\newtheorem{Rem}[Thm]{Remark}
\newtheorem{Lem}[Thm]{Lemma}
\newcommand{\B}[1]{{\bm #1}} %数式中のボールド
\newcommand{\mac}[1]{{\mathcal #1}}%Hamiltonianとか
\newcommand{\mab}[1]{{\mathbb #1}}%Hamiltonianとか
\definecolor{customred}{RGB}{255,0,29}
\definecolor{customgreen}{RGB}{46,125,57}
\definecolor{customyellow}{RGB}{255,126,42}
\begin{document}
% If your paper is accepted and the title of your paper is very long,
% the style will print as headings an error message. Use the following
% command to supply a shorter title of your paper so that it can be
% used as headings.
%
%\runningtitle{I use this title instead because the last one was very long}

% If your paper is accepted and the number of authors is large, the
% style will print as headings an error message. Use the following
% command to supply a shorter version of the authors names so that
% they can be used as headings (for example, use only the surnames)
%
%\runningauthor{Surname 1, Surname 2, Surname 3, ...., Surname n}

%%% 以下フォーマットの注意点 %%%
% Be sure that the sentence reads correctly if the citation is deleted: e.g., instead of ``As described by (Cheesman, 1985), we first frobulate the widgets,'' write ``As described by Cheesman (1985), we first frobulate the widgets.''
% The table title must be initial caps and each table numbered consecutively.

\twocolumn[

\aistatstitle{
Learning Dynamics in Linear VAE: Posterior Collapse Threshold, Superfluous Latent Space Pitfalls, and Speedup with KL Annealing 
}
\aistatsauthor{Yuma Ichikawa \And Koji Hukushima}

\aistatsaddress{University of Tokyo, Fujitsu Limited. \And University of Tokyo } ]

\begin{abstract}
Variational autoencoders (VAEs) face a notorious problem wherein the variational posterior often aligns closely with the prior, a phenomenon known as posterior collapse, which hinders the quality of representation learning. To mitigate this problem, an adjustable hyperparameter $\beta$ and a strategy for annealing this parameter, called KL annealing, are proposed. This study presents a theoretical analysis of the learning dynamics in a minimal VAE. It is rigorously proved that the dynamics converge to a deterministic process within the limit of large input dimensions, thereby enabling a detailed dynamical analysis of the generalization error. 
Furthermore, the analysis shows that the VAE initially learns entangled representations and gradually acquires disentangled representations. A fixed-point analysis of the deterministic process reveals that when $\beta$ exceeds a certain threshold, posterior collapse becomes inevitable regardless of the learning period. Additionally, the superfluous latent variables for the data-generative factors lead to overfitting of the background noise; this adversely affects both generalization and learning convergence. The analysis further unveiled that appropriately tuned KL annealing can accelerate convergence.
\end{abstract}

\section{INTRODUCTION}\label{sec:introduction}
% 潜在変数モデルとVAE
Deep latent variable models are generative models that convert latent variables generated from a prior distribution into samples that closely resemble data through a neural network. Variational autoencoders (VAEs) \citep{kingma2013auto, rezende2014stochastic}, one of the deep latent variable models, have been applied in various fields such as image generation \citep{child2020very, vahdat2020nvae}, text generation \citep{bowman2015generating}, music generation \citep{roberts2018hierarchical}, clustering \citep{jiang2016variational}, dimensionality reduction \citep{akkari2022bayesian}, data augmentation \citep{norouzi2020exemplar}, and anomaly detection \citep{an2015variational, park2022interpreting}. 
% プラクティカルには, beta-VAEが使われる
The objective function of the VAE can be decomposed into the reconstruction error (\textit{distortion}) and KL divergence term (\textit{rate}), which have different roles and a trade-off relationship.
In practice, VAEs are generally trained with the $\beta$-VAE objective \citep{higgins2016beta}, which balances the reconstruction error and KL divergence term by introducing a weight parameter $\beta$.

% beta-VAEは, disentanglementした表現抽出に特化
In addition to data generation tasks, $\beta$-VAEs are state-of-the-art models for representation learning. 
In particular, $\beta$-VAEs have gained attention owing to their capability for obtaining representations in which a single latent variable is sensitive to changes in a single generative factor and is relatively invariant to changes in other factors \citep{bengio2013representation}.
This property of representations is called ``disentanglement''. 
For example, a disentangled representation of 3D objects is sensitive to a single independent data-generative factor, such as object identity, position, scale, and color. In $\beta$-VAE, the degree of disentanglement can be controlled by tuning the weight $\beta$. 
% betaによっては, posterior collapseが生じる
However, this $\beta$-tuning causes a notorious problem in which the variational posterior $q_{\phi}(\B{z}|\B{x})$ tends to align with the prior $p(\B{z})$ during learning, thereby hindering the quality of representation learning. 
This phenomenon is commonly referred to as ``posterior collapse''. 

Although several studies have theoretically analyzed the relationship between $\beta$ turning, disentanglement, and posterior collapse, the understanding remains limited.
In particular, the learning dynamics of $\beta$-VAEs have not been fully explored thus far.
% KL annealing
On the other hand, several attempts have been made to mitigate the posterior collapse \citep{yang2017improved, dieng2019avoiding, zhao2017learning, kim2018semi}. 
Among these, the simplest strategy is monotonic KL annealing, in which the weight $\beta$ is scheduled to gradually increase during training \citep{bowman2015generating}.
Although this heuristic method is recognized for its effectiveness, it cannot be guaranteed owing to its limited theoretical understanding.

% 研究の概要をざっくり書く
This study theoretically analyzes a minimal model known as a linear VAE \citep{lucas2019don}, which captures the essence of $\beta$-VAEs. Our results elucidate the formation process of disentangled features, the relationship between $\beta$ and the posterior collapse, and the effect of superfluous latent variables on the generative factors. 
In addition, we reveal the influence of KL annealing on the generalization performance.

\paragraph{Contributions}
This study develops a theory of learning dynamics for VAEs. Specifically, this study rigorously proved that the one-pass gradient descent dynamics (SGD) converges to a deterministic process characterized by ordinary differential equations (ODEs) within the limit of large input dimensions, thereby providing the asymptotically exact dynamics of the generalization error. Consequently, the relationships between the generalization error, the posterior collapse, the disentanglement, and $\beta$ are revealed in two scenarios: the ``model-matched case'' wherein the number of generative factors in the training data matches the dimension of the latent space, and the ``model-mismatched case'' wherein the latent dimension exceeds the number of the generative factors. The main contributions of this study are as follows. 
\begin{itemize}
    % ダイナミクスのODEの導出
    \item An asymptotic exact analysis of the macroscopic dynamics by the one-pass SGD is derived. The results demonstrate that the macroscopic dynamics converge to a deterministic process characterized by ODEs within the limit of large input dimensions. 
    % Posterior Collapseの内容
    \item The stability analysis of the fixed points of the limiting ODEs indicates that when $\beta$ exceeds a certain threshold, posterior collapse is inevitable regardless of the learning time.
    \item Theoretical analysis of the well-known replica method in statistical mechanics and theoretical analysis of the dynamics of SGD are shown to have a complementary relationship. 
    Specifically, a steady state of the SGD dynamics coincides exactly with the global optimum derived by the replica method, indicating the reachability to the global optimum using SGD.
    % Disentanglementの内容
    \item The numerical integration of the ODEs uncovers a phenomenon, wherein the VAE initially learns entangled representations and gradually acquires those that are disentangled. The stability of fixed points indicates that disentangled representations can be achieved for any $\beta$.
    % 余剰次元.
    \item The analysis of the model-mismatched case demonstrates that the superfluous latent variable overfits the background noise with a small $\beta$, degrading generalization.
    The stability of the fixed points reveals that while an optimal generalization is achieved for the same $\beta$ value in both the model-matched and model-mismatched cases, the convergence time for the model-mismatched case is significantly longer.
    \item 
    Appropriately tuned KL annealing accelerates the convergence of learning. 
    Additionally, the stability analysis provides a specific annealing rate beyond which the convergence decelerates.
\end{itemize}

\subsection{Preliminaries}\label{subsec:Preliminaries}
Here, we summarize the notations used in this study. The expression $\|\cdot \|_{F}$ denotes the Frobenius norm. 
$I_{N} \in \mab{R}^{N \times N}$ denotes an $N \times N$ identity matrix, whereas $\B{0}_{N}$ denotes the vector $(0, \ldots, 0)^{\top} \in \mab{R}^{N}$. $D_{\mathrm{KL}}[\cdot \| \cdot]$ denotes the Kullback–Leibler (KL) divergence. 

\section{BACKGROUND}\label{sec:dynamics-vae}
\subsection{Variational Autoencoders}
The VAE \citep{kingma2013auto} is a latent generative model. 
Let $\mathcal{D} = \{\B{x}^{\mu}\}_{\mu=1}^{P}$ with $\B{x}^{\mu} \in \mathbb{R}^{D}$ be the training data, and $p_{\mathcal{D}}(\B{x})$ indicate the empirical distribution of the training dataset. 
In practical applications, VAEs are typically trained using the $\beta$-VAE objective \citep{higgins2016beta} defined by 
\begin{multline}
\label{eq:beta-elbo}
    \mathbb{E}_{p_{\mathcal{D}}} \left[\mathbb{E}_{q_{\phi}}[-\log p_{\theta}(\B{x}|\B{z})] + \beta D_{\mathrm{KL}}[q_{\phi}(\B{z}|\B{x}) \| p(\B{z})]  \right] \\
    \delequal \mathbb{E}_{p_{\mathcal{D}}}[l(\theta, \phi;\B{x}, \beta)], 
\end{multline}
where $p(\B{z})$ is a prior for the latent variables, and the parameter $\beta \ge 0$ is introduced to control the trade-off between the first and second terms in Eq.~\eqref{eq:beta-elbo}.
Distributions $p_{\theta}(\B{x}|\B{z})$ characterized by parameters $\theta$ and $q_{\phi}(\B{z}|\B{x})$ by  $\phi$ are commonly referred to as the \textit{decoder} and \textit{encoder}, respectively. 
Subsequently, VAEs optimize both the encoder parameters $\phi$ and decoder parameters $\theta$ by minimizing the objective of Eq.~\eqref{eq:beta-elbo}. 
Note that when $\beta=0$, the objective becomes a deterministic autoencoder that focuses more on minimizing the first term, which is referred to as the \textit{reconstruction error}.

\subsection{Posterior Collapse and KL Annealing}\label{subeq:pc-klannealing}
A notorious problem in VAE optimization is that the variational posterior $q_{\phi}(\B{z}|\B{x})$ frequently aligns closely with the prior $p(\B{z})$, a phenomenon which is known as posterior collapse, hindering the quality of representation learning.
Several attempts have been made to mitigate this problem \citep{yang2017improved, dieng2019avoiding, zhao2017learning, kim2018semi}, among which a simple remedy called monotonic KL annealing has been proposed in \citep{bowman2015generating}, where $\beta=0$ is set at the beginning of the training and gradually increases until $\beta=1$ is reached. In practice, $\beta$ is defined as follows:
\begin{equation}
\label{eq:linear-kl-annealing}
    \beta^{t+1} \gets \beta^{t} + \varepsilon 
\end{equation}
where $t$ denotes each step of the parameter updates using an optimization algorithm, and $\varepsilon$ represents the annealing rate. Monotonic annealing has become a standard method for training VAEs, particularly in numerous natural language processing applications. Although this heuristic is simple and often effective, it is not theoretically guaranteed.
Additionally, cyclical KL Annealing \citep{fu2019cyclical} was utilized, which repeatedly applies monotonic KL annealing in a cyclical manner.

\section{SETTING}\label{subsec:setting}
\paragraph{Generative Model for Real Data} 
We consider that the real dataset $\mathcal{D}=\{\B{x}^{\mu}\}$ with $\mu=1, \ldots, P$, drawn according to the generative model given by the following: 
\begin{equation}
    \B{x}^{\mu} = \sqrt{\frac{\rho}{N}} W^{\ast} \bm{c}^{\mu} + \sqrt{\eta} \B{n}^{\mu}, 
\end{equation}
where $W^{\ast} \in \mathbb{R}^{N \times M^{\ast}}$ is a deterministic unknown feature matrix with $M^{\ast}$ features, $\B{c}^{\mu} \in \mathbb{R}^{M^{\ast}}$ is a random vector drawn from a standard normal distribution $\mac{N}(\B{0}_{M}, I_{M})$, $\B{n}^{\mu}$ is a background noise vector whose components are i.i.d from the standard normal distribution $\mathcal{N}(\B{0}_{N}, I_{N})$, and $\eta \in \mathbb{R}$ and $\rho \in \mab{R}$ are the scalar parameters that control the strength of the noise and signal, respectively. 
This generative model is known as the spiked covariance model \citep{johnstone2009consistency} and is used in the theoretical studies of the principal component analysis (PCA). 
Despite $W^{\ast}$ not being orthogonal, $W^{\ast} \B{c}^{\mu}$ can be rewritten as $(W^{\ast} R) (R^{-1} \B{c})$, where $R$ is a matrix that orthogonalizes and normalizes the columns of $W^{\ast}$. 
This can be considered as an equivalent system in which the new feature vector is $R^{-1} \B{c}$. Therefore, we assume, 
without the loss of generality, we assume that $(W^{\ast})^{\top} W^{\ast} = I_{M}$.

\textbf{Linear VAE Model} 
The linear VAE model \citep{dai2018connections, lucas2019don, sicks2021generalised} consists of a linear decoder and encoder given by
\begin{align}
    &p_{W}(\B{x}|\B{z}) = \mathcal{N}\left(\B{x}; \frac{1}{\sqrt{N}}W\B{z}, I_{N} \right), \\
    &q_{V, D}(\B{z}|\B{x}) = \mathcal{N}\left(\B{z}; \frac{1}{\sqrt{N}} V^{\top} \B{x}, D \right), \\
    &p(\B{z}) = \mathcal{N}(\B{z}; \B{0}_{N}, I_{N}), 
\end{align}
where the diagonal covariance matrix $D \in \mathbb{R}^{M \times M}$  indicates the learning parameters, and $W \in \mathbb{R}^{N \times M}$ and $V \in \mathbb{R}^{N \times M}$ also indicate the learning parameters.  
We assume a fixed identity covariance matrix $I_{N}$ because it is often used in practice. 

\paragraph{Training Algorithm} 
The VAE is trained to learn the generative model using the following optimization problem:
\begin{multline}
\label{eq:linear-vae-loss}
    (\bar{W}(\mathcal{D}), \bar{V}(\mathcal{D}), \bar{D}(\mathcal{D})) \\
    = \underset{W, V, D}{\mathrm{argmin}}~ \mathcal{R}(W, V, D; \mathcal{D}, \beta, \lambda),
\end{multline}
where
\begin{multline}
\label{eq:linear-vea-batch-loss}
    \mathcal{R}(W, V, D; \mathcal{D}, \beta, \lambda) \delequal 
    \sum_{\mu=1}^{P} l(W, V, D; \B{x}^{\mu}, \beta) \\
    + \frac{\lambda}{2} \|W\|_{F}^{2} + \frac{\lambda}{2} \|V\|^{2}_{F}.
\end{multline}
Here, $l(W, V, D; \B{x}, \beta)$ is defined by Eq.~(\ref{eq:beta-elbo}), and the last two terms regulate the magnitudes of the parameters $W$ and $V$, where $\lambda>0$ is a regularization parameter.
% 最適化アルゴリズムの話
We consider a standard training algorithm using the stochastic gradient descent to solve the optimization problem defined in  Eq.~\eqref{eq:linear-vae-loss}. 
To simplify the theoretical analysis, we assume a one-pass setting, where each data sample $\B{x}^{\mu}$ is used only once. 
At $t$ steps, the model parameters $W^{t}$, $V^{t}$ and $D^{t}$ are updated using a new sample $\B{x}^{t}$ according to the following:
\begin{align}
    &W^{t+1} = W^{t} - \tau_{W} \nabla_{W^{t}} r(W^{t}, V^{t}, D^{t}; \beta, \lambda,  \B{x}^{t}), \label{eq:update-W}\\
    &V^{t+1} = V^{t} - \tau_{V} \nabla_{V^{t}} r(W^{t}, V^{t}, D^{t}; \beta, \lambda, \B{x}^{t}), \label{eq:update-tW}\\
    &D^{t+1} = D^{t} - \tau_{D} \nabla_{D^{t}} r(W^{t}, V^{t}, D^{t}; \beta, \lambda, \B{x}^{t})/N, \label{eq:update-V}
\end{align}
where $r$ represents the loss for a given sample defined as follows: 
\begin{multline*}
    r(W^{t}, V^{t}, D^{t}; \beta, \lambda, \B{x}^{t}) \delequal l(W^{t}, V^{t}, D^{t}; \B{x}^{t}, \beta) \\
    + \frac{\lambda}{2N} \|W^{t}\|_{F}^{2} + \frac{\lambda}{2N} \|V^{t}\|^{2}_{F}.
\end{multline*}
Parameters $\tau_{W}$, $\tau_{V}$ and $\tau_{D}$ in the expressions above are the learning rates.  
The SGD algorithm characterizes a Markov process $X^{t} \delequal [W^{t}, \tilde{W}^{t}, V^{t}]$ with an updated rule. Hereafter, $X^{t}$ is referred to as the microscopic state. 
Note that the analysis presented in this study can be naturally extended to the mini-batch SGD where the mini-batch size remains a finite number, that is, $\mac{O}(N^{0})$.

\paragraph{Generalization Metric} 
The VAE can generate a sample $\B{x} \sim p_{W}(\B{x})$ through the following procedure. First, a latent variable $\B{z} \sim p(\B{z})$ is generated followed by a sample $\B{x} \sim p_{W}(\B{x}|\B{z})$. 
Thus, the generalization error $\varepsilon_{g}$ measures the extent of the signal recovery from the training data, defined as follows: 
\begin{equation}
    \varepsilon_{g}(W, W^{\ast}) = \frac{1}{N} \mathbb{E}_{\B{c}} \left[ \left\| \sqrt{\rho} W^{\ast} \B{c} - W \B{c} \right\|^{2} \right],
\end{equation}
where $\mathbb{E}_{\B{c}}[\cdot]$ is the average over $p(\B{c}) = \mathcal{N}(0_{M}, I_{M})$. 

\section{MACROSCOPIC DYNAMICS OF VAE}\label{sec:macro-dynamics}
From a statistical physics perspective, $\varepsilon_{g}(W, W^{\ast})$ can be expressed as a function of the following set of macroscopic variables, called \textit{order parameters}. Based on this idea, we attempt to express the dynamics of $\varepsilon_{g}(W^{t}, W^{\ast})$ by explicitly using the time evolution of the order parameters. 
\begin{Def}
For $X^{t} = [W^{t}, V^{t}, D^{t}]$, the macroscopic variables are defined as follows:
\begin{align*}
    &m^{t} = \frac{1}{N} (W^{t})^{\top} W^{\ast} ,d^{t} = \frac{1}{N} (V^{t})^{\top} W^{\ast}, \\
    &Q^{t} = \frac{1}{N} (W^{t})^{\top} W^{t},E^{t} = \frac{1}{N} (V^{t})^{\top} V^{t}, R^{t} = \frac{1}{N} (W^{t})^{\top} V^{t}.
\end{align*}
Subsequently, to compactly represent the macroscopic variables, the macroscopic state $\mac{M}^{t}$ 
 of the Markov chain in $X^{t}$ is defined as follows:
\begin{equation*}
    \mathcal{M}^{t} \delequal (m^{t}, d^{t}, Q^{t}, E^{t}, R^{t}, V^{t}, D^{t}) \in \mathbb{R}^{M \times (2M^{\ast}+5M)}.
\end{equation*}
\end{Def}
Intuitively, the overlaps $m_{ij}^{t}$ and $ d_{ij}^{t}$ measure the similarity to the $j$-th representation of the true model, i.e., the $j$-th column of $W^{\ast}$; the overlaps $Q_{ij}^{t}$, $E_{ij}^{t}$, and $ R_{ij}^{t}$ measure the similarities between the decoder weights, specifically the $i$-th and $j$-th columns of $W^{t}$, the encoder weights, i.e., the $i$-th and $j$-th columns of $V^{t}$, and between the decoder and encoder weights, i.e., the $i$-th column of $W^{t}$ and the $j$-th column of $V^{t}$, respectively. 
The off-diagonal elements of $E^{t}$ represent the independence of the encoded representations. Thus, if the off-diagonal elements of $E^{t}$ are zero, a disentangled representation is obtained; otherwise, an entangled representation is obtained.

We investigate the dynamics of the training algorithm expressed by Eq.~(\ref{eq:update-W})-(\ref{eq:update-V}) for the macroscopic variables. Our first contribution is to provide rigorous theoretical results under the following assumptions:
\begin{itemize}
    \item [(A.1)] The sequences $\B{c}^{t}$ and $\B{n}^{t}$ for $t=1, \ldots,$ are i.i.d. random variables, and $\B{c}^{t}$ is drawn from the standard normal distribution $\mac{N}(0_{M}, I_{M})$.
    \item [(A.2)] The sequence $\B{n}^{t}$ is drawn from the standard normal distribution $\mac{N}(0_{N}, I_{N})$, and $\{\B{n}^{t}\}$ is independent of $\{\B{c}^{t}\}$. 
    \item [(A.3)] The initial macroscopic state $\mathcal{M}^{0}$ satisfies $\mab{E}\|\mac{M}^{0}-\bar{\mac{M}}^{0} \|_{F} \le C/\sqrt{N}$, where $\bar{\mac{M}}^{0}$ is a deterministic matrix and $C$ is a constant independent of $N$.
    \item [(A.4)] For $i=1, 2, \ldots, N$, the initial microscopic state $X^{0}=[W^{0}, V^{0}, D^{0}]$ satisfies $\mab{E}[\sum_{m=1}^{M}\{(W_{im}^{0})^{4}+(V_{im}^{0})^{4}+(D_{m}^{0})^{4}\} + \sum_{m=1}^{M^{\ast}} (W_{im}^{\ast})^{4}] \le C$, where $C$ is a constant independent of $N$ and $D^{0} \neq \B{0}_{M \times M}$.
\end{itemize}
Assumptions (A.1) and (A.2) for $\B{c}^{t}$ and $\B{n}^{t}$ can be relaxed to non-Gaussian cases if all moments $\B{n}_{t}$ are bounded; however, we use the Gaussian assumption to simplify the proof. 
Assumption (A.3) ensures that the initial macroscopic states converge to deterministic values as the input dimension $N$ approaches infinity.
Assumption (A.4) requires that the elements in the feature matrix $W^{\ast}$ and initial microscopic state $X^{0}$ are $\mac{O}(1)$.
The following theorem proves that the stochastic process of the macroscopic states converges to a deterministic process in the $N \to \infty$ limit characterized by ODEs. 
\begin{Thm}\label{thm:convergence-SGD}
For all $T >0$, it holds under assumptions (A.1)-(A.4) that
\begin{equation}
    \max_{0 \le \mu \le NT} \mab{E} \|\mac{M}^{t} - \mac{M}(t/N) \|_{F} \le \frac{C(T)}{\sqrt{N}},
\end{equation}
where $C(T)$ is a constant that depends on $T$ but not on $N$, and $\mac{M}(t)$ is a unique solution of the ODE
\begin{equation}
\label{eq:ode-macro}
    \frac{\mathrm{d} \mac{M}(t)}{\mathrm{d} t} = F(\mac{M}(t)),
\end{equation}
with the initial condition $\mac{M}(0)=\bar{\mac{M}}^{0}$ and $F: \mab{R}^{M\times (2M^{\ast} + 4M)}$ is uniformly Lipschitz continuous in $\mac{M}(t)$. A specific expression is not demonstrated owing to its length; however, the entire function is provided in Supplementary Materials~\ref{sec:complete-form-F}.
\end{Thm}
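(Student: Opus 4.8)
This is a high-dimensional instance of the ODE method for stochastic approximation, in the spirit of rigorous analyses of one-pass SGD in teacher--student models. The one-pass assumption is essential: it makes each fresh sample $\B{x}^{t}$ independent of the $\sigma$-algebra $\mac{F}^{t}$ generated by $X^{0}$ and $(\B{c}^{s},\B{n}^{s})_{s<t}$. The plan is to proceed in three stages. \emph{(i) Closed one-step recursion.} Substitute the updates \eqref{eq:update-W}--\eqref{eq:update-V} into the definitions of $m^{t},d^{t},Q^{t},E^{t},R^{t},D^{t}$ and use that the linear VAE has Gaussian encoder/decoder, so the per-sample loss $l$ is a polynomial in $(W,V)$ with a $-\log\det D$ contribution from the KL divergence, and its $W,V$-gradients are affine in $\B{x}^{t}(\B{x}^{t})^{\top}$ and in the current weights. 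Collecting powers of $1/N$ yields a recursion
\[
\mac{M}^{t+1}=\mac{M}^{t}+\tfrac1N\,g\big(\mac{M}^{t};\ \ell^{t}\big)+\B{r}^{t},
\]
where $\ell^{t}$ collects the (suitably normalized) ``local fields'' $(W^{t})^{\top}\B{x}^{t},\,(V^{t})^{\top}\B{x}^{t},\,(W^{\ast})^{\top}\B{x}^{t},\,\|\B{x}^{t}\|^{2}$, $g$ is a fixed polynomial in its arguments except for a $D^{-1}$ term in the $D$-block, and $\B{r}^{t}$ is a remainder of smaller order. Conditionally on $\mac{F}^{t}$, the fields in $\ell^{t}$ are jointly Gaussian with mean zero and covariance expressed entirely through $(m^{t},d^{t},Q^{t},E^{t},R^{t})$ and the constants $\rho,\eta$ --- this is the crucial \emph{closure}, resting on $\mab{E}[\B{x}^{t}(\B{x}^{t})^{\top}]=\tfrac{\rho}{N}W^{\ast}(W^{\ast})^{\top}+\eta I_{N}$ and $(W^{\ast})^{\top}W^{\ast}=I$ --- while $\|\B{x}^{t}\|^{2}/N$ concentrates at $\eta$. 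A Gaussian (Wick) computation then gives $\mab{E}[\mac{M}^{t+1}-\mac{M}^{t}\mid\mac{F}^{t}]=\tfrac1N F(\mac{M}^{t})+o(1/N)$ with a \emph{deterministic} $F$ depending on $\mac{M}^{t}$ alone --- the vector field of \eqref{eq:ode-macro}, in which the $\tau_{W}^{2},\tau_{V}^{2}$ pieces of the squared update survive as the familiar deterministic SGD-noise term. Being polynomial in $(m,d,Q,E,R)$ and smooth in $D$ on $\{D\succ0\}$, $F$ is locally Lipschitz.

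\emph{(ii) A priori confinement --- the main obstacle.} Since $F$ is only locally Lipschitz, both the ODE and the chain must be kept in a fixed bounded region for $0\le t\le NT$. For the ODE this follows from local existence (Picard--Lindel\"of) plus the damping terms $-2\lambda Q(t),\,-2\lambda E(t)$ from the ridge penalty and the repulsion $\propto D(t)^{-1}$ away from $\partial\{D\succ0\}$ from the KL term, which together rule out blow-up and boundary escape on $[0,T]$; hence $\mac{M}(\cdot)$ exists, is unique, and stays in some compact $K(T)$. For the chain, I would propagate the fourth-moment bound (A.4) by induction on $t$, establishing
\[
\sup_{0\le t\le NT}\ \max_{1\le i\le N}\ \mab{E}\Big[\textstyle\sum_{m}\{(W^{t}_{im})^{4}+(V^{t}_{im})^{4}\}+\sum_{m}\{(D^{t}_{m})^{4}+(D^{t}_{m})^{-4}\}\Big]\le C(T),
\]
which yields uniform-in-$t$ $L^{2}$ control of every entry of $\mac{M}^{t}$ (so it is confined, in $L^{2}$, near $K(T)$) and also controls $\mab{E}[\|\Xi^{t}\|_{F}^{2}\mid\mac{F}^{t}]$. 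On the resulting bounded set $F$ is globally Lipschitz with a constant $L(T)$ uniform in $t$. This step is the delicate one: a single SGD step only costs a harmless factor $1+\mac{O}(1/N)$ in the moments, but over $\mac{O}(N)$ steps one must verify that the quadratic $\tau_{W}^{2},\tau_{V}^{2}$ self-interaction terms and the $D^{-1}$ terms create no uncontrolled positive feedback, so the induction closes with $C(T)$ depending only on $T$ (and on $\beta,\lambda$, the learning rates, and the initialization constant).

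\emph{(iii) Martingale decomposition and discrete Gr\"onwall.} Write $\mac{M}^{t+1}-\mac{M}^{t}=\tfrac1N F(\mac{M}^{t})+\tfrac1N\Xi^{t}+\B{r}^{t}$ with $\Xi^{t}\delequal g(\mac{M}^{t};\ell^{t})-\mab{E}[g(\mac{M}^{t};\ell^{t})\mid\mac{F}^{t}]$ a martingale difference, $\mab{E}[\|\Xi^{t}\|_{F}^{2}\mid\mac{F}^{t}]\le C(T)$ by (ii), and $\B{r}^{t}$ negligible after summation. Taylor-expanding the ODE solution gives $\mac{M}((t{+}1)/N)-\mac{M}(t/N)=\tfrac1N F(\mac{M}(t/N))+\mac{O}(1/N^{2})$ on $K(T)$. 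Subtracting, setting $e^{t}\delequal\mac{M}^{t}-\mac{M}(t/N)$, squaring the Frobenius norm, using $\mab{E}[\Xi^{t}\mid\mac{F}^{t}]=0$ to annihilate the cross term against the $\mac{F}^{t}$-measurable part, and the Lipschitz bound on $F$, one obtains
\[
\mab{E}\|e^{t+1}\|_{F}^{2}\le\big(1+\tfrac{C(T)}{N}\big)\,\mab{E}\|e^{t}\|_{F}^{2}+\tfrac{C(T)}{N^{2}}.
\]
Iterating over $t\le NT$ and using (A.3)--(A.4) (which give $\mab{E}\|e^{0}\|_{F}^{2}\le C/N$) yields $\max_{0\le t\le NT}\mab{E}\|e^{t}\|_{F}^{2}\le C(T)/N$, and Jensen's inequality then gives the claimed $\max_{0\le t\le NT}\mab{E}\|\mac{M}^{t}-\mac{M}(t/N)\|_{F}\le C(T)/\sqrt N$; uniqueness of $\mac{M}(\cdot)$ was already obtained in (ii).
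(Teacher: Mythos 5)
Your proposal is correct in its essential ingredients and these coincide with the paper's: (i) the Gaussian closure of the local fields giving the drift $F(\mac{M}^{t})/N$ (Lemma~\ref{lem:first-incre-macro}), (ii) propagation of the fourth-moment bound (A.4) along the trajectory to confine the chain, keep $D_{m}^{t}$ away from zero, and make $F$ effectively Lipschitz (Lemmas~\ref{lem:micro-bound}, \ref{lem:macro-bounded}, \ref{lem:positive-negative-D}, \ref{lem:lipschiz}), and (iii) a second-moment bound on the fluctuation part (Lemma~\ref{lem:second-incre-macro}) followed by a discrete Gr\"onwall iteration. Where you genuinely differ is the final assembly: you run a single $L^{2}$ Gr\"onwall directly on $e^{t}=\mac{M}^{t}-\mac{M}(t/N)$, whereas the paper uses a coupling trick with two intermediate processes --- a coupled chain $\mac{B}^{t}$ driven by the ODE drift but sharing the \emph{same} noise increments $\mac{M}^{t+1}-\mab{E}_{t}\mac{M}^{t+1}$ as $\mac{M}^{t}$, and a deterministic Euler iterate $\mac{S}^{t}$ --- so that $\mac{M}^{t}$ vs.\ $\mac{B}^{t}$ is compared in $L^{1}$ (the shared noise cancels exactly and only the $\mab{E}\|\mab{E}_{t}\mac{M}^{t+1}-\mac{M}^{t}-F(\mac{M}^{t})/N\|_{F}\le C N^{-3/2}$ bound is needed), $\mac{B}^{t}$ vs.\ $\mac{S}^{t}$ in $L^{2}$ (martingale orthogonality), and $\mac{S}^{t}$ vs.\ $\mac{M}(t)$ by the standard Euler estimate. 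The price of your one-shot route is that the cross term between $e^{t}$ and the conditional drift remainder does not vanish, so you need the remainder in $L^{2}$, i.e.\ $\mab{E}\|\mab{E}_{t}\mac{M}^{t+1}-\mac{M}^{t}-F(\mac{M}^{t})/N\|_{F}^{2}\le C(T)N^{-3}$, which is strictly stronger than the paper's Lemma~\ref{lem:first-incre-macro}; it is obtainable from the same moment machinery (the remainders are explicit polynomials in the local fields divided by $N^{2}$), but you should state and prove it rather than treat $\B{r}^{t}$ as ``negligible after summation''. Two minor points: the confinement should not lean on the ridge damping $-2\lambda Q,-2\lambda E$, since the theorem (and all subsequent analysis) must cover $\lambda=0$ --- the paper gets confinement purely from the fourth-moment induction, exactly as in your displayed bound including $(D_{m}^{t})^{-4}$; and your count of what buys what is otherwise accurate: the coupling decomposition trades your stronger remainder lemma for one extra auxiliary process, while your route gives the slightly cleaner statement $\max_{t\le NT}\mab{E}\|e^{t}\|_{F}^{2}\le C(T)/N$ from which the claim follows by Jensen.
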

The convergence theory of stochastic processes and a coupling trick \citep{wang2018subspace} can prove the theorem. 
To prove this, decompose $\mac{M}^{t}$ into the following: 
\begin{equation*}
    \mac{M}^{t+1}- \mac{M}^{t} = \mab{E}_{t} \mac{M}^{t+1} - \mac{M}^{t} + \left(\mac{M}^{t+1} - \mab{E}_{t} \mac{M}^{t+1} \right) 
\end{equation*}
where $\mab{E}_{t}$ denotes the conditional expectation given the state of the Markov chain $X^{t}$. Thus, it is sufficient to show that the following two conditions hold for all $t \le NT$: 
\begin{align*}
    &\mab{E} \|\mab{E}_{t} \mac{M}^{t+1} - \mac{M}^{t} - F(\mac{M}^{t})/N \|_{F} \le C(T) N^{-2/3} \\
    &\mab{E} \|\mac{M}^{t+1}-\mab{E}_{t} \mac{M}^{t+1}\|_{F}^{2} \le C(T) N^{-2}
\end{align*}
The first condition ensures that the leading order of the average increment is captured by the ODEs in the Theorem \ref{thm:convergence-SGD}. The second condition guarantees that the stochastic part can be ignored in the large $N$ limit. Further details regarding the derivation of these two conditions and the proof of the Theorem \ref{thm:convergence-SGD} can be found in Supplementary Materials~\ref{sec:proof-theorem}.

This theorem indicates that the macroscopic stochastic process $\mac{M}^{t}$ converges to the deterministic process $\mac{M}(t)$ at a convergence rate of $\mac{O}(1/\sqrt{N})$. 
Furthermore, the generalization error $\varepsilon_{g}$ can be expressed as a function of the macroscopic state, which allows us to investigate the dynamics from the ODEs in Eq.~\eqref{eq:ode-macro}. In the following section, we present the results obtained by using Eq.~\eqref{eq:ode-macro}.

\section{RESULTS}\label{sec:results}
We investigate the learning dynamics of VAE with a high-dimensional data limit using Eq.~\eqref{eq:ode-macro}.
Specifically, we focus on the following representative cases: (i) the model-matched setting ($M=M^{\ast}=1$) where the number of generative factors in the generative model, i.e., the number of columns in $W^{\ast}$, is equal to the latent space dimension; and (ii) the model-mismatched setting ($M=2$ and $M^{\ast}=1$), where the latent space dimension is larger than the number of the generative factors. In addition, numerical experiments are conducted to verify the consistency of our theory and to compare the results obtained by training the VAE.

\subsection{Dynamics of Generalization Error}

\begin{figure*}[tb]
    \centering
    \includegraphics[width=\textwidth]{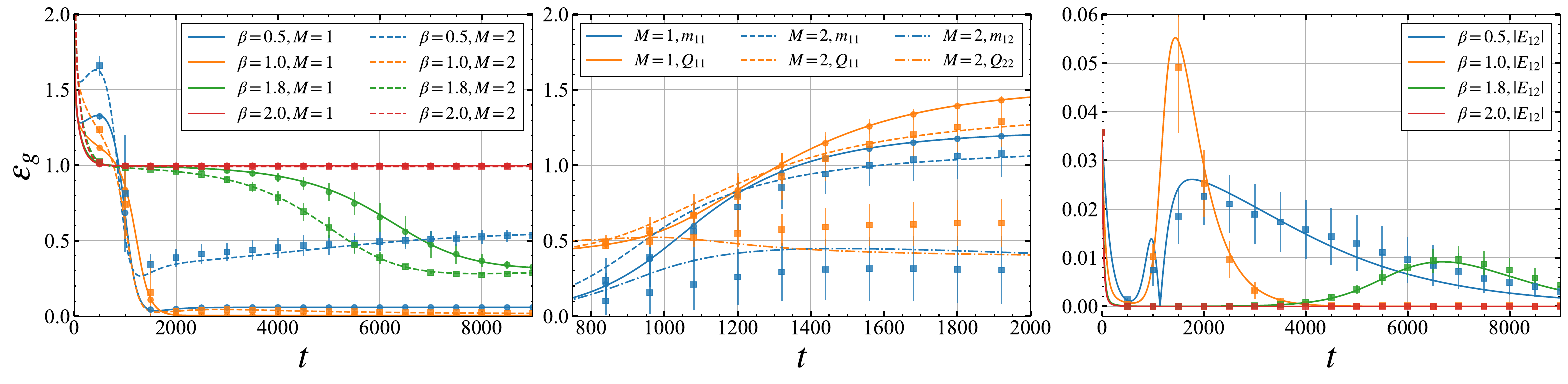}
    \caption{(Left) Generalization error, (middle) order parameters $m$ and $Q$, and (right) order parameter $E_{12}$ as a function of time $t$ for varying $\beta$ values with fixed parameters $\lambda=0, \tau_{W}=\tau_{V}=\tau_{D}=0.01$, and $\rho=\eta=1$ for both model-matched and model-mismatched cases. Each point on the plots represents the averages of five different numerical simulations with $N=500$, and the error bars represent the standard deviations of the results.}
    \label{fig:eg-dynamics}
\end{figure*}

The $\beta$ dependence of learning dynamics is discussed by observing the time evolution of the generalization. 
The results are summarized as follows: 

\paragraph{Peak and Long Plateau in $\varepsilon_{g}$}
Fig.~\ref{fig:eg-dynamics} demonstrates the time dependence of the generalization error $\varepsilon_{g}$ for various $\beta$ values along with the numerical experimental results with finite data dimension.
For a smaller $\beta$, the generalization error $\varepsilon_{g}$ peaks in the early stages of learning, which tends to smoothly disappear as $\beta$ increases.
Furthermore, for a larger $\beta$, a long plateau appears in the range of $t$, and the length of this plateau increases as $\beta$ increases. 
When the value of $\beta$ exceeds $2$, the decrease in the generalization error $\varepsilon_{g}$ appears to completely disappear. We will discuss whether this decrease exists in the infinite time in the following section, based on the fixed points of the ODEs.

\paragraph{Overfitting with a Small $\beta$.}
As shown in Fig.~\ref{fig:eg-dynamics},  the generalization error $\varepsilon_{g}$ decreases followed by an increase near $t \approx 1200$ for a small $\beta$, where the difference between order parameters $m_{11}(t)$ and $Q_{11}(t)$ is minimal.
After passing this point, $M(t)$ saturates to a certain value, and $Q(t)$ continues to increase. 
This behavior indicates that while the recovery of the feature vector becomes saturated, the VAE starts to overfit the background noise. 
This suggests that the early stopping method, which stops the SGD update when the generalization error begins to increase, is effective for small $\beta$.

\paragraph{Formation Process of Disentanglement}
As discussed in Sec.~\ref{sec:macro-dynamics}, the off-diagonal terms of the order parameter $E$ can be used to measure the disentanglement of the obtained representation.  
When these off-diagonal terms $E_{ij}$ are zero, the corresponding representations $z_{i}, z_{j} \sim q_{V, D}(\B{z}|\B{x})$ are disentangled. Conversely, when $E_{ij}\neq 0$, the corresponding representations are entangled.
The right panel of Fig.~\ref{fig:eg-dynamics} shows the time dependence of the off-diagonal term, meaning the formation process of a disentangled representation. 
The representation is entangled, i.e., $E_{12} \neq 0$, in the early stages of learning, and a peak then appears at some time $t$. Subsequently, the representations gradually become disentangled as time progresses; that is, $E_{12} = 0$. The stability of the fixed points determines whether the disentanglement representations are obtained for any $\beta$ in the limit $t \to \infty$. 

\subsection{Steady State of Generalization Error}

\begin{figure}[tb]
    \centering
    \includegraphics[width=\columnwidth]{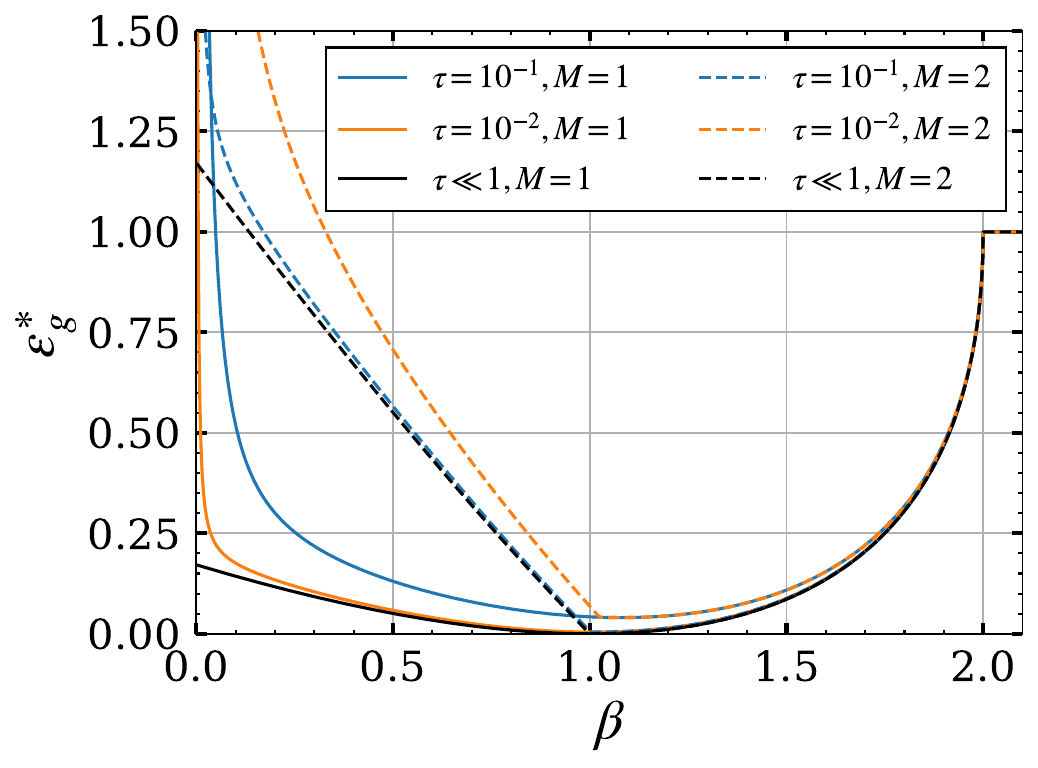}
    \caption{Asymptotic generalization error as a function of time $t$ for the varying learning rate $\beta$ with fixed parameters $\lambda=0$, $\rho=\eta=1$ for both model-matched (solid line) and model-mismatched cases (dashed line).}
    \label{fig:eg-steady}
\end{figure}

Considering the analysis of the dynamics in the previous section, it remains unclear whether it is possible to escape from the plateau and reduce the generalization error $\varepsilon_{g}$ for any given $\beta$, or to obtain disentangled features in the long-time limit. 
In this section, we discuss these issues using a local stability analysis of the ODEs in Eq.~\eqref{eq:ode-macro}.
To further reduce the degrees of freedom of the ODEs, we assume that the regularization parameter $\lambda=0$ and a common learning rate $\tau = \tau_{W}=\eta_{V}=\tau_{D}$. 
In the subsequent analysis, if the Jacobian matrix of the ODEs has only negative eigenvalues, the fixed point is called locally stable, and if the Jacobian matrix has both zero eigenvalues and negative eigenvalues, the fixed point is called marginally stable.

\paragraph{Stability of Model-Matched Case}
We investigate the local stability of the fixed points of the ODEs in the model-matched case to clarify the conditions under which the VAE encounters a posterior collapse.
\begin{Thm}\label{theorem:steady-match}
    For a small learning rate $\tau$ limit and $\lambda=0$, the fixed points of ODEs in the model-matched case with $M=M^{\ast}=1$ have the following properties.
    \begin{itemize}
        \item For $\beta < \rho + \eta$, the following fixed point is locally stable:
        \begin{align}
            &m^{\ast} = \sqrt{\rho+\eta - \beta}, \\ 
            &\varepsilon_{g}^{\ast} = \rho - \sqrt{\eta + \rho - \beta} (2 \sqrt{\rho} - \sqrt{\eta + \rho - \beta}), \label{eq:1rank-asymp-error}
        \end{align}
        \item For $\beta = \rho + \eta$, the fixed point, $m^{\ast} = 0,~\varepsilon_{g}^{\ast} = \rho$, is marginally stable.
        \item For $\beta > \rho + \eta$, the fixed point,$m^{\ast} = 0,~\varepsilon_{g}^{\ast} = \rho$, is locally stable.
    \end{itemize}
\end{Thm}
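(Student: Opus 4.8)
The plan is to analyse the limiting ODE of Theorem~\ref{thm:convergence-SGD} specialised to $M=M^{\ast}=1$, $\lambda=0$, and the common rate $\tau=\tau_W=\tau_V=\tau_D$. Since the $\mathcal{O}(\tau^2)$ part of $F$ carries only the residual SGD fluctuations, the $\tau\to0$ limit (with time rescaled by $\tau$) collapses the macroscopic dynamics onto an autonomous six-dimensional system $\dot{\mathcal{M}}=\bar F(\mathcal{M})$ in the scalars $(m,d,Q,E,R,D)$; this is the projection onto order parameters of the gradient flow of the population risk $\bar{\mathcal{R}}(W,V,D)\delequal\mathbb{E}_{\B{x}}[l(W,V,D;\B{x},\beta)]$. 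Because every expectation entering $l$ is a quadratic form in $W,V$ contracted with $\mathbb{E}[\B{x}\B{x}^{\top}]=\tfrac{\rho}{N}W^{\ast}(W^{\ast})^{\top}+\eta I_N$, both $\bar{\mathcal{R}}$ and $\varepsilon_g=\rho-2\sqrt\rho\,m+Q$ are closed functions of $\mathcal{M}$; in particular $\bar{\mathcal{R}}$ is a strict Lyapunov function for $\bar F$. I would read $\bar F$ off the expression in Supplementary Materials~\ref{sec:complete-form-F}; the structural facts I use are that $(m,d)$ enters the $(Q,E,R,D)$-equations only quadratically and that $\partial_D\bar{\mathcal{R}}=0$ forces $D=\beta/(Q+\beta)$ at every fixed point.

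\paragraph{Step 1: the fixed points.}
Setting $\bar F=0$, the $D$-, $Q$-, and $E$-equations give $D^{\ast}=\beta/(Q^{\ast}+\beta)$ and $\mathbb{E}[\mu_z^2]=Q^{\ast}/(Q^{\ast}+\beta)$, hence $\mathbb{E}[\mu_z^2]+D^{\ast}=1$; the $m$- and $d$-equations then read $(\rho+\eta)d^{\ast}=m^{\ast}$ and $m^{\ast}=(Q^{\ast}+\beta)d^{\ast}$. Either $m^{\ast}=d^{\ast}=0$: the $R$-equation reduces to $Q^{\ast}\big(\eta-Q^{\ast}-\beta\big)\big(1+\eta(Q^{\ast}+\beta)\big)=0$, whose relevant root $Q^{\ast}=0$ (hence $E^{\ast}=R^{\ast}=0,\ D^{\ast}=1$) is exact posterior collapse with $\varepsilon_g^{\ast}=\rho$ (the root $Q^{\ast}=\eta-\beta$ will turn out to be a saddle and is discarded). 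Or $m^{\ast}\neq0$, which forces $Q^{\ast}+\beta=\rho+\eta$; substituting into the $R$-equation after eliminating $E^{\ast},R^{\ast}$ leaves $\big((m^{\ast})^2-Q^{\ast}\big)\big(1+\eta(\rho+\eta)\big)=0$, so $(m^{\ast})^2=Q^{\ast}=\rho+\eta-\beta$. The second branch is real and nontrivial exactly when $\beta<\rho+\eta$, giving $m^{\ast}=\sqrt{\rho+\eta-\beta}$ and $\varepsilon_g^{\ast}=\rho-2\sqrt\rho\,m^{\ast}+(m^{\ast})^2$, i.e.\ \eqref{eq:1rank-asymp-error}; at $\beta=\rho+\eta$ the two branches merge at $m^{\ast}=0,\varepsilon_g^{\ast}=\rho$, a transcritical bifurcation.

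\paragraph{Step 2: local stability.}
Write $J=\tau J_0$ for the Jacobian of $\bar F$ at a fixed point, so $\tau$ only rescales the spectrum and the stability type is $\tau$-independent. At the collapsed point $(0,0,0,0,0,1)$ the quadratic coupling between $(m,d)$ and $(Q,E,R,D)$ makes $J_0$ block diagonal: a $2\times2$ ``signal'' block on $(\delta m,\delta d)$ equal to $\begin{pmatrix}-1 & \rho+\eta\\ \rho+\eta & -\beta(\rho+\eta)\end{pmatrix}$, and a $4\times4$ ``noise'' block on $(\delta Q,\delta E,\delta R,\delta D)$ which a short Routh--Hurwitz check shows is Hurwitz for $\beta>\eta$ (hence throughout $\beta\ge\rho+\eta$). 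The signal block has negative trace and determinant $(\rho+\eta)\big(\beta-(\rho+\eta)\big)$: for $\beta>\rho+\eta$ both its eigenvalues are negative, so the collapsed point is locally stable; at $\beta=\rho+\eta$ the determinant vanishes, producing a single zero eigenvalue with all other eigenvalues of $J_0$ negative --- marginal stability; for $\beta<\rho+\eta$ the determinant is negative, so the collapsed point is a saddle. For the informative fixed point ($\beta<\rho+\eta$) the block structure is gone; stability would follow either from a Routh--Hurwitz analysis of the characteristic polynomial of the $6\times6$ $J_0$, or, more directly, from the gradient-flow structure --- $\bar{\mathcal{R}}$ is non-increasing along trajectories and the informative point is its isolated (indeed, by the replica analysis, global) minimiser on the admissible region for $\beta<\rho+\eta$, hence locally asymptotically stable. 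Assembling the three $\beta$-regimes for the collapsed point yields the three cases of the statement.

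\paragraph{Main obstacle.}
The crux is the spectral bookkeeping: verifying that the $4\times4$ noise block stays Hurwitz on $\beta\ge\rho+\eta$, that the signal determinant crosses zero through a \emph{simple} root exactly at $\beta=\rho+\eta$ (so the transcritical bifurcation is non-degenerate and precisely one eigenvalue changes sign), and --- for the informative branch --- that no eigenvalue of the full $6\times6$ $J_0$ has nonnegative real part for any $\beta<\rho+\eta$. The Lyapunov/gradient viewpoint makes the last point essentially automatic once one checks that the induced metric $G(\mathcal{M})=(\nabla_{\Theta}\mathcal{M})(\nabla_{\Theta}\mathcal{M})^{\top}$ is nondegenerate at the informative fixed point; a subordinate point is justifying the $\tau\to0$ reduction and confirming that the $1/N$ weighting of the $D$-update in \eqref{eq:update-V} is what keeps $D$ an order-one variable obeying the stated equation.
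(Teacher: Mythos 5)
Your Step~1 and the collapsed-point part of Step~2 are sound and essentially reproduce the paper's argument: the paper likewise drops the $\mathcal{O}(\tau^{2})$ terms, solves the six fixed-point equations (its Eq.~\eqref{eq:model-matched-fixpoint-eq}), and computes the Jacobian spectrum at the collapsed point; your $2\times2$ signal block has exactly the paper's eigenvalues $\lambda_{5},\lambda_{6}$ (zero crossing at $\beta=\rho+\eta$ through a simple root), and your $4\times4$ noise block carries the paper's $\lambda_{1}$--$\lambda_{4}$, which are indeed Hurwitz for $\beta>\eta$. Your elimination leading to $(m^{\ast})^{2}=Q^{\ast}=\rho+\eta-\beta$ via $\rho d^{2}+\eta E+D=1$ is also correct and matches the paper's Type~(2) fixed point. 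So the second and third bullets are handled.

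The genuine gap is the first bullet: local stability of the learnable fixed point for $\beta<\rho+\eta$, which is the substantive computation of the theorem, is not actually established. The paper proves it by exhibiting three explicit negative eigenvalues, $-(1+\eta P)$ and $-(1+\eta P\pm\sqrt{(1+\eta P)^{2}-4\eta\rho})$ with $P=\rho+\eta$, and reducing the remaining three to a cubic in $\lambda/\tau$ whose constant term $8P^{8}(P-\beta)\beta^{3}$ governs the sign change at $\beta=P$; you defer this to either an unexecuted Routh--Hurwitz check on the full $6\times6$ Jacobian or to a gradient-flow/Lyapunov argument whose premises are asserted rather than verified. For the latter route you would need to (i) show the $\tau\to0$ ODE is exactly $\dot{\mathcal{M}}=-G(\mathcal{M})\nabla_{\mathcal{M}}\bar{\mathcal{R}}$ including the $D$ coordinate with its $1/N$-scaled update, (ii) verify $G(\mathcal{M}^{\ast})\succ0$ and $\nabla^{2}_{\mathcal{M}}\bar{\mathcal{R}}(\mathcal{M}^{\ast})\succ0$ in the order-parameter coordinates (neither is checked), and (iii) not lean on the replica result for minimality: that is an external, non-rigorous statement about the global optimum of the batch risk, and the paper treats the agreement with it as a consequence (Remark 5.2), not an ingredient, so invoking it here is close to circular. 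Note also that the paper defines ``locally stable'' spectrally (all Jacobian eigenvalues negative), so a monotone-Lyapunov argument alone does not yield the stated conclusion unless you supply the positive-definiteness facts that make $-G\nabla^{2}\bar{\mathcal{R}}$ Hurwitz; as written, the $\beta<\rho+\eta$ case remains unproven.
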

Theorem~\ref{theorem:steady-match} elucidates that once $\beta$ exceeds the threshold $\beta^{\ast} = \rho + \eta$, the generalization error can not escape from the plateau, despite $t$ increasing, which indicates that the posterior collapse cannot be avoided.

Furthermore, the limiting value of the generalization error $\varepsilon_{g}^{\ast}$ coincides with that obtained from the analysis of the global optimum of Eq.~\eqref{eq:linear-vea-batch-loss} \citep{ichikawa2022statistical}; namely, following Remark holds.
\begin{Rem}
The limiting value of the generalization error in Eq.~\eqref{eq:1rank-asymp-error} exactly equals the generalization error derived in the infinite data size limit by the analysis of the global optimum using the replica method \citep{ichikawa2023dataset}.
\end{Rem}
This result implies that it is possible to reach a global optimum solution using SGD with a small learning rate limit.
To our best knowledge, the exact correspondence between the global optima obtained using the replica method and the steady state of the one-pass SGD and the reachability to the global optima has not yet been explored in the statistical physics community.
% this relationship between the global optimal results derived from the replica method and the stability analysis of the learning dynamics of one-pass SGD has not yet been explored in the statistical-physics community.

\paragraph{Stability of Model Mismatched-Case}
We also clarify the condition under which the VAE encounters a posterior collapse in the model-mismatched case and obtains disentangled representations.
\begin{Thm}\label{theorem:steady-mismatch}
    For a small learning rate $\tau$ limit and $\lambda=0$, the fixed points of ODEs in the model mismatched case with $M=2$ and $M^\ast=1$ have the following properties. 
    \begin{itemize}
        \item For $\beta<\eta$, the following fixed point is locally stable: 
        \begin{align*}
            &m^{\ast} = (\sqrt{\rho + \eta - \beta}, 0),~(0, \sqrt{\rho + \eta - \beta}),~E^{\ast}_{12}=0 \\
            &Q^{\ast} = 
            \begin{pmatrix}
                \rho + \eta - \beta & 0 \\
                0 & \eta -\beta
            \end{pmatrix},~ 
            \begin{pmatrix}
                \eta -\beta & 0 \\
                0 & \rho + \eta - \beta
            \end{pmatrix} \\
            &\varepsilon_{g}^{\ast} = \rho - \sqrt{\eta + \rho - \beta} (2 \sqrt{\rho} - \sqrt{\eta + \rho - \beta}) + \eta -\beta, 
        \end{align*} 
        \item For $\beta=\eta$, the fixed point is marginally stable:
        \begin{align*}
            &m^{\ast} = (\sqrt{\rho}, 0),~(0, \sqrt{\rho}),~E^{\ast}_{12}=0 \\
            &Q^{\ast} = 
            \begin{pmatrix}
                \eta & 0 \\
                0 & 0
            \end{pmatrix},~ 
            \begin{pmatrix}
                0 & 0 \\
                0 & \eta
            \end{pmatrix}, ~~\varepsilon_{g}^{\ast} = 0.
        \end{align*}
        \item For $\eta < \beta < \rho + \eta$, the fixed point is locally stable:
        \begin{align*}
            &m^{\ast} = (\sqrt{\rho + \eta - \beta}, 0),~(0, \sqrt{\rho + \eta - \beta}),~E^{\ast}_{12}=0 \\
            &Q^{\ast} = 
            \begin{pmatrix}
                \rho + \eta - \beta & 0 \\
                0 & 0
            \end{pmatrix},~ 
            \begin{pmatrix}
                0 & 0 \\
                0 & \rho + \eta - \beta
            \end{pmatrix} \\
            &\varepsilon_{g}^{\ast} = \rho - \sqrt{\eta + \rho - \beta} (2 \sqrt{\rho} - \sqrt{\eta + \rho - \beta}).
        \end{align*}
        \item For $\beta=\rho+\eta$, the fixed point, $m^{\ast}=Q^{\ast}=\B{0}_{2\times 2},~ E_{12}^{\ast}=0, \varepsilon_{g}^{\ast}=\rho$, is marginally stable.
        \item For $\beta > \rho + \eta$, the fixed point, $m^{\ast}=Q^{\ast}=\B{0}_{2\times 2}, E_{12}^{\ast}=0, \varepsilon_{g}^{\ast}=\rho$, is locally stable.
    \end{itemize}
 \end{Thm}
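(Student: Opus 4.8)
The plan is to (i) reduce the limiting ODE of Theorem~\ref{thm:convergence-SGD} to a low-dimensional gradient flow, (ii) enumerate its critical points using the residual latent-space symmetry, and (iii) read off stability from the inertia of the resulting Hessian, reusing Theorem~\ref{theorem:steady-match} for the ``signal'' directions and analysing the ``noise'' direction separately.

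\textbf{Reduction to a gradient flow.} In the small-$\tau$ limit (after the standard time rescaling) the $\mac{O}(\tau^2)$ SGD-fluctuation part of $F$ in Eq.~\eqref{eq:ode-macro} drops out, so $\mac{M}(t)$ solves the gradient flow of the population objective $\Phi(\mac{M})\delequal\mab{E}_{\B{x}}[l(W,V,D;\B{x},\beta)]$ (with $\lambda=0$); by Gaussianity of the decoder, encoder and prior this is a closed-form function of the overlaps---quadratic forms in $(m,d,Q,E,R)$ with coefficients built from $\rho,\eta$, plus a $-\tfrac{\beta}{2}\log\det D$ rate term---and the proof of Theorem~\ref{thm:convergence-SGD} shows the overlaps form a closed finite ODE system. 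Since $\Phi$ is jointly convex in $(V,D)$ and the optimal encoder lies in $\mathrm{span}(W,W^\ast)$, a Schur-complement argument lets us eliminate $(V,D)$ and work with the effective potential $\tilde\Phi\delequal\min_{V,D}\Phi$, a function of the $W$-side overlaps alone---five scalars $m_{11},m_{21},Q_{11},Q_{12},Q_{22}$ for $M=2,\,M^\ast=1$. The dynamics being a metric-rescaled gradient flow, local stability of a fixed point is---by Sylvester's law of inertia on the non-degenerate directions, and by passing to the decoder columns $W_{\cdot i}$ directly on the collapsed ones---equivalent to $\nabla^2\tilde\Phi\succ0$, and marginal stability to $\nabla^2\tilde\Phi\succeq0$ with a nontrivial kernel, matching the theorem's terminology.

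\textbf{Critical points.} The minimising $(V,D)$ is the optimal linear-Gaussian posterior map for the current $W$, available in closed form; substituting it, $\nabla\tilde\Phi=0$ together with the invariance of $\Phi$ under signed permutations of the two latent coordinates (the subgroup of $O(2)$ preserving a diagonal $D$) forces each latent coordinate into one of three states: aligned with the unique column of $W^\ast$ with $m_i^2=Q_{ii}=\rho+\eta-\beta$; on a ``noise-only'' branch $m_i=0,\ Q_{ii}=\eta-\beta$, where the encoder overfits a random noise direction; or collapsed, $m_i=Q_{ii}=0$ with posterior equal to prior---and $Q_{12}=0$, hence $E_{12}=0$, on every branch. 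Because $M^\ast=1$ admits only one aligned coordinate, matching the admissible branches against the signs of $\rho+\eta-\beta$ and $\eta-\beta$ relative to $0$ reproduces the five $\beta$-regimes and the listed solution families (the $(\,\cdot\,,0)\leftrightarrow(0,\,\cdot\,)$ pairs being symmetry images); the remaining critical points---in particular the fully collapsed point for $\beta<\rho+\eta$---are saddles.

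\textbf{Stability and marginal cases.} At a fixed point with latent~$1$ active on the signal, the sign flip of latent~$2$ (an isotropy of the point) block-diagonalises $\nabla^2\tilde\Phi$ into an \emph{even} block in $(m_{11},Q_{11},Q_{22})$ and an \emph{odd} block in $(m_{21},Q_{12})$. Since latent~$1$ and latent~$2$ occupy orthogonal directions, $\tilde\Phi$ is additive across them, so the even block further splits into the model-matched $2\times2$ Hessian in $(m_{11},Q_{11})$---positive definite iff $\beta<\rho+\eta$, with a single zero mode at $\beta=\rho+\eta$, as in Theorem~\ref{theorem:steady-match}---and a scalar ``noise-amplitude'' mode in $Q_{22}$ of supercritical-pitchfork type: positive on the $Q_{22}=\eta-\beta$ branch for $\beta<\eta$, positive on the collapsed branch $Q_{22}=0$ for $\eta<\beta<\rho+\eta$, and vanishing precisely at $\beta=\eta$; the odd block is positive definite on all these branches (redundant signal alignment and decoder-column correlation are always rate-penalised), which is exactly the assertion that $E_{12}\to0$, i.e.\ disentanglement, is locally stable. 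For the fully collapsed fixed point the whole signed-permutation group is an isotropy, the rate term dominates, and $\nabla^2\tilde\Phi\succ0$ for $\beta>\rho+\eta$ with one zero mode---the signal direction---at $\beta=\rho+\eta$. Assembling block signs over the five regimes gives the classification. \textbf{The main obstacle} is the explicit even/odd block evaluation after substituting the equilibrated encoder: one must show the odd ``mixing'' block is positive definite uniformly in $(\beta,\rho,\eta)$---not merely nondegenerate---and treat the $Q\succeq0$ boundary on the collapsed branches (where $Q_{22}=0$ pins $Q_{12}=0$), cleanest in the $W_{\cdot 2}$-parametrisation; this is also where one verifies that collapse is genuinely stable for $\beta\neq\eta,\rho+\eta$ and isolates the unique zero eigenvector at $\beta=\eta$ and at $\beta=\rho+\eta$ certifying the two marginal cases.
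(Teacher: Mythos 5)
Your plan is genuinely different from the paper's proof (which simply solves the $16$-variable fixed-point equations of Eqs.~\eqref{eq:supp-f-form-m}--\eqref{eq:supp-f-form-D} with $\lambda=0$, $\tau_W=\tau_V=\tau_D=\tau$, and evaluates the Jacobian eigenvalues of each candidate fixed point regime by regime), but as it stands it has a real gap at its core step. The theorem's notions of ``locally stable'' and ``marginally stable'' are defined through the spectrum of the Jacobian of the macroscopic ODEs. Your equivalence ``stable iff $\nabla^{2}\tilde\Phi\succ0$'' relies on the macroscopic flow being a gradient flow with respect to a nondegenerate pushforward metric, and on a Schur-complement elimination of $(V,D)$ with a strictly positive $(V,D)$-block. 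Both hypotheses fail exactly at the fixed points that carry the content of the theorem: at the posterior-collapsed point and on the partially collapsed branch ($Q_{22}=E_{22}=R_{22}=d_{21}=0$, $D_{2}=1$) the corresponding microscopic columns vanish, the quotient metric degenerates, and Hessian inertia no longer transfers to the Jacobian spectrum; your proposed remedy (linearizing in the decoder columns $W_{\cdot i}$ instead) changes the object being analyzed and is not shown to reproduce the macroscopic Jacobian eigenvalues --- in particular the zero eigenvalues that certify marginal stability at $\beta=\eta$ and $\beta=\rho+\eta$. These degenerate branches are precisely where the thresholds of the theorem are decided, so they cannot be relegated to a remark.

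Second, the decisive computation is not carried out, and its structural assumption looks wrong. You assume the ``even'' block of $\nabla^{2}\tilde\Phi$ splits additively into the model-matched $(m_{11},Q_{11})$ Hessian of Theorem~\ref{theorem:steady-match} plus a scalar pitchfork mode in $Q_{22}$, with a uniformly positive ``odd'' block. The paper's explicit spectra show the two latent directions do not decouple this way: at the overfitting fixed point the eigenvalues $\lambda_{9},\lambda_{10}$ involve $\sqrt{(\beta-\eta)(\beta-P)}$ with $P=\rho+\eta$, coupling the noise and signal thresholds, and the remaining eigenvalues are roots of the cubics \eqref{eq:overfit-eq1}--\eqref{eq:overfit-eq2}; the sign bookkeeping has to be done on these coupled quantities, not on a separable surrogate. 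Likewise, excluding stable ``entangled'' critical points (both columns partially aligned, $Q_{12}\neq0$) by the signed-permutation symmetry alone is an assertion; the paper obtains this by actually solving the fixed-point equations and checking that all other solutions have a positive Jacobian eigenvalue. If completed, your route would give a more conceptual organization of the computation, but at present the even/odd block evaluation, the uniform positivity of the mixing block, and the treatment of the degenerate branches --- i.e., the parts you flag as ``the main obstacle'' --- are exactly the proof, and they are missing.
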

This theorem indicates that disentangled representations can be obtained in the small learning rate limit for any $\beta$, that is, $\forall \beta, E_{12}^{\ast}=0$. 
% This implies that, given sufficient time, disentangled representations can be obtained for any $\beta$.
% However, it is important to note that as $\beta$ decreases, the convergence to the fixed point becomes slower. 
The threshold for the posterior collapse is the same as that of the model-matched case.
Thus, Theorem~\ref{theorem:steady-match} and \ref{theorem:steady-mismatch} suggest that $\beta$ can be a risky parameter since the posterior collapse is inevitable regardless of the training period. 
Furthermore, the extremum calculations of the generalization error in Theorems~\ref{theorem:steady-match} and \ref{theorem:steady-mismatch} demonstrate that the generalization error is minimized when $ \beta = \eta$, which means that the best generalization is achieved when $\beta$ is equal to the strength of the background noise $\eta$.
Note that the generalization error in the model-mismatched case at $\beta=\eta$ is marginally stable. 
However, the other fixed points are unstable, indicating that the dynamics converges to the optimal fixed point, but the convergence is significantly slow.

Another noteworthy observation is that Theorem~\ref{theorem:steady-mismatch} shows a new stable fixed point; when $\beta < \eta$, despite $m^{\ast}=(\sqrt{\rho+\eta-\beta}, 0), (0, \sqrt{\rho+\eta-\beta})$ having the same stable fixed point as in the range $\eta < \beta < \rho + \beta$, a non-corresponding element of $Q^{\ast}$ becomes finite,i.e., when $m^{\ast}_{11} \neq 0$, $q^{\ast}_{22} \neq 0$, and when $m^{\ast}_{12} \neq 0$, $q^{\ast}_{11} \neq 0$. 
This suggests that when $\beta < \eta$, the superfluous latent variable for the data-generative factor overfits the background noise and affects the generalization. 
% Consequently, this underscores the importance of carefully determining the latent dimension of the VAE, particularly when $\beta$ is relatively small.
\subsection{Learning Dynamics with KL Annealing}
We now discuss the effectiveness of monotonic KL annealing for the learning dynamics. A stability analysis of the fixed point is conducted for the continuous tanh KL annealing, given by $\beta(t) = \tanh(\gamma t)$, where $\gamma$ denotes the annealing rate. 
This annealing satisfies
\begin{equation}
    \frac{d\beta(t)}{dt} = \gamma (1-\beta^{2}(t)), ~~\beta(0)=0.
\end{equation}
Compared to monotonic KL annealing expressed in Eq.~\ref{eq:linear-kl-annealing}, the trajectories of both tanh KL annealing and monotonic KL annealing are qualitatively similar.
The learning curve with monotonic KL annealing is similar to that with tanh KL annealing; see Supplementary Materials~\ref{subsec:linear-annealing} for the detailed results. In particular, we focus on the representative model-matched case $M=M^{\ast}=1$ with tanh KL annealing. The results are summarized as follows.
\begin{figure}[tb]
    \centering
    \includegraphics[width=\columnwidth]{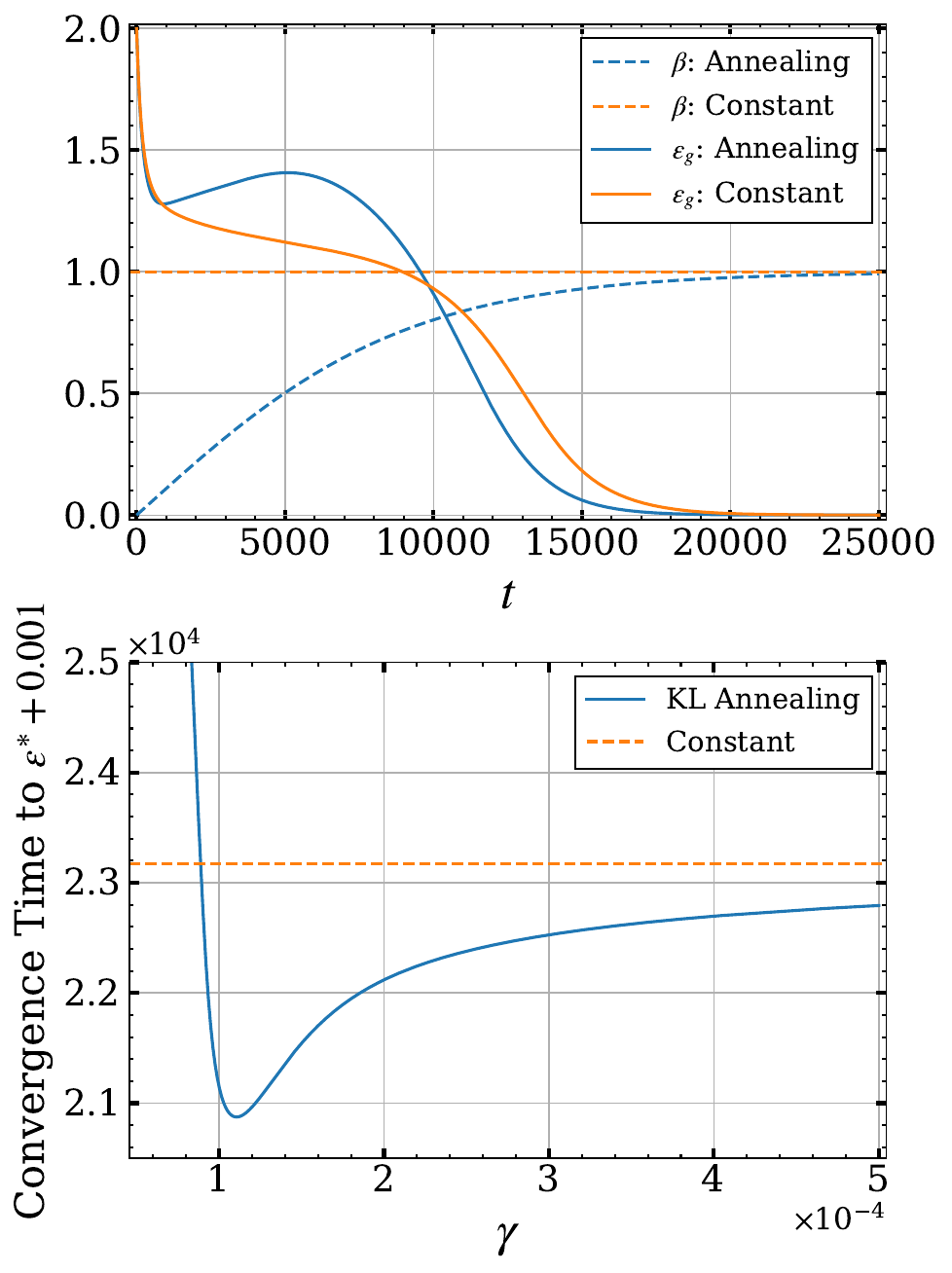}
    \caption{(Top) Time dependence of the generalization error and $\beta$ with both tanh KL annealing for $\beta$ and the constant $\beta=1$ under fixed parameters $\lambda=0$, $\rho=\eta=1$, and $\tau=1$. (Bottom) Annealing-rate $\gamma$ dependence of convergence time to the quasi-steady state deviating by $0.001$, i.e., $\varepsilon^{\ast}_{g}+0.001$. The annealing rate $\gamma$ of the learning dynamics with the tanh KL annealing in the top figure is used as the optimal value obtained from the bottom figure.}
    \label{fig:kl-annealing-1rank}
\end{figure}

\paragraph{Dynamical Properties of KL Annealing}
The top panel of Fig.~\ref{fig:kl-annealing-1rank} demonstrates a comparison of the learning dynamics $\varepsilon_{g}$ with constant $\beta=1$ and tanh KL annealing. 
The bottom panel of Fig.~\ref{fig:kl-annealing-1rank} shows the convergence time to 
 the quasi-steady state $\varepsilon^{\ast}_{g} + 0.001$ as a function of the annealing rate $\gamma$. 
This figure indicates the existence of an optimal annealing rate that maximizes the convergence speed to the quasi-steady state, and that an extremely slow KL annealing rate delays the convergence time. 
The annealing rate $\gamma$ of the learning dynamics using tanh KL annealing, shown in Fig.~\ref{fig:kl-annealing-1rank} (Top), is selected as the optimal rate based on the bottom figure. 
Fig.~\ref{fig:kl-annealing-1rank} demonstrates that the convergence of the generalization error $\varepsilon_{g}$ becomes faster with tanh KL annealing than without it. 
Subsequent discussions will focus on the threshold value of the annealing rate $\gamma$, which adversely affects the learning dynamics.

\paragraph{Steady State with KL Annealing}
Based on the stability analysis of the fixed points, including the time-dependent $\beta(t)$, the learning dynamics using tanh KL annealing exhibit the same stable fixed points. Furthermore, unless excessively slow tanh KL annealing is used, the convergence speed to the steady state coincides with that without the tanh KL annealing. Formally, the following theorem holds:
\begin{Thm}\label{eq:bad-effect-cond-kl-annealing}
    Even when tanh KL annealing is used, its steady state coincides with the steady state of the model-matched case at $\beta = 1$ and $\lambda=0$ without tanh KL annealing. 
    Moreover, when $\rho = 2 - \nu$ and $\eta = \nu$, tanh KL annealing leads to a slow convergence under the condition, $\gamma \le -J_{\max}/2$ where
    \begin{equation*}
        J_{\max} = 
        \begin{cases}
            \frac{\tau}{2}(\sqrt{5}-3),~\frac{\tau (1-2\sqrt{2}+\sqrt{5})}{4} \le \nu \le \frac{\tau (1+2\sqrt{2}+\sqrt{5})}{4} \\
            -\tau (2\nu +1) + \tau \sqrt{4\nu(2\nu-1) +1},~\mathrm{otherwise},
        \end{cases}
    \end{equation*}
    and the convergence using tanh KL annealing becomes the same as that without annealing when $\gamma > -J_{\max}/2$.
\end{Thm}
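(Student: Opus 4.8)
The plan is to treat the annealed learning dynamics as an \emph{autonomous} system on the extended state $(\mac{M}(t),\beta(t))$ governed by $\dot{\mac{M}}=F(\mac{M},\beta)$ together with $\dot\beta=\gamma(1-\beta^2)$, $\beta(0)=0$. Since $\dot\beta>0$ on $[0,1)$, every trajectory has $\beta(t)=\tanh(\gamma t)=1-2e^{-2\gamma t}+\mac{O}(e^{-4\gamma t})$, so $\beta$ relaxes to $1$ at the exponential rate $2\gamma$; consequently every equilibrium of the extended system has $\beta^{\ast}=1$ (the only admissible zero of $\gamma(1-\beta^2)$ reached from $\beta(0)=0$) and its $\mac{M}$-component is an equilibrium of the $\beta=1$ ODE. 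With $\rho=2-\nu$ and $\eta=\nu$ we have $\beta=1<\rho+\eta=2$, so Theorem~\ref{theorem:steady-match} pins down the unique stable equilibrium of that ODE up to the sign symmetry, namely $m^{\ast}=\sqrt{\rho+\eta-1}=1$ with the corresponding $Q^{\ast},D^{\ast}$ and $\varepsilon_g^{\ast}$ — exactly the steady state of the non-annealed dynamics at $\beta=1$. To promote this to convergence of the annealed trajectory I would invoke that the family of stable equilibria $\mac{M}^{\ast}(\beta)$, $\beta\in[0,1]$, is continuous and is the only stable branch for $\beta\in[0,\rho+\eta)$ by Theorem~\ref{theorem:steady-match}, so that the monotone increase of $\beta(t)$ eventually traps the trajectory in the basin of $\mac{M}^{\ast}(1)$; this gives the first assertion.

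For the rate statement I would linearize the extended system at $(\mac{M}^{\ast}(1),1)$. Because $\dot\beta$ is independent of $\mac{M}$, the Jacobian is block upper triangular,
\[
\begin{pmatrix} J_{\mac{M}} & \partial_\beta F \\ 0 & -2\gamma \end{pmatrix},
\]
so its spectrum is $\mathrm{spec}(J_{\mac{M}})\cup\{-2\gamma\}$, where $J_{\mac{M}}$ is the Jacobian at $\mac{M}^{\ast}(1)$ of the reduced model-matched ODE at $\beta=1$ that already enters the proof of Theorem~\ref{theorem:steady-match}. Writing $\delta\mac{M}=\mac{M}-\mac{M}^{\ast}(1)$, the linearized $\mac{M}$-equation reads $\dot{\delta\mac{M}}=J_{\mac{M}}\,\delta\mac{M}+(\partial_\beta F)(\beta(t)-1)$, i.e. a stable linear system forced by a term decaying at rate $2\gamma$; by variation of constants, and since $\partial_\beta F$ has a nonzero component along the slowest eigendirection of $J_{\mac{M}}$ (a direct check on the explicit $F$), $\delta\mac{M}(t)$ decays at the rate $\min(|J_{\max}|,2\gamma)$ — with an extra polynomial factor in the resonant case $2\gamma=|J_{\max}|$ — where $J_{\max}\delequal\max\mathrm{Re}\,\mathrm{spec}(J_{\mac{M}})<0$. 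Hence the annealed dynamics attain the same asymptotic rate $|J_{\max}|$ as the non-annealed one exactly when $2\gamma\ge|J_{\max}|$, i.e. $\gamma>-J_{\max}/2$, whereas for $\gamma\le -J_{\max}/2$ the relaxation is bottlenecked by the $\beta$-mode and slows to $2\gamma$.

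It remains to evaluate $J_{\max}$ in closed form. Inserting $\rho=2-\nu$, $\eta=\nu$, $\beta=1$, $\lambda=0$ and the common learning rate $\tau$ into the reduced ODEs, locating the equilibrium ($m^{\ast}=Q^{\ast}=1$ together with the corresponding $D^{\ast}$), and linearizing there, two of the eigenvalues of $J_{\mac{M}}$ come out to be $\tfrac{\tau}{2}(\sqrt5-3)$ — which is $\nu$-independent, being controlled only by quantities depending on $\rho+\eta=2$ — and $-\tau(2\nu+1)+\tau\sqrt{4\nu(2\nu-1)+1}$, while all remaining eigenvalues are strictly more negative and hence do not contribute to the maximum. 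Comparing the two relevant expressions reduces, after squaring, to a quadratic inequality in $\nu$ whose roots are the two $\nu$-thresholds appearing in the statement; on that interval the $\nu$-independent eigenvalue is the largest, off it the $\nu$-dependent one is, which is precisely the stated piecewise formula for $J_{\max}$. Combined with the dichotomy of the previous paragraph, this yields the theorem.

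I expect the main obstacle to be exactly this final computation: reducing the vector field $F$ of Supplementary Materials~\ref{sec:complete-form-F} to the scalar model-matched ODEs, solving for the $\beta=1$ equilibrium, and diagonalizing the linearization in closed form so as to isolate these two eigenvalues and confirm the rest are more negative. Everything else — the block-triangular spectrum, the forced-stable-linear-system rate argument, and the identification of the steady state through Theorem~\ref{theorem:steady-match} — is structurally short. A secondary point needing care is verifying that the annealed trajectory lands in the basin of the non-collapsed equilibrium $\mac{M}^{\ast}(1)$ rather than a posterior-collapse fixed point, for which I would rely on the uniqueness of the stable branch for $\beta\in[0,\rho+\eta)$ from Theorem~\ref{theorem:steady-match} together with the monotonicity of $\beta(t)=\tanh(\gamma t)$.
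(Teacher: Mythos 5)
Your proposal is correct and takes essentially the same route as the paper: augment the state with $\beta$ so that the annealed dynamics become an autonomous system whose extended fixed-point equation forces $\beta^{\ast}=1$ (hence the same learnable steady state as the non-annealed $\beta=1$ case), and whose Jacobian contributes the single extra eigenvalue $-2\gamma$ to the model-matched spectrum, so the convergence rate is governed by $\min(2\gamma,\,|J_{\max}|)$ with the stated piecewise closed form of $J_{\max}$ at $\rho=2-\nu$, $\eta=\nu$. Your variation-of-constants and basin-of-attraction remarks merely make explicit steps the paper leaves implicit, and the final eigenvalue evaluation you defer is likewise omitted in the paper as a "straightforward calculation."
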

The proof of this theorem can be found in Supplementary Materials~\ref{subsec:stability-tanh-annealing}. 
% This result indicates that the KL annealing satisfying the annealing rate in equation Eq.~\eqref{eq:slow-cond-kl-annealing} accelerates the learning from the previous analysis of the convergence time to the quasi-fixed point, while the convergence speed from the vicinity of the steady state remains the same.

\subsection{Related Work}\label{subsec:related-work}

\paragraph{Deterministic Dynamical Descriptions of SGD} 
Deterministic dynamical descriptions of SGD at a high-dimensional input limit have been studied in the statistical physics community. This started with single- and two-layer neural networks with a few hidden units \citep{kinzel1990improving, kinouchi1992optimal, copelli1995line, biehl1995learning, riegler1995line, vicente1998statistical}, based on a heuristic derivation of ODEs describing typical learning dynamics. These results have recently been rigorously proven using the concentration phenomena in stochastic processes \citep{wang2018subspace},  based on which the analysis of the SGD for the two-layer neural networks was proven \citep{goldt2019dynamics, veiga2022phase}. For generative models, the SGD of generative adversarial networks has been investigated \citep{wang2019solvable}. However, to our best knowledge, this analysis has not been applied to the analysis of VAEs thus far.

\paragraph{Linear VAEs}
The linear VAE is a simple model in which both the encoder and decoder are restricted to affine transformations \citep{lucas2019don}. Although deriving analytical results for deep latent models is often intractable, a linear VAE can provide analytical results, facilitating a deeper understanding of VAEs. 
Furthermore, despite this simplicity, the theoretical results can sufficiently explain the behavior of deeper and intricately structured VAEs \citep{lucas2019don, bae2022multi}. 
In fact, results proven to be effective for linear models have been applied to deeper models, leading to the new algorithms \citep{bae2022multi}. 
In addition, several theoretical results have been obtained; \citet{dai2018connections} demonstrated the connections between linear VAE, probabilistic PCA \citep{tipping1999probabilistic}, and robust PCA \citep{candes2011robust, chandrasekaran2011rank}. Simultaneously, studies by \citet{lucas2019don} and \citet{wang2022posterior} used linear VAEs to explore the origins of posterior collapse. 
However, these analyses did not address the learning dynamics indicated in our study.

\section{CONCLUSION}\label{sec:conclusion}
This study rigorously proves that the SGD dynamics of a linear VAE converges to a deterministic process at a high-dimensional input limit. Our analysis reveals that the VAE initially learns entangled representations and then learns disentangled representations. 
Based on the stability analysis, we demonstrate that a posterior collapse occurs at a certain threshold of $\beta$, and superfluous latent spaces can overfit the background noise of training data. 
We also demonstrate that appropriately adjusting KL annealing can accelerate the convergence of training. 
This study has the following limitations. First, our analysis is based on a one-pass SGD, indicating that each data can be used only once; however, this is not the case in practical scenarios. Second, the data generation processes in the real world and VAEs are more complex than those in our data generative model and linear VAE. Thus, a more robust and minimal setup that can overcome these limitations will be developed in the future, along with a novel theoretical method.
%%% reference %%%
\bibliography{ref-vae-dynamics}

\begin{thebibliography}{}

\bibitem[Akkari et~al., 2022]{akkari2022bayesian}
Akkari, N., Casenave, F., Hachem, E., and Ryckelynck, D. (2022).
\newblock A bayesian nonlinear reduced order modeling using variational autoencoders.
\newblock {\em Fluids}, 7(10):334.

\bibitem[An and Cho, 2015]{an2015variational}
An, J. and Cho, S. (2015).
\newblock Variational autoencoder based anomaly detection using reconstruction probability.
\newblock {\em Special lecture on IE}, 2(1):1--18.

\bibitem[Bae et~al., 2022]{bae2022multi}
Bae, J., Zhang, M.~R., Ruan, M., Wang, E., Hasegawa, S., Ba, J., and Grosse, R. (2022).
\newblock Multi-rate vae: Train once, get the full rate-distortion curve.
\newblock {\em arXiv preprint arXiv:2212.03905}.

\bibitem[Bengio et~al., 2013]{bengio2013representation}
Bengio, Y., Courville, A., and Vincent, P. (2013).
\newblock Representation learning: A review and new perspectives.
\newblock {\em IEEE transactions on pattern analysis and machine intelligence}, 35(8):1798--1828.

\bibitem[Biehl and Schwarze, 1995]{biehl1995learning}
Biehl, M. and Schwarze, H. (1995).
\newblock Learning by on-line gradient descent.
\newblock {\em Journal of Physics A: Mathematical and general}, 28(3):643.

\bibitem[Billingsley, 2013]{billingsley2013convergence}
Billingsley, P. (2013).
\newblock {\em Convergence of probability measures}.
\newblock John Wiley \& Sons.

\bibitem[Bowman et~al., 2015]{bowman2015generating}
Bowman, S.~R., Vilnis, L., Vinyals, O., Dai, A.~M., Jozefowicz, R., and Bengio, S. (2015).
\newblock Generating sentences from a continuous space.
\newblock {\em arXiv preprint arXiv:1511.06349}.

\bibitem[Cand{\`e}s et~al., 2011]{candes2011robust}
Cand{\`e}s, E.~J., Li, X., Ma, Y., and Wright, J. (2011).
\newblock Robust principal component analysis?
\newblock {\em Journal of the ACM (JACM)}, 58(3):1--37.

\bibitem[Chandrasekaran et~al., 2011]{chandrasekaran2011rank}
Chandrasekaran, V., Sanghavi, S., Parrilo, P.~A., and Willsky, A.~S. (2011).
\newblock Rank-sparsity incoherence for matrix decomposition.
\newblock {\em SIAM Journal on Optimization}, 21(2):572--596.

\bibitem[Child, 2020]{child2020very}
Child, R. (2020).
\newblock Very deep vaes generalize autoregressive models and can outperform them on images.
\newblock {\em arXiv preprint arXiv:2011.10650}.

\bibitem[Copelli and Caticha, 1995]{copelli1995line}
Copelli, M. and Caticha, N. (1995).
\newblock On-line learning in the committee machine.
\newblock {\em Journal of Physics A: Mathematical and General}, 28(6):1615.

\bibitem[Dai et~al., 2018]{dai2018connections}
Dai, B., Wang, Y., Aston, J., Hua, G., and Wipf, D. (2018).
\newblock Connections with robust pca and the role of emergent sparsity in variational autoencoder models.
\newblock {\em The Journal of Machine Learning Research}, 19(1):1573--1614.

\bibitem[Dieng et~al., 2019]{dieng2019avoiding}
Dieng, A.~B., Kim, Y., Rush, A.~M., and Blei, D.~M. (2019).
\newblock Avoiding latent variable collapse with generative skip models.
\newblock In {\em The 22nd International Conference on Artificial Intelligence and Statistics}, pages 2397--2405. PMLR.

\bibitem[Fu et~al., 2019]{fu2019cyclical}
Fu, H., Li, C., Liu, X., Gao, J., Celikyilmaz, A., and Carin, L. (2019).
\newblock Cyclical annealing schedule: A simple approach to mitigating kl vanishing.
\newblock {\em arXiv preprint arXiv:1903.10145}.

\bibitem[Goldt et~al., 2019]{goldt2019dynamics}
Goldt, S., Advani, M., Saxe, A.~M., Krzakala, F., and Zdeborov{\'a}, L. (2019).
\newblock Dynamics of stochastic gradient descent for two-layer neural networks in the teacher-student setup.
\newblock {\em Advances in neural information processing systems}, 32.

\bibitem[Higgins et~al., 2016]{higgins2016beta}
Higgins, I., Matthey, L., Pal, A., Burgess, C., Glorot, X., Botvinick, M., Mohamed, S., and Lerchner, A. (2016).
\newblock beta-vae: Learning basic visual concepts with a constrained variational framework.
\newblock In {\em International conference on learning representations}.

\bibitem[Ichikawa and Hukushima, 2022]{ichikawa2022statistical}
Ichikawa, Y. and Hukushima, K. (2022).
\newblock Statistical-mechanical study of deep boltzmann machine given weight parameters after training by singular value decomposition.
\newblock {\em Journal of the Physical Society of Japan}, 91(11):114001.

\bibitem[Ichikawa and Hukushima, 2023]{ichikawa2023dataset}
Ichikawa, Y. and Hukushima, K. (2023).
\newblock Dataset size dependence of rate-distortion curve and threshold of posterior collapse in linear vae.
\newblock {\em arXiv preprint arXiv:2309.07663}.

\bibitem[Jiang et~al., 2016]{jiang2016variational}
Jiang, Z., Zheng, Y., Tan, H., Tang, B., and Zhou, H. (2016).
\newblock Variational deep embedding: An unsupervised and generative approach to clustering.
\newblock {\em arXiv preprint arXiv:1611.05148}.

\bibitem[Johnstone and Lu, 2009]{johnstone2009consistency}
Johnstone, I.~M. and Lu, A.~Y. (2009).
\newblock On consistency and sparsity for principal components analysis in high dimensions.
\newblock {\em Journal of the American Statistical Association}, 104(486):682--693.

\bibitem[Kim et~al., 2018]{kim2018semi}
Kim, Y., Wiseman, S., Miller, A., Sontag, D., and Rush, A. (2018).
\newblock Semi-amortized variational autoencoders.
\newblock In {\em International Conference on Machine Learning}, pages 2678--2687. PMLR.

\bibitem[Kingma and Welling, 2013]{kingma2013auto}
Kingma, D.~P. and Welling, M. (2013).
\newblock Auto-encoding variational bayes.
\newblock {\em arXiv preprint arXiv:1312.6114}.

\bibitem[Kinouchi and Caticha, 1992]{kinouchi1992optimal}
Kinouchi, O. and Caticha, N. (1992).
\newblock Optimal generalization in perceptions.
\newblock {\em Journal of Physics A: mathematical and General}, 25(23):6243.

\bibitem[Kinzel and Rujan, 1990]{kinzel1990improving}
Kinzel, W. and Rujan, P. (1990).
\newblock Improving a network generalization ability by selecting examples.
\newblock {\em Europhysics Letters}, 13(5):473.

\bibitem[Kushner, 2009]{Kushner2009}
Kushner, H.~J. (2009).
\newblock {\em Stochastic Approximation and Recursive Algorithms and Applications (Stochastic Modelling and Applied Probability, 35)}.
\newblock Springer New York.

\bibitem[Lucas et~al., 2019]{lucas2019don}
Lucas, J., Tucker, G., Grosse, R.~B., and Norouzi, M. (2019).
\newblock Don't blame the elbo! a linear vae perspective on posterior collapse.
\newblock {\em Advances in Neural Information Processing Systems}, 32.

\bibitem[Norouzi et~al., 2020]{norouzi2020exemplar}
Norouzi, S., Fleet, D.~J., and Norouzi, M. (2020).
\newblock Exemplar vae: Linking generative models, nearest neighbor retrieval, and data augmentation.
\newblock {\em Advances in Neural Information Processing Systems}, 33:8753--8764.

\bibitem[Park et~al., 2022]{park2022interpreting}
Park, S., Adosoglou, G., and Pardalos, P.~M. (2022).
\newblock Interpreting rate-distortion of variational autoencoder and using model uncertainty for anomaly detection.
\newblock {\em Annals of Mathematics and Artificial Intelligence}, pages 1--18.

\bibitem[Rezende et~al., 2014]{rezende2014stochastic}
Rezende, D.~J., Mohamed, S., and Wierstra, D. (2014).
\newblock Stochastic backpropagation and approximate inference in deep generative models.
\newblock In {\em International conference on machine learning}, pages 1278--1286. PMLR.

\bibitem[Riegler and Biehl, 1995]{riegler1995line}
Riegler, P. and Biehl, M. (1995).
\newblock On-line backpropagation in two-layered neural networks.
\newblock {\em Journal of Physics A: Mathematical and General}, 28(20):L507.

\bibitem[Roberts et~al., 2018]{roberts2018hierarchical}
Roberts, A., Engel, J., Raffel, C., Hawthorne, C., and Eck, D. (2018).
\newblock A hierarchical latent vector model for learning long-term structure in music.
\newblock In {\em International conference on machine learning}, pages 4364--4373. PMLR.

\bibitem[Sicks et~al., 2021]{sicks2021generalised}
Sicks, R., Korn, R., and Schwaar, S. (2021).
\newblock A generalised linear model framework for $\beta$-variational autoencoders based on exponential dispersion families.
\newblock {\em The Journal of Machine Learning Research}, 22(1):10539--10579.

\bibitem[Tipping and Bishop, 1999]{tipping1999probabilistic}
Tipping, M.~E. and Bishop, C.~M. (1999).
\newblock Probabilistic principal component analysis.
\newblock {\em Journal of the Royal Statistical Society Series B: Statistical Methodology}, 61(3):611--622.

\bibitem[Vahdat and Kautz, 2020]{vahdat2020nvae}
Vahdat, A. and Kautz, J. (2020).
\newblock Nvae: A deep hierarchical variational autoencoder.
\newblock {\em Advances in neural information processing systems}, 33:19667--19679.

\bibitem[Veiga et~al., 2022]{veiga2022phase}
Veiga, R., Stephan, L., Loureiro, B., Krzakala, F., and Zdeborov{\'a}, L. (2022).
\newblock Phase diagram of stochastic gradient descent in high-dimensional two-layer neural networks.
\newblock {\em arXiv preprint arXiv:2202.00293}.

\bibitem[Vicente et~al., 1998]{vicente1998statistical}
Vicente, R., Kinouchi, O., and Caticha, N. (1998).
\newblock Statistical mechanics of online learning of drifting concepts: A variational approach.
\newblock {\em Machine Learning}, 32:179--201.

\bibitem[Wang et~al., 2018]{wang2018subspace}
Wang, C., Eldar, Y.~C., and Lu, Y.~M. (2018).
\newblock Subspace estimation from incomplete observations: A high-dimensional analysis.
\newblock {\em IEEE Journal of Selected Topics in Signal Processing}, 12(6):1240--1252.

\bibitem[Wang et~al., 2019]{wang2019solvable}
Wang, C., Hu, H., and Lu, Y. (2019).
\newblock A solvable high-dimensional model of gan.
\newblock {\em Advances in Neural Information Processing Systems}, 32.

\bibitem[Wang and Ziyin, 2022]{wang2022posterior}
Wang, Z. and Ziyin, L. (2022).
\newblock Posterior collapse of a linear latent variable model.
\newblock {\em Advances in Neural Information Processing Systems}, 35:37537--37548.

\bibitem[Yang et~al., 2017]{yang2017improved}
Yang, Z., Hu, Z., Salakhutdinov, R., and Berg-Kirkpatrick, T. (2017).
\newblock Improved variational autoencoders for text modeling using dilated convolutions.
\newblock In {\em International conference on machine learning}, pages 3881--3890. PMLR.

\bibitem[Zhao et~al., 2017]{zhao2017learning}
Zhao, T., Zhao, R., and Eskenazi, M. (2017).
\newblock Learning discourse-level diversity for neural dialog models using conditional variational autoencoders.
\newblock {\em arXiv preprint arXiv:1703.10960}.

\end{thebibliography}

% Supplementary material: To improve readability, you must use a single-column format for the supplementary material.
\onecolumn
\aistatstitle{Learning Dynamics in Linear VAE: Posterior Collapse Threshold, Superfluous Latent Space Pitfalls, and Speedup with KL Annealing: Supplementary Materials}

\setcounter{section}{0}
\renewcommand{\thesection}{\Alph{section}}
% \setcounter{equation}{8}
% \setcounter{algorithm}{2}
% \setcounter{table}{1}
% \setcounter{figure}{8}
%上記のようにすれば, 式, 図, テーブル, アルゴリズムはMainの連番となる.

% \section{OVERVIEW}
% This supplementary material provides additional results and detailed proofs in the main text.

\section{COMPLETE FORM OF THE ORDINARY DIFFERENTIAL EQUATIONS IN THEOREM 4.2}\label{sec:complete-form-F}
In this section, we present the specific function set of $F$ in Theorem 4.2 as follows:
\begin{align}
    \frac{\mathrm{d} m_{ml}}{\mathrm{d} t} &\delequal F_{m_{ml}}(\mac{M}) = - \tau_{W} \Bigg(\sum_{n^{\prime}=1}^{M} m_{n^{\prime}l} h(d_{m}, d_{n^{\prime}}, E_{mn^{\prime}}) +m_{ml} (D_{m}+\lambda)-h(m_{l}^{\ast}, d_{m}, d_{ml})  \Bigg), \label{eq:supp-f-form-m}\\
    \frac{\mathrm{d} d_{ml}}{\mathrm{d} t} &\delequal F_{d_{ml}}(\mac{M})  = - \tau_{V} \left(\sum_{n^{\prime}=1}^{M} Q_{mn^{\prime}} h(m^{\ast}_{l}, d_{n^{\prime}}, d_{n^{\prime}l}) + \beta h(m_{l}^{\ast}, d_{m}, d_{ml})-h(m_{l}^{\ast}, m_{m}, m_{ml}) + \lambda d_{ml}  \right),  \label{eq:supp-f-form-d} \\
    \frac{\mathrm{d} Q_{mn}}{\mathrm{d} t} &\delequal F_{Q_{mn}}(\mac{M}) = -\tau_{W} 
    \Bigg(Q_{mn}(D_{m} + D_{n} + 2\lambda)-h(d_{m}, m_{n}, R_{nm}) - h(d_{n}, m_{m}, R_{mn}) \notag \\
    &+ \sum_{n^{\prime}=1}^{M} Q_{mn^{\prime}} h(d_{n}, d_{n^{\prime}}, E_{nn^{\prime}}) + \sum_{n^{\prime}=1}^{M} Q_{nn^{\prime}} h(d_{m}, d_{n^{\prime}}, E_{mn^{\prime}}) \Bigg) + \eta \tau_{W}^{2} h(d_{m}, d_{n}, E_{mn}), \label{eq:supp-f-form-Q} \\
    \frac{\mathrm{d} E_{mn}}{\mathrm{d} t} &\delequal F_{E_{mn}}(\mac{M}) = - \tau_{V} \Bigg(2 \beta h(d_{m}, d_{n}, E_{mn})-h(m_{m}, d_{n}, R_{mn}) - h(m_{n}, d_{m}, R_{nm}) + 2 \lambda E_{mn}  \notag \\
    &+ \sum_{n^{\prime}=1}^{M} Q_{nn^{\prime}} h(d_{m}, d_{n^{\prime}}, E_{mn^{\prime}}) + \sum_{n^{\prime}=1}^{M} Q_{mn^{\prime}} h(d_{n}, d_{n^{\prime}}, E_{nn^{\prime}}) \Bigg) \notag \\
    &+ \eta \tau_{V}^{2} \Bigg\{\sum_{n^{\prime}, m^{\prime}} Q_{mm^{\prime}} Q_{nn^{\prime}} h(d_{m^{\prime}}, d_{n^{\prime}}, E_{m^{\prime}n^{\prime}}) +\beta \Bigg(\sum_{n^{\prime}=1}^{M} Q_{mn^{\prime}} h(d_{n^{\prime}}, d_{n}, E_{nn^{\prime}}) + \sum_{n^{\prime}=1}^{M} Q_{nn^{\prime}}h(d_{n^{\prime}}, d_{m}, E_{mn^{\prime}})  \notag \\
    &+\beta h(d_{m}, d_{n}, E_{mn}) - h(d_{m}, m_{n}, R_{nm})-h(d_{n}, m_{m}, R_{mn}) \Bigg) \notag \\
    &+ \Bigg(h(m_{m}, m_{n}, Q_{mn})-\sum_{n^{\prime}=1}^{M}Q_{mn^{\prime}} h(d_{n^{\prime}}, m_{n}, R_{nn^{\prime}}) - \sum_{n^{\prime}=1}^{M} Q_{nn^{\prime}}h(d_{n^{\prime}}, m_{m}, R_{mn^{\prime}}) \Bigg) \Bigg\}, \label{eq:supp-f-form-E}\\
    \frac{dR_{mn}}{dt} &\delequal F_{R_{mn}}(\mac{M}) = -\tau_{W} \Bigg(\sum_{n^{\prime}=1}^{M} R_{n^{\prime}n} h(d_{n^{\prime}}, d_{m}, E_{mn^{\prime}}) - h(d_{m}, d_{n}, E_{mn}) + (D_{m}+\lambda)R_{mn} \Bigg)\notag \\
    &-\tau_{V} \Bigg(\sum_{n^{\prime}=1}^{M} Q_{nn^{\prime}}h(m_{m}, d_{n^{\prime}}, R_{mn^{\prime}}) + \beta h(m_{m}, d_{n}, R_{mn}) - h(m_{m}, m_{n}, Q_{mn}) + \lambda R_{mn}  \Bigg) \notag \\
    &+ \tau_{V} \tau_{W} \Bigg(\sum_{n^{\prime}=1}^{M} Q_{nn^{\prime}} h(d_{m}, d_{n^{\prime}}, E_{mn^{\prime}}) + \beta h(d_{m}, d_{n}, E_{mn}) - h(d_{m}, m_{n}, R_{nm}) \Bigg), \label{eq:supp-f-form-R} \\
    \frac{\mathrm{d} D_{m}} {\mathrm{d} t} &\delequal F_{D_{m}}(\mac{M}) =  \tau_{D} \left(\frac{\beta}{D_{m}}-(Q_{mm} + \beta) \right), \label{eq:supp-f-form-D}
\end{align}
where $m^{\ast} = (W^{\ast})^{\top} W^{\ast} /N$, and we use the shorthand expression given by
\begin{equation}
    h(A, B, C) = \rho \sum_{s=1}^{M^{\ast}} A_{s} B_{s} + \eta C.
\end{equation}

% \subsection{$M=1$ ODE}
% \begin{align}
%     &\frac{dm}{dt} = \tau_{W} \left(d(\rho+\eta)-m(\rho d^{2} + \eta E + D+\lambda) \right) \\
%     &\frac{dd}{dt} = \tau_{V} \left((\rho+\eta) (m - (Q+\beta_{\mathrm{VAE}}) d )-\lambda d  \right) \\
%     &\frac{dQ}{dt} = 2 \tau_{W} \left((\rho m d + \eta R) - q (\rho d^{2} + \eta E + D + \lambda)  \right) + \tau_{W}^{2} \eta (\rho d^{2} + \eta E) \\
%     &\frac{dE}{dt} = 2 \tau_{V} \left((\rho md + \eta R) - (Q+\beta_{\mathrm{VAE}})(\rho d^{2} + \eta E) - \lambda E \right)
% \end{align}
%Code for our algorithm is available at ``GITHUB URL.''

\section{PROOF OF THEOREM 4.2}\label{sec:proof-theorem}
In this section, we provide a proof of Theorem 4.2 in main text from the following two Lemmas: 
(i) Convergence of the first moment of the increment of the macroscopic stochastic process $\mac{M}^{t}$, 
and (ii) Vanishing of the second moment of the increment. 
Intuitively, these ensure that the leading order of the average increment is captured by the ODEs described in Theorem 4.2 and that the stochastic part of the increment of the macroscopic state $\mac{M}^{t}$ vanishes as the input dimension increases.

The whole proof is divided into 4 parts. The first step is to prove the two conditions in the subsequent section. Then, it is demonstrated that these two conditions are sufficient to prove Theorem 4.2. Finally, technical Lemmas that are repeatedly used in the above proofs are summarized. The proof follows the standard scheme of the convergence of stochastic processes \citep{Kushner2009, billingsley2013convergence, wang2018subspace}.

\subsection{Convergence of First Moments of Increment to ODEs}\label{subsec:covergence-first-moments}
We first review the training algorithm of SGD which characterizes a Markov process $X^{t}=(W^{t}, V^{t}, D^{t})$. The specific update rule is given by
\begin{align}
    \B{w}_{m}^{t+1} &= \B{w}_{m}^{t} -\frac{\tau_{W}}{N} \Bigg(\sum_{n=1}^{M} \B{w}_{n} \left(\sqrt{\rho} \sum_{s=1}^{M^{\ast}} c_{s}^{t}d_{ms}^{t} + \sqrt{\eta} \zeta^{t}_{m} \right) \left(\sqrt{\rho} \sum_{s=1}^{M^{\ast}} c_{s}^{t}d_{ns}^{t} + \sqrt{\eta} \zeta^{t}_{n} \right) + (D_{m}^{t} + \lambda) \B{w}_{m}^{t} \notag \\
    &- \left(\sum_{s=1}^{M^{\ast}} \sqrt{\rho} c_{s}^{t} \B{w}^{\ast}_{s} + \sqrt{\eta N} \B{n}^{t} \right)\left(\sum_{s=1}^{M^{\ast}} \sqrt{\rho} c_{s}^{t} d^{t}_{ms} + \sqrt{\eta} \zeta_{m}^{t}  \right)\Bigg), \label{eq:gradient-w} \\
    \B{v}_{m}^{t+1} &= \B{v}_{m}^{t} -\frac{\tau_{V}}{N} \Bigg(\left(\sum_{s=1}^{M^{\ast}} \sqrt{\rho} c_{s}^{t} \B{w}_{s}^{\ast} + \sqrt{\eta N} \B{n}^{t} \right) \Bigg\{\sum_{n} Q_{mn} \left(\sum_{s=1}^{M^{\ast}} \sqrt{\rho} c_{s}^{t} d_{ns}^{t} + \sqrt{\eta} \zeta_{n}^{t}  \right) + \beta \left(\sum_{s=1}^{M^{\ast}} \sqrt{\rho} c_{s}^{t} d_{ms}^{t} + \sqrt{\eta} \zeta_{m}^{t}  \right) \notag \\
    &- \left(\sum_{s=1}^{M^{\ast}} \sqrt{\rho} c_{s}^{t} m_{ms}+ \sqrt{\eta} u_{m}^{t}  \right) \Bigg\}+ \lambda \B{v}_{m}^{t} \Bigg), \label{eq:gradient-v} \\
    D_{m}^{t+1} &= D_{m}^{t} - \frac{\tau_{D}}{2N} \left((Q_{mm}^{t} + \beta) - \frac{\beta}{D_{m}^{t}}  \right), 
\end{align} 
where $\B{w}_{m}^{t}$, $\B{w}_{m}^{\ast}$ and $\B{v}_{m}^{t}$ represent $m$-th columns $W^{t}$, $W^{\ast}$ and $V^{t}$, respectively. 

The following lemma holds for the macroscopic state $\mac{M}^{t}$ characterized by the above updates.

\begin{Lem}
\label{lem:first-incre-macro}
    Under the same assumptions as in Theorem 4.2, for all $t < NT$ the following inequality holds:
    \begin{equation}
    \label{eq:first-moment}
        \mab{E}\left\|\mab{E}_{t} \mac{M}^{t+1} - \mac{M}^{t} - \frac{1}{N} F(\mac{M}^{t})\right\|_{F} \le \frac{C}{N^{3/2}}.
    \end{equation}
\end{Lem}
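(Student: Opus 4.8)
The plan is to expand the conditional expectation $\mathbb{E}_t \mathcal{M}^{t+1}$ componentwise, using the explicit SGD update rules in Eq.~\eqref{eq:gradient-w}--\eqref{eq:gradient-v} for the microscopic state $X^t$, and show that the $\mathcal{O}(1/N)$ term of each increment matches $F(\mathcal{M}^t)/N$ exactly, with the remainder of order $N^{-3/2}$. I would treat one macroscopic variable at a time. Take $m^{t+1} = \tfrac1N (W^{t+1})^\top W^\ast$: substituting $\B{w}_m^{t+1} = \B{w}_m^t - \tfrac{\tau_W}{N}(\cdots)$ gives $m^{t+1}_{ml} = m^t_{ml} - \tfrac{\tau_W}{N}\langle(\cdots), \B{w}^\ast_l\rangle/N$, plus no quadratic-in-update term since $m$ is linear in $W$. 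The inner product $\langle(\cdots),\B{w}^\ast_l\rangle$ contains terms like $\sqrt{\rho}\sum_s c_s^t (W^{\ast\top}W^\ast)_{sl}$, noise terms $\sqrt{\eta N}\langle \B{n}^t,\B{w}^\ast_l\rangle$, and so on. Taking $\mathbb{E}_t$ (i.e. averaging over the fresh sample $\B{c}^t,\B{n}^t,\zeta^t,u^t$), the odd moments in $\B{c}^t$ and $\B{n}^t$ vanish, the even moments produce exactly the contractions $\rho\sum_s A_s B_s + \eta C = h(A,B,C)$, and one reads off Eq.~\eqref{eq:supp-f-form-m}. For $Q^{t+1} = \tfrac1N(W^{t+1})^\top W^{t+1}$ the update is quadratic in the gradient, so there is an extra $\tfrac{\tau_W^2}{N^2}\langle(\cdots),(\cdots)\rangle/N$ term; its conditional expectation is $\mathcal{O}(1/N)$ — not negligible — and it contributes precisely the $\eta\tau_W^2 h(d_m,d_n,E_{mn})$ term in Eq.~\eqref{eq:supp-f-form-Q}, because the dominant surviving contraction is $\eta N \cdot \langle \B{n}^t,\B{n}^t\rangle/N = \eta N$ scaled by $1/N$. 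The same bookkeeping handles $E$, $R$ (note $R$ picks up a cross term $\tau_V\tau_W$ from the product of the two distinct updates), and $D$ (which is updated with an extra $1/N$, so its increment is already $\mathcal{O}(1/N)$ and $F_{D_m}$ is just the drift $\tau_D(\beta/D_m - (Q_{mm}+\beta))$, with the stochastic part of $Q_{mm}^t$ inside being a deterministic function of the state).

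The key technical points that make the remainder $\mathcal{O}(N^{-3/2})$ rather than merely $\mathcal{O}(N^{-1})$: (i) every ``error'' in replacing an empirical inner product such as $\langle \B{n}^t, W^\ast\rangle$, $\langle\B{n}^t,\B{w}_m^t\rangle$ by its mean (zero) or by the corresponding order parameter has conditional variance $\mathcal{O}(1/N)$, hence conditional standard deviation $\mathcal{O}(N^{-1/2})$, and this sits in front of an explicit $1/N$ from the learning-rate scaling, giving $N^{-3/2}$; (ii) the fourth-moment bound (A.4) on the entries of $W^\ast$ and $X^0$, propagated forward in $t$, is what guarantees these variances stay uniformly bounded for all $t \le NT$ — this propagation is the routine-but-essential moment-stability lemma that the ``technical Lemmas'' section is presumably devoted to, and I would invoke it here (or prove a one-step version: if $\mathbb{E}\|X^t\|$-type moments are bounded then so are those at $t+1$, with a constant that grows at most like $e^{Ct/N}$, hence is $\le C(T)$ on the relevant horizon). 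I would organize the estimate as: $\mathbb{E}_t\mathcal{M}^{t+1} - \mathcal{M}^t - F(\mathcal{M}^t)/N = \sum_k (1/N)\cdot(\text{deterministic function of }\mathcal{M}^t\text{ minus its exact value})\ +\ (\text{genuinely }\mathcal{O}(N^{-3/2})\text{ terms})$, and argue the first group is actually zero by the definition of $F$ (the ODE drift is \emph{defined} as the exact conditional drift in the order parameters, so there is nothing to approximate at this order — the only gap is the subleading $1/N^{3/2}$ from replacing quantities like $\tfrac1N\langle\B{w}_m^t,\B{w}_n^t\rangle$ that appear \emph{inside} the update-squared term, which I can only write in terms of $Q_{mn}^t$ up to an $\mathcal{O}(N^{-1})$ self-averaging error, multiplied by the existing $1/N$).

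\textbf{Main obstacle.} The delicate part is not any single expansion but controlling the higher-moment terms uniformly in $t$ over the $\mathcal{O}(N)$-long horizon: I need that $\mathbb{E}\big[\sum_m ((W^t_{im})^4 + (V^t_{im})^4 + (D^t_m)^{-4})\big]$ (the last one because $1/D_m$ appears in $F_{D_m}$, and $D^0\neq 0$ is assumed precisely to keep $D^t$ bounded away from $0$) stays $\le C(T)$ for all $t\le NT$. Establishing this requires a Gr\"onwall-type argument on the one-step growth of these moments, using that each update moves an entry by $\mathcal{O}(1/N)$ and that the gradient entries have bounded moments given bounded-moment state — a self-referential estimate that has to be closed carefully (e.g. bounding $\mathbb{E}(W^{t+1}_{im})^4$ by $\mathbb{E}(W^t_{im})^4(1 + C/N) + (\text{lower-order in the 4th-moment hierarchy})/N$). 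Once that uniform moment control is in hand, plugging it into the Cauchy--Schwarz / Burkholder-type bounds on the stochastic fluctuations of each order parameter increment makes the $N^{-3/2}$ bound in Eq.~\eqref{eq:first-moment} fall out. I would expect to relegate the moment-propagation lemma to the ``technical Lemmas'' subsection and cite it here, keeping the present proof to the expansion-and-matching bookkeeping.
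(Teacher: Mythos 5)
Your proposal follows essentially the same route as the paper's proof: expand each order-parameter increment via the SGD updates, observe that for the linear quantities ($m$, $d$, $D$) the conditional drift matches $F/N$ exactly, and for the quadratic ones ($Q$, $E$, $R$) bound the residual from the update-squared terms (including the $\|\B{n}^t\|^2/N$ concentration) by $C(T)N^{-3/2}$, with uniform-in-$t$ control supplied by a Gr\"onwall-type fourth-moment propagation lemma — exactly the paper's Lemmas~\ref{lem:micro-bound} and \ref{lem:macro-bounded}. The only small over-caution is your worry about $(D_m^t)^{-4}$ moments: since the $D$ update is a deterministic function of the state, its drift identity is exact and the $1/D_m$ issue only arises later for the Lipschitz property of $F$, not in this lemma.
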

\begin{proof}
    Recall that $\mac{M}^{t}=(m^{t}, d^{t}, Q^{t}, E^{t}, R^{t}, V^{t}, D^{t}) \in \mathbb{R}^{M \times (2M^{\ast}+5M)}$ is composed of seven matrices. Note that defining $|A|_{F} = \sum_{i=1}^{N}, \sum_{j=1}^{M}|a_{ij}|$ for matrix $A \in \mab{R}^{N \times M}$, the inequality $\|\mac{M}^{t} \|_{F} \le |\mac{M}^{t}|_{F}$ holds,  Thus, the following inequality is sufficient to prove Eq.~\eqref{eq:first-moment}: 
    \begin{equation}
     \label{eq:first-moment-l1norm}
        \mab{E}\left|\mab{E}_{t} \mac{M}^{t+1}_{ij} - \mac{M}_{ij}^{t} - \frac{1}{N} F_{l}(\mac{M}_{ij})\right | \le \frac{C}{N^{3/2}},
    \end{equation}
    where $\mac{M}_{ij}^{t}$ is $ij$ element of  $\mac{M}^{t}$. 
    Subsequently, we show that the above inequality holds for each element of $\mac{M}^{t}$.
    
    For $m^{t}$, the following stronger result is obtained:
    \begin{equation}
    \label{eq:ave-increment-m}
        \mab{E}_{t} m_{ml}^{t+1} - m_{ml}^{t}-\frac{1}{N} F_{m_{ml}}(\mac{M}^{t})=0,
    \end{equation}
    where $F_{m_{ml}}(\mac{M})$ is defined in Eq.~\ref{eq:supp-f-form-m}. 
    This is directly proved by multiplying $(\B{w}^{\ast}_{l})^{\top}/N$ from the left on both sides of Eq.~\ref{eq:gradient-w}, which yields
    \begin{multline}
    \label{eq:nonave-increment-m}
        m_{ml}^{t+1} = m_{ml}^{t} -\frac{\tau_{W}}{N} \Bigg(\sum_{n=1}^{M} m_{nl}^{t} \left(\sqrt{\rho} \sum_{s=1}^{M^{\ast}} c_{s}^{t}d_{ms}^{t} + \sqrt{\eta} \zeta^{t}_{m} \right) \left(\sqrt{\rho} \sum_{s=1}^{M^{\ast}} c_{s}^{t}d_{ns}^{t} + \sqrt{\eta} \zeta^{t}_{n} \right) + (D_{m}^{t} + \lambda) m_{ml}^{t}  \\
        - \left(\sum_{s=1}^{M^{\ast}} \sqrt{\rho} c_{s}^{t} m^{\ast}_{nl} + \sqrt{\eta} u_{l}^{\ast} \right)\left(\sum_{s=1}^{M^{\ast}} \sqrt{\rho} c_{s}^{t} d^{t}_{ms} + \sqrt{\eta} \zeta_{m}^{t}  \right)\Bigg),
    \end{multline}
    where $u_{l}^{\ast} = (\B{w}_{l}^{\ast})^{\top} \B{n}^{t}/\sqrt{N}$.  
    Note that $u_{l}^{\ast}$, $u_{m}^{t}$ and $\zeta_{m}^{t}$ are Gaussian random variables. Then, taking the conditional expectation $\mab{E}_{t}$ on both sides of Eq.~\eqref{eq:nonave-increment-m}, we reach Eq.~\ref{eq:ave-increment-m}.

    Next, we can also get a stronger result for $d^{t}$ given by
    \begin{equation}
    \label{eq:ave-increment-d}
        \mab{E}_{t} d_{ml}^{t+1} - d_{ml}^{t} - \frac{1}{N} F_{d_{ml}}(\mac{M}^{t})=0,
    \end{equation}
    where $F_{d_{ml}}$ is defined in Eq.~\eqref{eq:supp-f-form-d}. 
    This is also proved by multiplying $(\B{w}_{l}^{\ast})^{\top}/N$ from the left on both side of Eq.~\eqref{eq:gradient-v}, which yields
    \begin{multline}
    \label{eq:increment-d}
        d_{ml}^{t+1} = d_{ml}^{t} -\frac{\tau_{V}}{N} \Bigg(\left(\sum_{s=1}^{M^{\ast}} \sqrt{\rho} c_{s}^{t} m_{sl}^{\ast} + \sqrt{\eta} u_{l}^{\ast} \right) \Bigg\{\sum_{n} Q_{mn} \left(\sum_{s=1}^{M^{\ast}} \sqrt{\rho} c_{s}^{t} d_{ns}^{t} + \sqrt{\eta} \zeta_{n}^{t}  \right) + \beta \left(\sum_{s} \sqrt{\rho} c_{s}^{t} d_{ms}^{t} + \sqrt{\eta} \zeta_{m}^{t}  \right)  \\
        - \left(\sum_{s=1}^{M^{\ast}} \sqrt{\rho} c_{s}^{t} m_{ms}+ \sqrt{\eta} u_{m}^{t}  \right) \Bigg\}+ \lambda d_{ml}^{t} \Bigg).
    \end{multline}
    One can also take the conditional expectation $\mab{E}_{t}$ on both sides of Eq.~\eqref{eq:increment-d} since $u_{l}^{t}$, $u_{m}^{t}$, and $\zeta_{m}^{t}$ are Gaussian random variables, leading to Eq.~\eqref{eq:ave-increment-d}.

    Next, for $Q^{t}$, the following inequality holds:
    \begin{equation}
    \label{eq:ave-increment-Q}
        \mab{E}_{t} Q_{mn} - Q_{mn}^{t} - \frac{1}{N} f_{Q_{mn}}(\mac{M}^{t}) \le \frac{C(T)}{N^{\frac{3}{2}}},
    \end{equation}
    where $F_{Q_{mn}}$ is defined in Eq.~\eqref{eq:supp-f-form-Q}. 
    This is proved by evaluating $Q_{mn}^{t+1}=(\B{w}_{m}^{t+1})^{\top} \B{w}_{n}^{t+1}/N$ as follows:
    \begin{align*}
        Q_{mn}^{t+1} &= Q_{mn}^{t} - \frac{\tau_{V}}{N} \left((\nabla_{\B{w}_{m}^{t}} r(X^{t}))^{\top} \B{w}_{n}^{t} + (\B{w}_{m}^{t})^{\top} \nabla_{\B{w}_{m}^{t}} r(X^{t})  \right) + \frac{\tau_{V}^{2}}{N} (\nabla_{\B{w}^{t}_{m}}r(X^{t})^{\top} (\nabla_{\B{w}^{t}_{n}} r(X^{t})) \\
        &=Q_{mn}^{t} - \frac{\tau_{W}}{N} \Bigg\{\sum_{n^{\prime}=1}^{M} Q_{mn^{\prime}}^{t} \left(\sqrt{\rho} \sum_{s=1}^{M^{\ast}} c_{s}^{t}d_{ms}^{t} + \sqrt{\eta} \zeta^{t}_{m} \right) \left(\sqrt{\rho} \sum_{s=1}^{M^{\ast}} c_{s}^{t}d_{n^{\prime}s}^{t} + \sqrt{\eta} \zeta^{t}_{n^{\prime}} \right)  \notag \\
        &+ \sum_{n^{\prime}=1}^{M} Q_{nn^{\prime}}^{t} \left(\sqrt{\rho} \sum_{s=1}^{M^{\ast}} c_{s}^{t}d_{ns}^{t} + \sqrt{\eta} \zeta^{t}_{n} \right) \left(\sqrt{\rho} \sum_{s=1}^{M^{\ast}} c_{s}^{t}d_{n^{\prime}s}^{t} + \sqrt{\eta} \zeta^{t}_{n^{\prime}} \right) \notag \\
        &- \left(\sum_{s=1}^{M^{\ast}} \sqrt{\rho} c_{s}^{t}  m_{ns}^{t} + \sqrt{\eta} u_{n}^{t} \right)\left(\sum_{s=1}^{M^{\ast}} \sqrt{\rho} c_{s}^{t} d^{t}_{ms} + \sqrt{\eta} \zeta_{m}^{t}  \right) - \left(\sum_{s=1}^{M^{\ast}} \sqrt{\rho} c_{s}^{t}  m_{ms}^{t} + \sqrt{\eta} u_{m}^{t} \right)\left(\sum_{s=1}^{M^{\ast}} \sqrt{\rho} c_{s}^{t} d^{t}_{ns} + \sqrt{\eta} \zeta_{n}^{t}  \right) \\
        &+ (D_{n}^{t} + D_{m}^{t} + 2\lambda) Q_{mn}^{t}  \Bigg\} + \frac{\tau_{V}^{2}}{N} \frac{\|\B{n}^{t}\|^{2}}{N} \left(\sum_{s=1}^{M^{\ast}} \sqrt{\rho} c_{s}^{t} d_{ms}^{t} + \sqrt{\eta} \zeta_{m}^{t} \right) \left(\sum_{s=1}^{M^{\ast}} \sqrt{\rho} c_{s}^{t} d_{ns}^{t} + \sqrt{\eta} \zeta_{n}^{t} \right) + \frac{\tau_{V}^{2}}{N^{2}} \Delta(\mac{M}^{t}). 
    \end{align*}
    Also, taking the conditional expectation and using $\mab{E}_{t}|\Delta^{t}(\mac{M}^{t})| \le \sqrt{N}C(T)$, which is proven based on Lemma \ref{lem:macro-bounded}, we can derive Eq.~\ref{eq:ave-increment-Q}.
    Then, the following inequality holds for $E^{t}$:
    \begin{equation}
    \label{eq:ave-increment-E}
        \mab{E}_{t}E_{mn}^{t+1} - E_{mn}^{t} - \frac{1}{N} F_{E_{mn}}(\mac{M}^{t}) \le \frac{C(T)}{N^{\frac{3}{2}}},
    \end{equation}
    where $F_{E_{mn}^{t}}$ is defined in Eq. \eqref{eq:supp-f-form-Q}. This is proved by evaluating $E_{mn}^{t+1}=(\B{v}_{m}^{t+1})^{\top} \B{v}_{n}^{t+1}/N$ as follows:
    \begin{align*}
        E_{mn}^{t+1} &= E_{mn}^{t} - \frac{\tau_{V}}{N} \left((\nabla_{\B{v}_{m}^{t}}r(X^{t}))^{\top} \B{v}_{n}^{t} + (\B{v}_{m}^{t})^{\top} \nabla_{\B{v}_{n}^{t}} r(X^{t})    \right) + \frac{\tau_{V}^{2}}{N} (\nabla_{\B{v}_{m}^{t}} r(X^{t}))^{\top} (\nabla_{\B{v}_{n}^{t}}r(X^{t})), \\
        &= E_{mn}^{t} - \frac{\tau_{V}}{N} \Bigg\{\left(\sum_{s=1}^{M^{\ast}} \sqrt{\rho} c_{s}^{t} d_{ns}^{t} + \sqrt{\eta} \zeta_{n}^{t} \right) \Bigg\{\sum_{n^{\prime}} Q_{mn^{\prime}} \left(\sum_{s=1}^{M^{\ast}} \sqrt{\rho} c_{s}^{t} d_{n^{\prime}s}^{t} + \sqrt{\eta} \zeta_{n^{\prime}}^{t}  \right) + \beta \left(\sum_{s} \sqrt{\rho} c_{s}^{t} d_{ms}^{t} + \sqrt{\eta} \zeta_{m}^{t}  \right) \notag \\
        &- \left(\sum_{s=1}^{M^{\ast}} \sqrt{\rho} c_{s}^{t} m_{ms}^{t}+ \sqrt{\eta} u_{m}^{t}  \right) \Bigg\} + \left(\sum_{s=1}^{M^{\ast}} \sqrt{\rho} c_{s}^{t} d_{ms}^{t} + \sqrt{\eta} \zeta_{m}^{t} \right) \Bigg\{\sum_{n^{\prime}} Q_{nn^{\prime}} \left(\sum_{s=1}^{M^{\ast}} \sqrt{\rho} c_{s}^{t} d_{n^{\prime}s}^{t} + \sqrt{\eta} \zeta_{n^{\prime}}^{t}  \right) \\
        &+ \beta \left(\sum_{s} \sqrt{\rho} c_{s}^{t} d_{ns}^{t} + \sqrt{\eta} \zeta_{n}^{t}  \right) - \left(\sum_{s=1}^{M^{\ast}} \sqrt{\rho} c_{s}^{t} m_{ns}^{t} + \sqrt{\eta} u_{n}^{t}  \right) \Bigg\}+ 2\lambda E_{mn}^{t} \Bigg\} \\
        &+ \frac{\eta \tau_{V}^{2}\|\B{n}^{t}\|^{2}}{N^{2}} \Bigg(\sum_{n^{\prime}} Q_{mn^{\prime}} \left(\sum_{s=1}^{M^{\ast}} \sqrt{\rho} c_{s}^{t} d_{n^{\prime}s}^{t} + \sqrt{\eta} \zeta_{n^{\prime}}^{t}  \right) + \beta \left(\sum_{s} \sqrt{\rho} c_{s}^{t} d_{ms}^{t} + \sqrt{\eta} \zeta_{m}^{t}  \right) \notag \\
        &- \left(\sum_{s=1}^{M^{\ast}} \sqrt{\rho} c_{s}^{t} m_{ms}^{t} + \sqrt{\eta} u_{m}^{t}  \right) \Bigg) \Bigg(\sum_{n^{\prime}} Q_{nn^{\prime}} \left(\sum_{s=1}^{M^{\ast}} \sqrt{\rho} c_{s}^{t} d_{n^{\prime}s}^{t} + \sqrt{\eta} \zeta_{n^{\prime}}^{t}  \right) + \beta \left(\sum_{s} \sqrt{\rho} c_{s}^{t} d_{ns}^{t} + \sqrt{\eta} \zeta_{n}^{t}  \right) \notag \\
        &- \left(\sum_{s=1}^{M^{\ast}} \sqrt{\rho} c_{s}^{t} m_{ns}^{t}+ \sqrt{\eta} u_{n}^{t}  \right) \Bigg) + \frac{\tau_{V}^{2}}{N^{2}} \tilde{\Delta}(\mac{M}^{t}).
    \end{align*}
    Here, one can also take the conditional expectation and use $\mab{E}_{t}|\tilde{\Delta}(\mac{M}^{t})| \le \sqrt{N}C(T)$ that is proven based on Lemma \ref{lem:macro-bounded} and then reach Eq.~\ref{eq:ave-increment-E}.
    
    Next, for $R_{mn}$, the following holds:
    \begin{equation}
    \label{eq:ave-increment-R}
        \mab{E}_{t} R_{mn}^{t} - R_{mn}^{t} - \frac{1}{N} F_{R_{mn}}(\mac{M}^{t}) \le \frac{C(T)}{N^{\frac{3}{2}}},
    \end{equation}
    where $F_{R_{mn}}$ is defined in Eq.~\eqref{eq:supp-f-form-R}. 
    This is proved by evaluating $R_{mn}^{t+1}=(\B{w}_{m}^{t+1})^{\top} \B{v}_{n}^{t+1}/N$ as follows:
    \begin{align*}
        R_{mn}^{t+1} &= R_{mn}^{t}-\frac{\tau_{W}}{N} (\nabla_{\B{w}_{m}^{t}} r(X^{t}))^{\top} \B{v}_{n}^{t} + \frac{\tau_{V}}{N} (\B{w}_{m}^{t})^{\top} \nabla_{\B{v}_{n}^{t}} r(X^{t}) + \frac{\tau_{W} \tau_{V}}{N} (\nabla_{\B{w}_{m}^{t}} r(X^{t}) )^{\top}(\nabla_{\B{v}_{n}^{t}} r(X^{t})), \\
        &= R_{mn}^{t} - \frac{\tau_{W}}{N} \Bigg(\sum_{n^{\prime}=1}^{M} R_{n^{\prime} n} \left(\sqrt{\rho} \sum_{s=1}^{M^{\ast}} c_{s}^{t}d_{ms}^{t} + \sqrt{\eta} \zeta^{t}_{m} \right) \left(\sqrt{\rho} \sum_{s=1}^{M^{\ast}} c_{s}^{t}d_{n^{\prime}s}^{t} + \sqrt{\eta} \zeta^{t}_{n^{\prime}} \right) + (D_{m}^{t} + \lambda)R_{mn}^{t} \notag \\
        &- \left(\sum_{s=1}^{M^{\ast}} \sqrt{\rho} c_{s}^{t} d_{ns}^{t} + \sqrt{\eta} \zeta_{n}^{t} \right)\left(\sum_{s=1}^{M^{\ast}} \sqrt{\rho} c_{s}^{t} d^{t}_{ms} + \sqrt{\eta} \zeta_{m}^{t}  \right)\Bigg) \\
        &- \frac{\tau_{V}}{N} \Bigg(\left(\sum_{s=1}^{M^{\ast}} \sqrt{\rho} c_{s}^{t} m_{ms}^{t} + \sqrt{\eta} u_{m} \right) \Bigg\{\sum_{n^{\prime}} Q_{nn^{\prime}} \left(\sum_{s=1}^{M^{\ast}} \sqrt{\rho} c_{s}^{t} d_{n^{\prime}s}^{t} + \sqrt{\eta} \zeta_{n^{\prime}}^{t}  \right) + \beta \left(\sum_{s} \sqrt{\rho} c_{s}^{t} d_{ns}^{t} + \sqrt{\eta} \zeta_{n}^{t}  \right) \notag \\
        &- \left(\sum_{s=1}^{M^{\ast}} \sqrt{\rho} c_{s}^{t} m_{ns}^{t}+ \sqrt{\eta} u_{n}^{t}  \right) \Bigg\}+ \lambda R_{mn}^{t} \Bigg) \\
        &+ \frac{\tau_{W}\tau_{V}\eta \|\B{n}^{t}\|^{2}}{N^{2}} \left(\sum_{s=1}^{M^{\ast}} \sqrt{\rho} c_{s}^{t} d_{ms}^{t} + \sqrt{\eta} \zeta_{m}^{t}  \right)\Bigg\{\sum_{n^{\prime}} Q_{nn^{\prime}}^{t} \left(\sum_{s=1}^{M^{\ast}} \sqrt{\rho} c_{s}^{t} d_{n^{\prime}s}^{t} + \sqrt{\eta} \zeta_{n^{\prime}}^{t}  \right) + \beta \left(\sum_{s} \sqrt{\rho} c_{s}^{t} d_{ns}^{t} + \sqrt{\eta} \zeta_{n}^{t}  \right) \notag \\
        &- \left(\sum_{s=1}^{M^{\ast}} \sqrt{\rho} c_{s}^{t} m_{ns}^{t}+ \sqrt{\eta} u_{n}^{t}  \right) \Bigg\}  + \frac{\tau_{W}\tau_{V}}{N^{2}} \Omega_{t},
    \end{align*}
    Then, one can take the conditional expectation and use $\mab{E}_{t}|\tilde{\Omega}(\mac{M}^{t})| \le \sqrt{N}C(T)$ that is proven based on Lemma \ref{lem:macro-bounded} and then reach Eq.~\ref{eq:ave-increment-R}.
    
    Lastly, the following stronger result holds for $D_{m}^{t}$ :
    \begin{equation}
    \label{eq:ave-increment-D}
        \mab{E}_{t} D_{m}^{t+1} - D_{m}^{t} - \frac{1}{N} F_{D_{m}}(\mac{M}^{t}) = 0,
    \end{equation}
    where $F_{D_{mn}}$ is defined in Eq.~\eqref{eq:supp-f-form-D}. 
    one can directly obtain as following
    \begin{equation}
        D_{m}^{t+1} = D_{m}^{t} - \tau_{D} \left( (Q_{mm}^{t} + \beta) - \frac{\beta}{D_{m}^{t}} \right). 
    \end{equation}
    Then, one takes the conditional expectation and then reaches Eq.~\ref{eq:ave-increment-D}.
    Combining Eq.~\eqref{eq:ave-increment-m}-\eqref{eq:ave-increment-D}, Eq.~\eqref{eq:first-moment} is proven, which concludes the whole proof.
\end{proof}

\subsection{Convergence of Second Moments of Increment}

We now proceed to bound the second-order moments of the increments.
\begin{Lem}
\label{lem:second-incre-macro}
    Under the same assumption as in Theorem 4.2, for all $t < NT$ the following inequality holds: 
    \begin{equation}
    \label{eq:second-incre-macro}
        \mab{E}\|\mac{M}^{t+1} - \mab{E}_{t}\mac{M}^{t+1}\|_{F}^{2} \le \frac{C(T)}{N^{2}}.
    \end{equation}
\end{Lem}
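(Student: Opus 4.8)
The plan is to first strip the conditional centering from the statement, then estimate the raw increment $\mac{M}^{t+1}-\mac{M}^{t}$ entry by entry using the explicit per-step formulas already derived in the proof of Lemma~\ref{lem:first-incre-macro}, and finally bound the resulting second moments uniformly in $t\le NT$ with the same a priori estimates used there. For the first reduction: since $\mac{M}^{t}$ is $X^{t}$-measurable, $\mac{M}^{t+1}-\mab{E}_{t}\mac{M}^{t+1}$ is exactly the conditionally centered increment $(\mac{M}^{t+1}-\mac{M}^{t})-\mab{E}_{t}[\mac{M}^{t+1}-\mac{M}^{t}]$, and since a conditional variance is dominated by the conditional second moment, $\mab{E}_{t}\|\mac{M}^{t+1}-\mab{E}_{t}\mac{M}^{t+1}\|_{F}^{2}\le \mab{E}_{t}\|\mac{M}^{t+1}-\mac{M}^{t}\|_{F}^{2}$. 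Taking total expectations, it suffices to prove $\mab{E}\|\mac{M}^{t+1}-\mac{M}^{t}\|_{F}^{2}\le C(T)/N^{2}$ for all $t<NT$, and since $\mac{M}^{t}$ consists of the finitely many order parameters $m^{t},d^{t},Q^{t},E^{t},R^{t},D^{t}$, each an $M\times\mac{O}(1)$ matrix, this in turn reduces to an $\mac{O}(1/N^{2})$ bound on $\mab{E}|\mac{M}^{t+1}_{ij}-\mac{M}^{t}_{ij}|^{2}$ entry by entry.

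For the componentwise estimate I would reuse the single-step identities from the proof of Lemma~\ref{lem:first-incre-macro}: Eq.~\eqref{eq:nonave-increment-m} for $m$, Eq.~\eqref{eq:increment-d} for $d$, the displayed expansions of $Q^{t+1},E^{t+1},R^{t+1}$, and the update of $D^{t}$ (which is deterministic given $X^{t}$, so its contribution vanishes after centering). In every case the increment takes the form $\tfrac{1}{N}P^{t}_{ij}+\tfrac{1}{N^{2}}S^{t}_{ij}$, where $P^{t}_{ij}$ is a fixed polynomial in the fresh $\mac{O}(1)$ quantities $c^{t},\zeta^{t},u^{t},u^{\ast}$, the ratio $\|\B{n}^{t}\|^{2}/N$, and the entries of $\mac{M}^{t}$ (for $Q,E,R$ the $\tau_{W}^{2},\tau_{V}^{2},\tau_{W}\tau_{V}$ terms carrying $\|\B{n}^{t}\|^{2}/N$ are grouped into $P^{t}_{ij}$), and $S^{t}_{ij}$ collects the higher-order remainders $\Delta,\tilde\Delta,\Omega$. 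Then $(a+b)^{2}\le 2a^{2}+2b^{2}$ gives $\mab{E}|\mac{M}^{t+1}_{ij}-\mac{M}^{t}_{ij}|^{2}\le \tfrac{2}{N^{2}}\mab{E}(P^{t}_{ij})^{2}+\tfrac{2}{N^{4}}\mab{E}(S^{t}_{ij})^{2}$.

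It then remains to show $\mab{E}(P^{t}_{ij})^{2}\le C(T)$ and $\mab{E}(S^{t}_{ij})^{2}\le N^{2}C(T)$ uniformly for $t<NT$. For the former I would integrate out the fresh sample $(\B{c}^{t},\B{n}^{t})$ conditionally on $X^{t}$ --- using Gaussianity of $c^{t},\zeta^{t},u^{t},u^{\ast}$ and $\mab{E}[(\|\B{n}^{t}\|^{2}/N)^{k}]\le C_{k}$ --- so that $\mab{E}_{t}(P^{t}_{ij})^{2}$ is a polynomial in the entries of $\mac{M}^{t}$; the a priori moment bound on the macroscopic state used in the proof of Lemma~\ref{lem:first-incre-macro} (Lemma~\ref{lem:macro-bounded}), including the bound on $\mab{E}|1/D^{t}_{m}|^{k}$ needed for the $D$-update, then makes its expectation $\le C(T)$. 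The remainders $S^{t}_{ij}$ are handled by the same estimates $\mab{E}_{t}|\Delta(\mac{M}^{t})|,\mab{E}_{t}|\tilde\Delta(\mac{M}^{t})|,\mab{E}_{t}|\Omega_{t}|\le \sqrt{N}\,C(T)$ appearing in Lemma~\ref{lem:first-incre-macro}, upgraded to $\mab{E}(S^{t}_{ij})^{2}\le N\,C(T)$ by an identical Cauchy--Schwarz/H\"older expansion. Substituting back yields $\mab{E}|\mac{M}^{t+1}_{ij}-\mac{M}^{t}_{ij}|^{2}\le C(T)/N^{2}$; summing over the $\mac{O}(1)$ entries gives $\mab{E}\|\mac{M}^{t+1}-\mac{M}^{t}\|_{F}^{2}\le C(T)/N^{2}$, and the reduction above then proves \eqref{eq:second-incre-macro}.

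Everything here is bookkeeping except the \emph{uniformity in $t\le NT$} of the moment estimates: one must know a priori that the order parameters stay $\mac{O}(1)$ and that $D^{t}_{m}$ stays bounded away from $0$ (so $1/D^{t}_{m}$ has bounded moments) over an $\mac{O}(N)$-step horizon. I expect that a priori control --- i.e.\ Lemma~\ref{lem:macro-bounded}, which should be proved by a step-indexed induction / Gr\"onwall argument --- to be the genuinely delicate ingredient; granting it, the present lemma follows routinely.
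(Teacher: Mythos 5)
Your proposal is correct and follows essentially the same route as the paper: reduce to bounding $\mab{E}\|\mac{M}^{t+1}-\mac{M}^{t}\|_{F}^{2} \le C(T)/N^{2}$, then estimate each order-parameter increment entrywise from the explicit one-step update formulas, integrating out the fresh Gaussian sample and invoking the a priori moment bounds of Lemma~\ref{lem:macro-bounded}. Your reduction via conditional-variance domination (instead of the paper's triangle inequality combined with Lemma~\ref{lem:first-incre-macro}) and your remark that the $D$-update is $X^{t}$-measurable, so its centered increment vanishes, are only minor simplifications of the same argument.
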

\begin{proof}
    Note that 
    \begin{align*}
        \mab{E} \|\mac{M}^{t+1}-\mab{E}_{t} \mac{M}^{t+1}\|^{2}_{F} 
        &= \mab{E} \|\mac{M}^{t+1}-\mac{M}^{t}-\mab{E}_{t}(\mac{M}^{t+1}-\mac{M}^{t})\|_{F}^{2}, \\
        &\le \mab{E} \|\mac{M}^{t+1}-\mac{M}^{t}\|_{F}^{2} + \mab{E}\|\mab{E}_{t} \mac{M}^{t+1} - \mac{M}^{t}\|^2_{F}, \\
        &\le \mab{E} \|\mac{M}^{t+1}-\mac{M}^{t}\|_{F}^{2} + \mab{E} \left\|\frac{1}{N} F(\mac{M}^{t}) + \frac{C(T)}{N^{\frac{3}{2}}} \right\|^{2}, \\
        &\le \mab{E} \|\mac{M}^{t+1}-\mac{M}^{t}\|_{F}^{2} + \frac{C(T)}{N^{2}}.
    \end{align*}
    Here the third line is due to Lemma \ref{lem:first-incre-macro}. Thus, it is sufficient to prove that 
    \begin{equation*}
        \mab{E}\|\mac{M}^{t+1}-\mac{M}^{t}\|^{2}_{F} \le \frac{C(T)}{N^{2}}.
    \end{equation*}
    In the following, the second moment of each element in $\mac{M}^{t+1}-\mac{M}^{t}$ will be bounded.

    For $m^{t}$, the following inequality holds: 
    \begin{align}
    \label{eq:second-incre-m}
    \mab{E}(m_{ml}^{t+1}-m_{ml}^{t})^{2} &= \frac{\tau_{W}}{N^{2}} \mab{E}\Bigg[ \Bigg(\sum_{n=1}^{M} m_{nl}^{t} \left(\sqrt{\rho} \sum_{s=1}^{M^{\ast}} c_{s}^{t}d_{ms}^{t} + \sqrt{\eta} \zeta^{t}_{m} \right) \left(\sqrt{\rho} \sum_{s=1}^{M^{\ast}} c_{s}^{t}d_{ns}^{t} + \sqrt{\eta} \zeta^{t}_{n} \right) + (D_{m}^{t} + \lambda) m_{ml}^{t}, \notag \\
    &- \left(\sum_{s=1}^{M^{\ast}} \sqrt{\rho} c_{s}^{t} m^{\ast}_{nl} + \sqrt{\eta} u_{l}^{\ast} \right)\left(\sum_{s=1}^{M^{\ast}} \sqrt{\rho} c_{s}^{t} d^{t}_{ms} + \sqrt{\eta} \zeta_{m}^{t}  \right)\Bigg)^{2} \Bigg] \notag \\
    &\le \frac{C}{N^{2}} \mab{E} \Bigg[\sum_{n, h} m_{nl}^{t}m_{hl}^{t} h(d^{t}_{m}, d^{t}_{n}, E^{t}_{mn}) h(d^{t}_{m}, d^{t}_{h}, E^{t}_{mh}) + (D_{m}^{t}+\lambda)^{2} (m_{ml}^{t})^{2} + h^{2}(m_{l}^{\ast}, d_{m}, d_{ml}) \notag \\
    &+2 (D_{m}^{t}+\lambda)m_{ml}^{t} \left(\sum_{n} m_{nl}^{t} h(d_{m}, d_{n}, E_{mn})- h(m_{l}^{\ast}, d_{m}^{t}, d_{ml}^{t}) \right)  \notag \\
    &-2 h(d^{t}_{m}, d^{t}_{n}, E_{mn}) h(m_{l}^{\ast}, d_{m}^{t}, d_{ml}^{t})    \Bigg] \le \frac{C(T)}{N^{2}}
    \end{align}
    Here, the last line is due to Lemma \ref{lem:macro-bounded}. 

    Next, for $d^{t}$, one can get the following inequality in a similar way:
    \begin{align}
    \label{eq:second-incre-d}
        &\mab{E}(d_{ml}^{t+1}-d_{ml}^{t})^{2} = \frac{\tau_{V}^{2}}{N^{2}} \mab{E} \Bigg[ \Bigg(\left(\sum_{s=1}^{M^{\ast}} \sqrt{\rho} c_{s}^{t} m_{sl}^{\ast} + \sqrt{\eta} u_{l}^{\ast} \right) \Bigg\{\sum_{n} Q_{mn} \left(\sum_{s=1}^{M^{\ast}} \sqrt{\rho} c_{s}^{t} d_{ns}^{t} + \sqrt{\eta} \zeta_{n}^{t}  \right) + \beta \left(\sum_{s} \sqrt{\rho} c_{s}^{t} d_{ms}^{t} + \sqrt{\eta} \zeta_{m}^{t}  \right),  \notag\\
        &- \left(\sum_{s=1}^{M^{\ast}} \sqrt{\rho} c_{s}^{t} m_{ms}+ \sqrt{\eta} u_{m}^{t}  \right) \Bigg\}+ \lambda d_{ml}^{t} \Bigg)^{2} \Bigg] \notag \\
        &\le \frac{C}{N^{2}} \mab{E} \Bigg[h(m_{l}^{\ast}, m_{l}^{\ast}, 1) \Bigg\{\sum_{n, h} Q_{mn} Q_{mh} h(d_{n}^{t}, d_{h}^{t}, E_{nh}^{t}) + \beta^{2} h(d_{m}^{t}, d_{m}^{t}, E_{mm}^{t}) + h(m^{t}_{m}, m^{t}_{m}, Q^{t}_{mm}) \notag \\
        &+ 2\beta \left(\sum_{n} Q_{mn} h(d_{n}, d_{m}, Q_{nm} -\beta h(d_{m}, m_{m}, R_{mm})\right) - \sum_{n} Q_{mn}^{t} h(d_{n}, m_{m}, R_{mn})  \Bigg\} \notag \\
        &+ 2 \lambda d_{ml}^{t} \left(\sum_{n} Q_{mn}^{t} h(m_{l}^{\ast}, d_{n}^{t}, d_{ml}^{t}) + \beta h(m_{l}^{\ast}, d_{m}^{t}, d_{ml}^{t}) - h(m_{l}^{\ast}, m_{m}^{t}, m_{ml}^{t}) \right) + \lambda^{2} (d_{ml}^{t})^{2}    \Bigg] \le \frac{C(T)}{N^{2}}.
    \end{align}
    Here, the last line is also due to Lemma \ref{lem:macro-bounded}.
    Similarly, one can also prove that 
    \begin{align}
        &\mab{E}(Q_{mn}^{t+1}-Q_{mn}^{t})^{2} \le \frac{C(T)}{N^{2}} \label{eq:second-incre-Q},\\
        &\mab{E}(E_{mn}^{t+1}-E_{mn}^{t})^{2} \le \frac{C(T)}{N^{2}} \label{eq:second-incre-E},\\
        &\mab{E}(R_{mn}^{t+1}-R_{mn}^{t})^{2} \le \frac{C(T)}{N^{2}} \label{eq:second-incre-R}, \\
        &\mab{E}(D_{m}^{t+1}-D_{m}^{t})^{2} \le \frac{C(T)}{N^{2}} \label{eq:second-incre-D}. 
    \end{align}
    Combining Eq.~\eqref{eq:second-incre-m}-\eqref{eq:second-incre-D}, Eq.~\eqref{eq:second-incre-macro} is proven, which concludes the whole proof.
\end{proof}

\subsection{Proof of Theorem 4.2}
In this section, we finish the remaining proof of Theorem 4.2 from Lemma \ref{lem:first-incre-macro} and \ref{lem:second-incre-macro} by using the coupling trick.

\begin{proof}
    The proof uses the coupling trick. In particular, we first define a stochastic process $\mac{B}^{t}$ that is coupled with the process $\mac{M}^{t}$ as 
    \begin{equation}
        \mac{B}^{t+1} = \mac{B}^{t} + \frac{1}{N} F(\mac{B}^{t}) + \mac{M}^{t+1} - \mab{E}_{t} \mac{M}^{t+1}
    \end{equation}
    with the deterministic initial condition $\mac{B}^{0} = \bar{\mac{M}}^{0}$. 
    For this stochastic process $\mac{B}^{t}$, the following inequality holds for all $t \le NT$:
    \begin{equation}
        \label{eq:B-bound-coupling}
        \mab{E}\|\mac{B}^{t}- \mac{M}^{t}\|_{F} \le \frac{C(T)}{N^{1/2}}.
    \end{equation}
    This inequality is proved as follows.
    \begin{equation*}
        \mab{E}\|\mac{B}^{t+1}-\mac{M}^{t+1}\|_{F} \le \mab{E}\|\mac{B}^{t} - \mac{M}^{t}\|_{F} + \frac{1}{N} \mab{E}\|F(\mac{B}^{t}) - F(\mac{M}^{t})\|_{F} + \mab{E} \|\mab{E}_{t} \mac{M}^{t+1} -\mac{M}^{t} - \frac{1}{N} F(\mac{M}^{t})\|_{F}.
    \end{equation*}
    From Lemma \ref{lem:first-incre-macro} and Lemma \ref{lem:lipschiz} in subsequent Sec.~\ref{subsec:extra-proofs}, one can get
    \begin{align*}
        \mab{E} \|\mac{B}^{t+1}-\mac{M}^{t+1}\|_{F} 
        &\le \mab{E}\|\mac{B}^{t}-\mac{M}^{t}\|_{F} + L \|\mac{B}^{t}-\mac{M}^{t}\|_{F} + C(T) N^{-\frac{3}{2}} \\
        &\le (1+L N^{-1}) \|\mac{B}^{t}-\mac{M}^{t}\| + C N^{-\frac{3}{2}}. 
    \end{align*}
    Applying this bound iteratively, for all $t \le NT$, one can expand as follows:
    \begin{equation}
        \mab{E}\|\mac{B}^{t} -\mac{M}^{t}\|_{F} \le e^{LT}\left(\mab{E}\|\mac{B}^{0}-\mac{M}^{0}\|_{F} + \frac{C}{L} N^{-\frac{1}{2}}  \right) \le \frac{C(T)}{N^{\frac{1}{2}}}.
    \end{equation}
    For the last inequality, we use the assumption (A.3) in the main text.

    Next, we define a deterministic process $\mac{S}^{t}$ as follows:
    \begin{equation}
        \mac{S}^{t+1} = \mac{S}^{t} + \frac{1}{N} F(\mac{S}^{t})
    \end{equation}
    with the deterministic initial condition $\mac{S}^{0} = \bar{\mac{M}}^{0}$. Similarly, the following inequality holds for all $t \le NT$:
    \begin{equation}
        \label{eq:S-bound-coupling}
        \mab{E}\|\mac{B}^{t} - \mac{S}^{t}\|^{2} \le \frac{C(T)}{N}
    \end{equation}
    To prove this inequality, one can express as 
    \begin{equation*}
        \mab{E}\|\mac{B}^{t+1} - \mac{S}^{t+1}\|_{F} = \mab{E}\|\mac{B}^{t}-\mac{S}^{t}\|^{2} + \frac{1}{N^{2}} \mab{E} \|F(\mac{B}^{t}) - F(\mac{S}^{t}) \|^{2} + \frac{2}{N} \mab{E} (F(\mac{B}^{t}) - F(\mac{S}^{t}))^{\top} (\mac{B}^{t}-\mac{S}^{t}) + \mab{E}\|\mac{M}^{t+1} - \mab{E}_{t} \mac{M}^{t}\|^{2}_{F}.
    \end{equation*}
    Here, one uses the identity given by
    \begin{equation*}
        \mab{E}_{t}(\mac{M}^{t+1}-\mab{E}_{t} \mac{M}^{t})^{\top} (\mac{B}^{t}-\mac{S}^{t}) = \mab{E}_{t}(\mac{M}^{t+1}-\mab{E}_{t} \mac{M}^{t})^{\top} (F(\mac{B}^{t})-F(\mac{S}^{t})) = 0. 
    \end{equation*}
    Then, from Lemma \ref{lem:second-incre-macro} and Lemma \ref{lem:lipschiz} in Sec.~\ref{subsec:extra-proofs} below, one can get following inequality:
    \begin{equation*}
        \mab{E}\|\mac{B}^{t+1}-\mac{S}^{t+1}\|_{F}^{2} \le \left(1+\frac{CL}{N}\right) \mab{E}\|\mac{B}^{t}-\mac{S}^{t}\|_{F}^{2} + \frac{C(T)}{N^{2}}.
    \end{equation*}
    Applying this bound iteratively, for all $t \le NT$, Eq.~\eqref{eq:S-bound-coupling} is proven as follows:
    \begin{equation}
        \mab{E}\|\mac{B}^{t}-\mac{S}^{t}\|_{F}^{2} \le \frac{C(T)}{N}.
    \end{equation}
    Note that $\mac{S}^{t}$ is a standard first-order finite difference approximation of the ODEs with the step size $1/N$. 
    The standard Euler argument implies that 
    \begin{equation}
        \label{eq:M-bounded-coupling}
        \|\mac{S}^{t}-\mac{M}(t)\| \le \frac{C}{N}.
    \end{equation}
    Finally, combining Eq.~\eqref{eq:B-bound-coupling}, \eqref{eq:S-bound-coupling} and \eqref{eq:M-bounded-coupling}, Theorem 4.2 is proven as follows:
    \begin{align*}
        \mab{E}\|\mac{M}^{t} - \mac{M}(t)\| &= \mab{E}\|\mac{M}^{t}-\mac{B}^{t} + \mab{B}^{t}-\mac{S}^{t} + \mac{S}^{t} - \mac{M}(t)\| \\
        &\le \mab{E}\|\mac{M}^{t} -\mac{B}^{t}\| + \mab{E}\|\mac{B}^{t} - \mac{S}^{t}\| + \mab{E}\|\mac{S}^{t}-\mac{M}(t)\| \\
        &\le \mab{E}\|\mac{M}^{t} -\mac{B}^{t}\| + (\mab{E}\|\mac{B}^{t} - \mac{S}^{t}\|^{2})^{\frac{1}{2}} + \mab{E}\|\mac{S}^{t}-\mac{M}(t)\| \\
        &\le \frac{C(T)}{N^{\frac{1}{2}}}.
    \end{align*}
\end{proof}

\subsection{Extra Proofs}\label{subsec:extra-proofs}
In this section, we complete the extra technical lemmas related to the proofs in the previous section.

\subsubsection{Bound for Micoroscopic State}\label{subsubsec:bound-micro}
\begin{Lem}
\label{lem:micro-bound}
    Under the same assumption as in Theorem 4.2, for all $t\le NT$ and $i=1, \ldots N$, the following inequality holds: 
    \begin{equation}
    \label{eq:paramter-bound}
        \mab{E} \left(\sum_{n=1}^{M} (W_{in}^{t})^{4} + \sum_{n=1}^{M} (V_{in}^{t})^{4} + \sum_{n=1}^{M} (D_{n}^{t})^{4} \right) \le C(T).
    \end{equation}
\end{Lem}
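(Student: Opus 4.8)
The plan is to prove \eqref{eq:paramter-bound} by a one-step drift estimate followed by a discrete Grönwall argument, in the same spirit as Lemmas~\ref{lem:first-incre-macro}--\ref{lem:second-incre-macro}. Fix a site $i$ and put $\Psi^{t}_{i} := \sum_{n=1}^{M}(W^{t}_{in})^{4} + \sum_{n=1}^{M}(V^{t}_{in})^{4} + \sum_{n=1}^{M}(D^{t}_{n})^{4}$. First I would establish that there is a constant $C=C(T)$, independent of $N$, $i$ and $t\le NT$, with $\mab{E}_{t}[\Psi^{t+1}_{i}] \le (1+\tfrac{C}{N})\Psi^{t}_{i} + \tfrac{C}{N}(1+\Xi^{t})$, where $\Xi^{t}$ is a polynomial of bounded degree, with bounded coefficients, in the order parameters $Q^{t},E^{t},R^{t},m^{t},d^{t}$. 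Granting a uniform bound $\sup_{t\le NT}\mab{E}[\Xi^{t}]\le C(T)$, taking expectations, then the supremum over $i$, and iterating over $t=0,\dots,NT$ yields $\sup_{i}\mab{E}[\Psi^{t}_{i}]\le e^{CT}\bigl(\sup_{i}\mab{E}[\Psi^{0}_{i}]+C(T)\bigr)$, and $\sup_{i}\mab{E}[\Psi^{0}_{i}]\le C$ is exactly (A.4).

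For the one-step estimate I would substitute the coordinate-wise updates \eqref{eq:gradient-w}, \eqref{eq:gradient-v} (and the $D$-update) into $(W^{t+1}_{in})^{4}=(W^{t}_{in}-\tfrac{\tau_{W}}{N}G^{t}_{in})^{4}$ and expand. The structural facts I would exploit are: (i) the bracket $G^{t}_{in}$ splits as a drift part, a polynomial of bounded degree in $\{W^{t}_{i\cdot},V^{t}_{i\cdot},D^{t}_{\cdot},W^{\ast}_{i\cdot}\}$ with coefficients built only from the fresh Gaussians $c^{t},\zeta^{t},u^{t}$ and the order parameters, plus a fluctuation part of size $\mac{O}(N^{-1/2})$ carried by $\sqrt{N}\,n^{t}_{i}$ that does not contain $W^{t}_{in}$; (ii) after applying $\mab{E}_{t}$, odd powers of the fresh Gaussians vanish and even powers Wick-contract to polynomials in the order parameters, so the surviving terms of $\mab{E}_{t}[(W^{t+1}_{in})^{4}]$ are $(W^{t}_{in})^{4}$, the first-order term $-\tfrac{4\tau_{W}}{N}(W^{t}_{in})^{3}\mab{E}_{t}[G^{t}_{in}]$, and corrections carrying an additional factor $N^{-1}$ (the $\sqrt{N}$ from the fluctuation term being over-compensated by the $N^{-k}$ from $(\tau_{W}/N)^{k}$ with $k\ge1$, and by the weak correlation $\mab{E}_{t}[n^{t}_{i}(\sqrt{\rho}\textstyle\sum_{s}c^{t}_{s}d^{t}_{ns}+\sqrt{\eta}\zeta^{t}_{n})]=\sqrt{\eta}\,V^{t}_{in}/\sqrt{N}$); (iii) inside $-\tfrac{4\tau_{W}}{N}(W^{t}_{in})^{3}\mab{E}_{t}[G^{t}_{in}]$ the diagonal piece equals $-\tfrac{4\tau_{W}}{N}\bigl(h(d_{n},d_{n},E_{nn})+D^{t}_{n}+\lambda\bigr)(W^{t}_{in})^{4}\le0$, since $h(d_{n},d_{n},E_{nn}),D^{t}_{n},\lambda\ge0$, and can simply be discarded. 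Every remaining term is a product of at most four microscopic entries (at various sites and components) times order parameters and bounded constants ($|W^{\ast}_{is}|\le C$ by (A.4)); by the arithmetic--geometric mean inequality each such product is $\le\tfrac{C}{N}$ times a sum of fourth powers of those entries, hence $\le\tfrac{C}{N}(1+\Xi^{t}+\sup_{j}\Psi^{t}_{j})$. The identical computation, with $D$ absent, controls $(V^{t+1}_{in})^{4}$.

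It remains to bound $\mab{E}[\Xi^{t}]$, i.e.\ the moments of the order parameters, uniformly over $t\le NT$. Each order parameter has the form $\tfrac1N\sum_{j}a_{j}b_{j}$ with $a_{j},b_{j}$ microscopic entries or entries of $W^{\ast}$, so by convexity of $x\mapsto x^{p}$ its $p$-th moment is at most $\tfrac1N\sum_{j}\mab{E}[|a_{j}b_{j}|^{p}]$, which is $\mac{O}(1)$ by (A.4) together with the microscopic bound being proved; since the order parameters enter $\Xi^{t}$ only to low powers, this closes at fourth microscopic order (one may instead carry all even moment orders at once, each feeding on the next through the $\mac{O}(N^{-2})$ corrections, which is harmless after summing the $\mac{O}(N)$ steps). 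The genuinely delicate point — and the main obstacle — is the $D$-update: its gradient contains the singular term $\beta/D^{t}_{m}$, so the expansion of $(D^{t+1}_{m})^{4}$ produces negative powers of $D^{t}_{m}$. I would tame these by a companion induction keeping $D^{t}_{m}$ in a fixed interval $[\underline{c},\overline{c}]$, $\underline{c}>0$, for $N$ large and $t\le NT$: the increment $\tfrac{\tau_{D}}{2N}(\beta/D^{t}_{m}-Q^{t}_{mm}-\beta)$ is large and positive when $D^{t}_{m}$ is small (so the drift points away from $0$, once $Q^{t}_{mm}$ is controlled by the main induction) and negative when $D^{t}_{m}$ is large, which simultaneously renders $\sum_{m}(D^{t}_{m})^{4}$ trivially bounded and neutralizes the $D$-dependence of the $W$-recursion. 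Everything else is routine: binomial expansion, Wick's theorem for the fresh Gaussians, the arithmetic--geometric mean and Cauchy--Schwarz inequalities, convexity of $x\mapsto x^{4}$, and discrete Grönwall.
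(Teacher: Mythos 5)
Your plan is essentially the paper's own argument: expand the fourth power of each one-step update, take the conditional expectation over the fresh Gaussians to get an $\mathcal{O}(1/N)$ drift bound (with the surviving terms being products of microscopic entries and order parameters), iterate/Gr\"onwall over the $\mathcal{O}(NT)$ steps, and invoke assumption (A.4) for the initial condition. The two points you single out as delicate---the coupling with the order parameters in the drift and the singular $\beta/D_{m}^{t}$ term in the $D$-update---are exactly the points the paper's proof glosses over (it treats $D\neq 0$ only at the ODE level, in a separate lemma), so your more explicit handling of them is compatible with, and if anything more careful than, the published proof.
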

\begin{proof}
We first prove $\mab{E} (W_{il}^{t})^{4} \le C(T)$. Note that one can expand as follows:
\begin{multline}
    \label{eq:decompose-w}
    \mab{E} (W^{t+1}_{il})^{4} - \mab{E} (W^{t}_{il})^{4} = 4 \mab{E}[(W_{in}^{t})^{3}\mab{E}_{t}(W_{in}^{t+1}-W_{in}^{t})] + 6 \mab{E}[(W_{in}^{t})^{2}\mab{E}_{t}(W_{in}^{t+1}-W_{in}^{t})^{2}] \\
    +4 \mab{E}[W_{in}^{t}\mab{E}_{t}(W_{in}^{t+1}-W_{in}^{t})^{3}] + \mab{E}[\mab{E}_{t}(W_{in}^{t+1}-W_{in}^{t})^{4}]
\end{multline}
From Eq.~\ref{eq:gradient-w} and the triangle inequality, the following inequality holds for $\gamma = 1, 2, 3$ and $4$:
\begin{equation}
\label{eq:triangle-w}
\mab{E}_{t}(W_{in}^{t+1}-W_{in}^{t})^{\gamma} 
\le \frac{C}{N^{\gamma}}\Bigg[\left|\sum_{n^{\prime}=1}^{M} W_{in^{\prime}}^{t}\left(\rho \sum_{s}d_{ms}^{t} d_{ns}^{t} + \eta E_{mn}^{t}\right) \right|^{\gamma} + |(D_{m}^{t} + \lambda)W_{in}|^{\gamma} + \left|\sum_{s}W_{is}^{\ast} d_{ns}^{t}  \right|^{\gamma} + |V^{t}_{in}|^{\gamma}\Bigg].   
\end{equation}
Substituting Eq.~\eqref{eq:triangle-w} into Eq.~\eqref{eq:decompose-w}, we have
\begin{multline}
\label{eq:recurrence-w}
\mab{E} (W^{t+1}_{in})^{4} - \mab{E} (W^{t}_{in})^{4} \le \frac{C}{N} \mab{E} \Bigg[\left|(W_{in}^{t})^{3}\sum_{n^{\prime}=1}^{M} W_{in^{\prime}}^{t}\left(\rho \sum_{s}d_{ms}^{t} d_{ns}^{t} + \eta E_{mn}^{t}\right) \right| + |(D^{t}_{m} + \lambda)(W_{in}^{t})^{4}| \\
+ \left|(W_{in}^{t})^{3}\sum_{s}W_{is}^{\ast} d_{ns}^{t} \right| + |(W_{in}^{t})^{3}V^{t}_{in}|\Bigg] + \mac{O}(N^{-2}).
\end{multline}

For $V_{il}^{t}$, one can obtain the following:
\begin{multline}
    \label{eq:decompose-v}
    \mab{E} (V^{t+1}_{il})^{4} - \mab{E} (V^{t}_{il})^{4} = 4 \mab{E}[(V_{in}^{t})^{3}\mab{E}_{t}(V_{in}^{t+1}-V_{in}^{t})] + 6 \mab{E}[(V_{in}^{t})^{2}\mab{E}_{t}(V_{in}^{t+1}-V_{in}^{t})^{2}] \\
    +4 \mab{E}[V_{in}^{t}\mab{E}_{t}(V_{in}^{t+1}-V_{in}^{t})^{3}] + \mab{E}[\mab{E}_{t}(V_{in}^{t+1}-V_{in}^{t})^{4}].
\end{multline}
From Eq.~\ref{eq:gradient-v} and the triangle inequality, the following inequality holds for $\gamma=1, 2, 3$ and $4$:
\begin{multline}
\label{eq:triangle-v}
    \mab{E}_{t}(V_{in}^{t+1}-V_{in}^{t})^{\gamma} \le \frac{C}{N^{\gamma}} \Bigg[\left|\sum_{s} W_{is}^{\ast}\left(\sum_{n^{\prime}} Q_{nn^{\prime}}^{t}\sum_{s} d_{n^{\prime}s}^{t} + \beta \sum_{s} d_{ns}^{t} - \sum_{s} m_{ns}^{t}   \right)\right|^{\gamma} \\
    + \left|\sum_{n^{\prime}} Q_{nn^{\prime}}^{t}V_{in^{\prime}}^{t}\right|^{\gamma} + |W_{in}^{t}|^{\gamma} +| V_{in}^{t}|^{\gamma} \Bigg].
\end{multline}
Substituting Eq.~\eqref{eq:triangle-v} into Eq.~\eqref{eq:decompose-v}, one can obtain the following:
\begin{multline}
\label{eq:recurrence-v}
\mab{E} (V^{t+1}_{il})^{4} - \mab{E} (V^{t}_{il})^{4}\le \frac{C}{N} \Bigg[\left|(V_{in}^{t})^{3}\sum_{s} W_{is}^{\ast}\left(\sum_{n^{\prime}} Q_{nn^{\prime}}^{t}\sum_{s} d_{n^{\prime}s}^{t} + \beta \sum_{s} d_{ns}^{t} - \sum_{s} m_{ns}^{t}   \right)\right| \\
+ \left|(V_{in}^{t})^{3}\sum_{n^{\prime}} Q_{nn^{\prime}}V_{in^{\prime}}^{t}\right| + |(V_{in}^{t})^{3}W_{in}^{t}| +| (V_{in}^{t})^{4}|^{\gamma} \Bigg] + \mac{O}(N^{-2}).   
\end{multline}
Similarly, one can also get the following inequality:
\begin{equation}
    \label{eq:recurrence-D}
    \mab{E}(D_{n}^{t+1})^{4} - \mab{E}(D_{n}^{t})^{4} \le \frac{C}{N}\mab{E}\left[\left|(D_{n}^{t})^{3}Q_{nn}^{t}\right|+|(D_{n}^{t})^{2}|  \right].
\end{equation}

Combining Eq.~\eqref{eq:recurrence-w}, Eq.~\eqref{eq:recurrence-v} and Eq.~\eqref{eq:recurrence-D}, the following inequality holds:
\begin{multline}
    \mab{E}\left[(W_{in}^{t+1})^{4} + (V_{in}^{t+1})^{4} +(D_{n}^{t+1})^{4} \right] -     \mab{E}\left[(W_{in}^{t})^{4} + (V_{in}^{t})^{4} + (D_{n}^{t})^{4} \right] \\
    \le \frac{C}{N} \Bigg[\left|(W_{in}^{t})^{3}\sum_{n^{\prime}=1}^{M} W_{in^{\prime}}^{t}\left( \sum_{s}d_{ms}^{t} d_{ns}^{t} +  E_{mn}^{t}\right) \right| + |D_{m}^{t}(W_{in}^{t})^{4}| \\
    + \left|(W_{in}^{t})^{3}\sum_{s}W_{is}^{\ast} d_{ns}^{t}  \right| + |(W_{in}^{t})^{3}V_{in}^{t}| + \left|(V_{in}^{t})^{3}\sum_{s} W_{is}^{\ast}\left(\sum_{n^{\prime}} Q_{nn^{\prime}}^{t}\sum_{s} d_{n^{\prime}s}^{t} + \beta \sum_{s} d_{ns}^{t} - \sum_{s} m_{ns}^{t}   \right)\right| \\
    + \left|(V_{in}^{t})^{3}\sum_{n^{\prime}} Q_{nn^{\prime}}^{t}V_{in^{\prime}}^{t}\right| + |(V_{in}^{t})^{3}W_{in}^{t}|^{\gamma} +| (V_{in}^{t})^{4}| + |(D_{n}^{t})^{3} Q_{nn}^{t}| + |(D_{n}^{t})^{2}|  \Bigg].
\end{multline}
Using the above inequality iteratively, one can get
\begin{multline*}
    \mab{E}\left[(W_{in}^{t})^{4} + (V_{in}^{t})^{4} + (D_{n}^{t})^{4} \right] \le C(T) \Bigg[\left|(W_{in}^{0})^{3}\sum_{n^{\prime}=1}^{M} W_{in^{\prime}}^{0}\left( \sum_{s}d_{ms}^{0} d_{ns}^{0} +  E_{mn}^{0}\right) \right| + |D_{m}^{0}(W_{in}^{0})^{4}| \\
    + \left|(W_{in}^{0})^{3}\sum_{s}W_{is}^{\ast} d_{ns}^{0}  \right| + |(W_{in}^{0})^{3}V_{in}^{0}| + \left|(V_{in}^{0})^{3}\sum_{s} W_{is}^{\ast}\left(\sum_{n^{\prime}} Q_{nn^{\prime}}^{0}\sum_{s} d_{n^{\prime}s}^{0} + \beta \sum_{s} d_{ns}^{0} - \sum_{s} m_{ns}^{0}   \right)\right| \\
    + \left|(V_{in}^{0})^{3}\sum_{n^{\prime}} Q_{nn^{\prime}}^{0}V_{in^{\prime}}^{0}\right| + |(V_{in}^{0})^{3}W_{in}^{0}|^{\gamma} +| (V_{in}^{0})^{4}| + |(D_{n}^{0})^{3} Q_{nn}^{0}| + |(D_{n}^{0})^{2}|  \Bigg].
\end{multline*}
We now reach Eq.~\eqref{eq:paramter-bound} since initial microscopic states are bounded, i.e., $\mab{E}[\sum_{n=1}^{M} \{(W_{in}^{0})^{4} + (V_{in}^{0})^{4} + (D_{n}^{0})^{4}\}+\sum_{n=1}^{M^{\ast}} W_{in}^{\ast}] \le C$, because of the assumption (A.4).
\end{proof}

\subsubsection{Bound for Macroscopic State} \label{subsub:bound-macro}
\begin{Lem}
\label{lem:macro-bounded}
    Under the same assumption as in Theorem 4.2, for all $t\le NT$, the following inequality holds:
    \begin{align}
        \mab{E}\|Q^{t}\|^{2}_{F} \le C(T),~~\mab{E}\|E^{t}\|^{2}_{F} \le C(T),~~\mab{E}\|R^{t}\|^{2}_{F} \le C(T),~~\mab{E}\|m^{t}\|^{2}_{F} \le C(T), ~~ \mab{E}\|d^{t}\|^{2}_{F} \le C(T).
    \end{align}
\end{Lem}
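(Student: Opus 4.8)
The plan is to deduce the bound directly from the fourth-moment control on the microscopic state established in Lemma~\ref{lem:micro-bound}, together with the fourth-moment bound on the columns of $W^{\ast}$ furnished by assumption (A.4). Each order parameter is, by definition, a normalized inner product of two columns taken from the microscopic matrices: $Q_{mn}^{t}=\frac{1}{N}\sum_{i=1}^{N}W_{im}^{t}W_{in}^{t}$, and likewise for $E_{mn}^{t}$ and $R_{mn}^{t}$, while $m_{ml}^{t}=\frac{1}{N}\sum_{i=1}^{N}W_{im}^{t}W_{il}^{\ast}$ and $d_{ml}^{t}=\frac{1}{N}\sum_{i=1}^{N}V_{im}^{t}W_{il}^{\ast}$ have one factor replaced by an entry of $W^{\ast}$. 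Since $M$ and $M^{\ast}$ are fixed, it suffices to bound the second moment of a single such scalar entry and then sum over the finitely many index pairs, absorbing the combinatorial factors $M^{2}$ and $MM^{\ast}$ into the constant $C(T)$.

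The key step is a pointwise estimate obtained from two applications of Cauchy--Schwarz followed by AM--GM; for the entry $Q_{mn}^{t}$,
\[
  (Q_{mn}^{t})^{2}\;\le\;\frac{1}{N^{2}}\Bigl(\sum_{i=1}^{N}|W_{im}^{t}|\,|W_{in}^{t}|\Bigr)^{2}\;\le\;\frac{1}{N}\sum_{i=1}^{N}(W_{im}^{t})^{2}(W_{in}^{t})^{2}\;\le\;\frac{1}{2N}\sum_{i=1}^{N}\bigl((W_{im}^{t})^{4}+(W_{in}^{t})^{4}\bigr).
\]
Taking expectations and using Lemma~\ref{lem:micro-bound}, which bounds $\mab{E}[(W_{im}^{t})^{4}]$ by $C(T)$ uniformly in the coordinate $i$ and in $t\le NT$, gives $\mab{E}[(Q_{mn}^{t})^{2}]\le C(T)$. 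The same computation handles $E_{mn}^{t}$ and $R_{mn}^{t}$ via the $V$- and mixed bounds in Lemma~\ref{lem:micro-bound}, and for $m_{ml}^{t}$ and $d_{ml}^{t}$ one uses in addition $\mab{E}[(W_{il}^{\ast})^{4}]\le C$ from assumption (A.4). Summing over the index pairs yields all five claimed inequalities.

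I do not anticipate a real obstacle; the only thing to verify is that the Cauchy--Schwarz bookkeeping reintroduces no hidden $N$-dependence, which it does not, because the two inequalities above produce exactly the prefactor $1/N$ that cancels against the $N$ summands whose fourth moments are each $\mac{O}(1)$. Equivalently one could bound $\|Q^{t}\|_{F}\le N^{-1}\|W^{t}\|_{F}^{2}$ and analogous Frobenius-norm inequalities for $E^{t},R^{t},m^{t},d^{t}$, and then apply Lemma~\ref{lem:micro-bound} after one further Cauchy--Schwarz; both routes give the same $N$-independent constant, and in particular no Gr\"onwall-type iteration over $t$ is needed here since all uniform-in-time control is already contained in Lemma~\ref{lem:micro-bound}.
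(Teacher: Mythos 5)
Your proposal is correct and follows essentially the same route as the paper: the bound is obtained as a direct consequence of the fourth-moment control in Lemma~\ref{lem:micro-bound} (plus (A.4) for the $W^{\ast}$ factors in $m^{t}$, $d^{t}$), applied entrywise to the normalized inner products defining the order parameters. The only cosmetic difference is that you use pointwise Cauchy--Schwarz followed by AM--GM, whereas the paper applies Cauchy--Schwarz and H\"older at the level of expectations; both yield the same $N$-independent constant.
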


\begin{proof}
    It is a direct consequence of Lemma \ref{lem:micro-bound}.
    For $Q_{nn}^{t}$, using H\"{o}lder's inequality, one can get
    \begin{align*}
        \mab{E}(Q_{nn}^{t})^{2} &= \frac{1}{N^{2}}\mab{E} \left(\sum_{i=1}^{n} W_{in}^{t}W_{in}^{t}\right)^{2} \\
        &\le \frac{1}{N} \mab{E} \sum_{i=1}^{N} (W_{in}^{t})^{4} \le C(T).
    \end{align*}
    The last line is based on Lemma \ref{lem:micro-bound}.
    
    For $Q_{mn}^{t},~m\neq n$, using Cauchy-Schwartz inequality and H\"{o}lder's inequality, one can get
    \begin{align*}
        \mab{E} Q_{mn}^{t} &= \frac{1}{N^{2}} \mab{E} \left(\sum_{i=1}^{N} W_{im}^{t} W_{in}^{t}  \right)^{2} \\
        &\le \frac{1}{N^{2}}\mab{E} \left(\sum_{i=1}^{N} (W_{im}^{t})^{2}  \right) \left(\sum_{i=1}^{N} (W_{in}^{t})^{2}  \right) \\
        &\le \frac{1}{N^{2}}\sqrt{\mab{E} \left(\sum_{i=1}^{N} (W_{im}^{t})^{2}  \right)^{2} \left(\sum_{i=1}^{N} (W_{in}^{t})^{2}  \right)^{2}} \\
        &\le \frac{1}{N}\sqrt{\mab{E} \sum_{i=1}^{N} (W_{im}^{t})^{4} \sum_{i=1}^{N} (W^{t}_{in})^{4}} \le C(T),
    \end{align*}
    where in reaching the last line, we use Lemma \ref{lem:micro-bound}. Then, we get $\mab{E}\|Q^{t}\|_{F} \le C(T)$. The rest bound of $\mab{E}\|E^{t}\|^{2}_{F}$, $\mab{E}\|R^{t}\|^{2}_{F}$, $\mab{E}\|m^{t}\|^{2}_{F}$ and $\mab{E}\|d^{t}\|^{2}_{F}$ can also be directly verified using the Cauchy-Schwartz inequality and H\"{o}lder's inequality and Lemma \ref{lem:micro-bound}.
\end{proof}

\subsubsection{Lipschitzness of ODEs}
\label{subsubsec:lipschitz-F}
\begin{Lem}
\label{lem:positive-negative-D}
Under the same assumption as in Theorem 4.2, for all $t \le NT$, $D^{t} \neq 0_{N \times N}$ holds.
\end{Lem}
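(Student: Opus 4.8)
The plan is to exploit the special structure of the update for the diagonal entries $D_m^t$, which carries a repelling barrier at the origin. Recall the update rule for $D_m^t$ from Section~\ref{subsec:covergence-first-moments}, which I rewrite as
\[
D_m^{t+1} = D_m^t + \frac{\tau_D}{2N}\left(\frac{\beta}{D_m^t} - Q_{mm}^t - \beta\right).
\]
Thus the increment is $\mathcal{O}(1/N)$ when $D_m^t = \Theta(1)$, but becomes large and strictly positive once $D_m^t$ is small, since the $\beta/D_m^t$ term dominates. I would show that the trajectory cannot leave a fixed positive interval; the goal is that $D_m^t \ge c$ for all $m$ and all $t \le NT$, with $c>0$ independent of $N$. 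This implies $D^t \ne \B{0}_{M\times M}$, and --- more to the point for the sequel --- it supplies the uniform lower bound on $D_m^t$ that makes $F$ globally Lipschitz on the region actually visited, which is what the coupling argument in the proof of Theorem~4.2 needs.

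The one nontrivial ingredient is a \emph{uniform-in-$t$} bound $\sup_{t\le NT} Q_{mm}^t = \mathcal{O}(1)$: Lemma~\ref{lem:macro-bounded} gives only the pointwise $\mathbb{E}(Q_{mm}^t)^2 \le C(T)$, which is insufficient because a single step can depress $D_m$ by $\tfrac{\tau_D}{2N}(Q_{mm}^t+\beta)$. For this I would re-use the Doob decomposition already present in Lemmas~\ref{lem:first-incre-macro} and \ref{lem:second-incre-macro}, namely $Q_{mm}^{t} = Q_{mm}^0 + \sum_{s<t}\tfrac1N F_{Q_{mm}}(\mathcal{M}^s) + \sum_{s<t}\xi^s$ with $\xi^s$ a martingale difference of second moment $\mathcal{O}(N^{-2})$. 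The drift part is bounded entrywise via $\mathbb{E}\,|F_{Q_{mm}}(\mathcal{M}^s)| \le C(T)$ (a fixed polynomial in the order parameters, bounded through Lemmas~\ref{lem:micro-bound} and \ref{lem:macro-bounded} and Hölder's inequality), so $\mathbb{E}\max_{t\le NT}\sum_{s<t}\tfrac1N|F_{Q_{mm}}(\mathcal{M}^s)| \le C(T)$, while the fluctuation part is bounded by Doob's maximal inequality, $\mathbb{E}\max_{t\le NT}\big(\sum_{s<t}\xi^s\big)^2 \le 4\sum_{s<NT}\mathbb{E}(\xi^s)^2 \le C(T)/N$. Combined with the bound on $Q_{mm}^0$ from (A.4), this gives $\mathbb{E}\max_{t\le NT}Q_{mm}^t \le C(T)$; hence, for a fixed $\bar Q$, the event $\mathcal{A}\delequal\{\max_{t\le NT}\max_m Q_{mm}^t \le \bar Q\}$ has probability at least $1-C(T)/\bar Q$. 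By (A.3) and (A.4), $D_m^0$ lies within $\mathcal{O}(N^{-1/2})$ of a deterministic positive limit $\bar D_m^0$, so on an event of probability $1-o(1)$ we also have $\min_m D_m^0 \ge \tfrac12\min_m\bar D_m^0$.

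On the intersection of these events, set $c \delequal \tfrac12\min\!\big(\min_m\bar D_m^0,\ \beta/(\bar Q+\beta)\big)$ and induct on $t$. If $D_m^t \ge c$ and $D_m^t \le \beta/(Q_{mm}^t+\beta)$, the increment above is nonnegative, so $D_m^{t+1} \ge D_m^t \ge c$. Otherwise $D_m^t > \beta/(Q_{mm}^t+\beta) \ge \beta/(\bar Q+\beta) \ge 2c$, and discarding the positive term $\tfrac{\tau_D\beta}{2ND_m^t}$ gives $D_m^{t+1} > D_m^t - \tfrac{\tau_D}{2N}(Q_{mm}^t+\beta) \ge 2c - \tfrac{\tau_D(\bar Q+\beta)}{2N} \ge c$ as soon as $N \ge \tau_D(\bar Q+\beta)/(2c)$, a threshold not depending on $N$. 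With the base case $D_m^0 \ge c$, the induction closes, giving $D_m^t \ge c$ for all $t \le NT$ on $\mathcal{A}$ (up to the negligible event above). I expect the uniform control of $Q_{mm}^t$ to be the main obstacle: a pointwise moment bound is genuinely not enough --- a union bound over the $NT$ steps, with any fixed number of moments, fails to yield a constant threshold --- so one must route through the maximal inequality, which is precisely where the $\mathcal{O}(1/N)$ size of the per-step increments and the martingale structure get used. A minor point is that the conclusion holds only on an event of probability $1-o(1)$ rather than surely; since every downstream use of the lemma is an expectation bound, the exceptional event is absorbed into the error terms, but this has to be tracked.
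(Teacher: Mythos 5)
Your proposal is correct in substance, but it takes a genuinely different route from the paper and in fact proves a stronger, discrete-time version of the claim. The paper reads the lemma at the level of the limiting ODE: it considers the deterministic flow $\mathrm{d}D_{m}(t)/\mathrm{d}t=\tau_{D}\left(\beta/D_{m}(t)-(Q_{mm}(t)+\beta)\right)$ and gives a one-line barrier/sign-invariance argument --- the drift is positive as $D_{m}\downarrow 0$ and negative as $D_{m}\to\infty$, so a trajectory with $D_{m}(0)>0$ never crosses zero (and symmetrically a negative initialization stays negative); no uniform control of $Q_{mm}$ beyond its behavior along the ODE trajectory on $[0,T]$ is invoked, and the conclusion is used only to bound $\|\nabla_{\mac{M}}F_{D_{m}}\|=\tau\sqrt{1+\beta^{2}/D_{m}^{4}}$ in the Lipschitz lemma. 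You instead attack the stochastic iterates $D_{m}^{t}$ themselves, which is what the lemma's superscript notation literally suggests; for that version you correctly identify that the pointwise moment bounds of Lemma \ref{lem:macro-bounded} are not enough, and your detour through the Doob decomposition plus Doob's maximal inequality to get $\mathbb{E}\max_{t\le NT}Q_{mm}^{t}\le C(T)$, followed by the induction keeping $D_{m}^{t}\ge c$, is the natural way to close it. What your route buys is a uniform constant lower bound on $D_{m}^{t}$ for the actual SGD trajectory (which is really what a Lipschitz constant needs, rather than mere nonvanishing); what it costs is that the conclusion holds only on a $1-o(1)$ event rather than surely, so the exceptional event must be tracked wherever the bound enters expectation estimates --- a complication the paper avoids entirely by working with the deterministic ODE. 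Two caveats to note: you treat only positive initializations, whereas assumptions (A.3)--(A.4) only guarantee $D^{0}\neq\B{0}_{M\times M}$, so you should either justify $\bar{D}_{m}^{0}>0$ (natural, since $D_{m}$ plays the role of a posterior variance) or add the mirror argument for negative initial data as the paper does; and, like the paper's argument, your barrier degenerates at $\beta=0$ (your $c$ collapses to zero), although in that case $F_{D_{m}}$ is trivially Lipschitz so the downstream use is unaffected.
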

\begin{proof}
Consider the ODE in Eq.~\ref{eq:supp-f-form-D}:
\begin{equation*}
    \frac{dD_{m}(t)}{dt} = \tau_{D} \left(\frac{\beta}{D_{m}(t)} - (Q_{mm}(t) + \beta) \right),
\end{equation*}
where $\tau_{D}, \beta \ge 0$ and $\forall t \le NT, Q_{mm}(t) \ge 0$ by definition. We show the behavior of the solution $D_{m}(t)$ based on its initial condition. For $D_{m}(0) > 0$, the term $\tau_{D}(\beta/D_{m}(t) - (Q_{mm}(t)+\beta))$ is positive as $D_{m}(t)$ approaches zero and negative as $D_{m}(t)$ grows to positive infinity. Consequently, if $D_{m}(t)$ attempts to approach zero, $dD_{m}(t)/dt$ becomes positive, indicating that $D_{m}(t)$ increase, and thus does not cross zero. Similarly, if $D_{m}(t)$ becomes very large, $dD_{m}(t)/dt$ becomes negative, causing $D_{m}(t)$ to decrease but remain positive. Therefore, given the initial condition $D_{m}(0) > 0$, $D_{m}(t)$ remains positive for all $t > 0$. Similarly, we can show that, given the initial condition $D_{m}(0) < 0$, $D_{m}(t)$ remains negative for all $t > 0$.
\end{proof}

\begin{Lem}
    \label{lem:lipschiz}
    Under the same assumption as Theorem 4.2, $F(\mac{M})$ is a Lipschitz function.
\end{Lem}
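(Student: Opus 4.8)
The plan is to read off the Lipschitz property directly from the explicit form of $F=(F_{m},F_{d},F_{Q},F_{E},F_{R},F_{D})$ in Eqs.~\eqref{eq:supp-f-form-m}--\eqref{eq:supp-f-form-D}. The structural point is that the shorthand $h(A,B,C)=\rho\sum_{s=1}^{M^{\ast}}A_{s}B_{s}+\eta C$ is \emph{bilinear}, so once $h$ is expanded everywhere, each of $F_{m_{ml}},F_{d_{ml}},F_{Q_{mn}},F_{E_{mn}},F_{R_{mn}}$ is a finite sum of products of boundedly many entries of $\mac{M}$, i.e.\ a polynomial of bounded degree in the coordinates of $\mac{M}$; the only non-polynomial contribution anywhere in $F$ is the single term $\tau_{D}\beta/D_{m}$ in $F_{D_{m}}$ of Eq.~\eqref{eq:supp-f-form-D}. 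A bounded-degree polynomial is $C^{\infty}$, hence Lipschitz on every bounded set with a constant controlled by its diameter, and $x\mapsto 1/x$ is Lipschitz on $\{|x|\ge\delta\}$ with constant $\delta^{-2}$; so $F$ is locally Lipschitz on the slab $\{D_{m}\neq 0\ \forall m\}$ — which already yields the uniqueness claimed in Theorem~\ref{thm:convergence-SGD} via Picard--Lindel\"of — and to obtain the uniform constant used in the coupling argument it remains only to confine the solution of Eq.~\eqref{eq:ode-macro} to a fixed region $\Omega=\{\|\mac{M}\|_{F}\le K,\ |D_{m}|\ge\delta\ \forall m\}$ on which both elementary bounds apply simultaneously.

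First I would strengthen the $D$-nondegeneracy quantitatively — a refinement of Lemma~\ref{lem:positive-negative-D}. From $\dot D_{m}=\tau_{D}(\beta/D_{m}-Q_{mm}-\beta)$ with $Q_{mm}(t)\ge 0$, one has $\dot D_{m}(t)>0$ whenever $0<D_{m}(t)<\beta/(\sup_{[0,T]}Q_{mm}+\beta)$, so this interval is never crossed downward; hence, using $D_{m}(0)\neq 0$ from~(A.4), if $D_{m}(0)>0$ then $D_{m}(t)\ge\delta:=\min\{D_{m}(0),\ \beta/(\sup_{[0,T]}Q_{mm}+\beta)\}>0$ for all $t\in[0,T]$, and symmetrically $D_{m}(t)\le-\delta$ when $D_{m}(0)<0$. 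This reduces matters to producing a norm bound $\|\mac{M}(t)\|_{F}\le K(T)$ on $[0,T]$, which in turn supplies the finite $\sup_{[0,T]}Q_{mm}$ used above.

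For the norm bound I would run a continuation/Gr\"onwall argument. Local Lipschitzness already gives a unique maximal solution, and to rule out finite-time blow-up I would estimate $\tfrac{d}{dt}\|\mac{M}(t)\|_{F}^{2}=2\langle\mac{M}(t),F(\mac{M}(t))\rangle$: the high-degree monomials in $F$ all carry dissipative prefactors — the decay factors $-(D_{m}+D_{n}+2\lambda)$ and $-2\beta$, and contractions of the form $-QE$ with $Q=W^{\top}W/N\succeq 0$ and $E=V^{\top}V/N\succeq 0$ — so, bootstrapping with $D_{m}(t)\ge\delta>0$ from the previous step, the negative contributions dominate and one is left with $\tfrac{d}{dt}\|\mac{M}(t)\|_{F}^{2}\le a+b\,\|\mac{M}(t)\|_{F}^{2}$; Gr\"onwall then gives $\|\mac{M}(t)\|_{F}\le K(T)$ on $[0,T]$. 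On the resulting region $\Omega$ the polynomial and the $\delta^{-2}$ estimates combine into a single Lipschitz constant $L=L(K,\delta,\beta,\tau,\rho,\eta,\lambda,M,M^{\ast})$, which is the effective Lipschitz property invoked in the proof of Theorem~\ref{thm:convergence-SGD} (where the iterates $\mac{M}^{t}$ lie in a region of the same type by Lemmas~\ref{lem:micro-bound}--\ref{lem:macro-bounded}).

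The hard part is the rational term $\tau_{D}\beta/D_{m}$: away from the hyperplanes $\{D_{m}=0\}$ it is arbitrarily smooth, but at them it is genuinely non-Lipschitz, so the crux is Lemma~\ref{lem:positive-negative-D} together with the quantitative lower bound $|D_{m}(t)|\ge\delta$ above. A secondary technical point is checking that the $\mathcal{O}(\tau^{2})$ noise-correction terms appearing in $F_{Q},F_{E},F_{R}$ do not destroy the dissipativity needed for the Gr\"onwall step; everything else — the bilinearity-based polynomial form of Eqs.~\eqref{eq:supp-f-form-m}--\eqref{eq:supp-f-form-R} and the diameter-dependent constant of a bounded-degree polynomial on $\Omega$ — is routine multilinear bookkeeping.
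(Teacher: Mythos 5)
Your overall strategy is the same as the paper's: read off from the explicit form \eqref{eq:supp-f-form-m}--\eqref{eq:supp-f-form-D} that every component of $F$ except the term $\tau_{D}\beta/D_{m}$ is polynomial in the entries of $\mac{M}$, bound the resulting gradients on the region actually visited, and handle the rational term by showing $D_{m}$ never reaches zero (the role of Lemma~\ref{lem:positive-negative-D}). Where you diverge is in how the ``bounded region'' is supplied: the paper simply invokes the already-proved moment bounds on the iterates (Lemmas~\ref{lem:micro-bound} and \ref{lem:macro-bounded}) to control the $\mac{M}$-dependent Lipschitz constants, together with the qualitative statement $D_{m}(t)\neq 0$, whereas you build a self-contained quantitative argument: a $\delta$-refinement of Lemma~\ref{lem:positive-negative-D} and a Gr\"onwall bound $\|\mac{M}(t)\|_{F}\le K(T)$ obtained from a claimed dissipativity of the cubic/quartic terms. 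That extra machinery is where your write-up is weakest: the inequality $\tfrac{d}{dt}\|\mac{M}\|_{F}^{2}\le a+b\|\mac{M}\|_{F}^{2}$ is asserted, not checked, and it is not routine --- e.g.\ when $D_{m}(0)<0$ (allowed by (A.4)) the term $-Q_{mn}(D_{m}+D_{n}+2\lambda)$ in \eqref{eq:supp-f-form-Q} is anti-dissipative, and along the ODE trajectory $Q,E$ need not stay positive semidefinite by fiat, so the sign structure you lean on would have to be established first; moreover your $\delta$ is defined through $\sup_{[0,T]}Q_{mm}$ while the Gr\"onwall step uses $\delta$, so the bootstrap needs an explicit continuation argument to close. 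None of this is fatal, but it is also unnecessary: following the paper, the needed bounds on the region where $\mac{M}^{t}$, $\mac{B}^{t}$, $\mac{S}^{t}$ live come for free from Lemmas~\ref{lem:micro-bound}--\ref{lem:macro-bounded}, and the $1/D_{m}$ term only requires the sign-preservation argument of Lemma~\ref{lem:positive-negative-D} (your quantitative lower bound on $|D_{m}|$ is a nice sharpening of it, and is honestly what a fully rigorous version of the paper's own proof would also need). So: correct identification of the two key ingredients and of the genuine obstruction (the hyperplanes $\{D_{m}=0\}$), but the dissipativity/Gr\"onwall detour should either be proved or replaced by the paper's appeal to the moment bounds.
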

\begin{proof}
    It suffices to verify each component of gradient $\nabla F(\mac{M})$ is bounded. 
    Eq.~\eqref{eq:supp-f-form-m}-\eqref{eq:supp-f-form-R} are linear functions with respect to $\mac{M}$ and then following inequality holds for $\forall \mac{M}$:
    \begin{align*}
        &\|\nabla_{\mac{M}} F_{m_{ml}}(\mac{M})\| \le L_{m_{ml}}(\mac{M}),~~\|\nabla_{\mac{M}} F_{d_{ml}}(\mac{M})\| \le L_{d_{ml}}(\mac{M}),~~\|\nabla_{\mac{M}} F_{Q_{mn}}(\mac{M})\| \le L_{Q_{mn}}(\mac{M}), \\
        &\|\nabla_{\mac{M}} F_{E_{mn}}(\mac{M})\| \le L_{E_{mn}}(\mac{M}), ~~\|\nabla_{\mac{M}} F_{R_{mn}}(\mac{M})\| \le L_{R_{mn}}(\mac{M}),
    \end{align*}
    where $L_{m_{ml}}(\mac{M})$, $ L_{d_{ml}}(\mac{M})$, $L_{Q_{mn}}(\mac{M})$, $L_{E_{mn}}(\mac{M})$ and $L_{R_{mn}}(\mac{M})$ are constants depending on $\mac{M}$. We can show the constants are bounded based on Lemma \ref{lem:micro-bound}. Thus, the functions satisfy the Lipschitz condition. For $F_{D_{m}}(\mac{M})$, gradient norm is given by
    \begin{equation}
        \|\nabla_{\mac{M}} F_{D_{m}}(\mac{M})\| = \tau \sqrt{1+ \frac{\beta^{2}}{D_{m}^{4}}}.
    \end{equation}
    The left-hand side is also bounded 
    since Lemma \ref{lem:positive-negative-D} indicates that for all $m=1, \ldots, M$, $D_{m}(t) \neq 0$ for any $t >0$. Thus, $F_{D_{m}}(\mac{M})$ also satisfy the Lipschitz condition. 
\end{proof}

\section{Local Stability Analysis of Fixed Points of ODEs}\label{sec:local-stability-fixed-point}
In this section, we provide additional details on the local stability analysis of the ODEs. In what follows, we will omit straightforward calculations related to the eigenvalue computations.

\subsection{Stability Analysis of Model-Matched Case}
For the model-matched case, the macroscopic state is described by 6 variables. For the sake of simplicity, we only consider the case $\lambda=0$ and small learning limit $\tau =\tau_{W}=\tau_{V}=\tau_{D}$. 
The fixed points are given by the condition $d\mac{M}/dt=0$. From Eq.~\eqref{eq:supp-f-form-m}-\eqref{eq:supp-f-form-D}, the fixed point equations given by
\begin{align}
\label{eq:model-matched-fixpoint-eq}
\begin{cases}
    F_{m_{11}}(\mac{M}) = \tau \left(d_{11}(\rho+\eta)-m_{11}(\rho d_{11}^{2} + \eta E_{11} + D_{11}) \right)=0 \\
    F_{d_{11}}(\mac{M}) =\tau (\rho+\eta) (m_{11} - (Q_{11}+\beta) d_{11}) = 0 \\
    F_{Q_{11}}(\mac{M})= 2 \tau \left((\rho m_{11} d_{11} + \eta R_{11}) - Q_{11} (\rho d_{11}^{2} + \eta E_{11} + D_{1})  \right) = 0 \\
    F_{E_{11}}(\mac{M}) = 2 \tau \left((\rho m_{11}d_{11} + \eta R_{11}) - (Q_{11}+\beta)(\rho d_{11}^{2} + \eta E_{11}) \right) =0 \\
    F_{R_{11}}(\mac{M}) = \tau \left((1-R_{11})(\rho d_{11}^{2} + \eta E_{11}) - D_{1} R_{11} + (\rho m_{11}^{2} + \eta Q_{11}) - (Q_{11} + \beta)(\rho m_{11} d_{11} + \eta R_{11})  \right)=0 \\
    F_{D_{1}}(\mac{M})= \tau \left(\frac{\beta}{D_{1}} - (Q_{11}+\beta)  \right) = 0,
\end{cases}
\end{align}
where $\mac{M}=(m_{11}, d_{11}, Q_{11}, E_{11}, R_{11}, D_{1})$ are the stationary macroscopic state. The local stability of a fixed point is identified by whether the Jacobian matrix
\begin{equation}
    J(\mac{M}) \delequal 
    \begin{pmatrix}
    \frac{\partial F_{m_{11}}}{\partial m_{11}} & \frac{\partial F_{m_{11}}}{\partial d_{11}} &
    \frac{\partial F_{m_{11}}}{\partial Q_{11}} &
    \frac{\partial F_{m_{11}}}{\partial E_{11}} &
    \frac{\partial F_{m_{11}}}{\partial R_{11}} &
    \frac{\partial F_{m_{11}}}{\partial D_{1}} \\
    \frac{\partial F_{d_{11}}}{\partial m_{11}} & 
    \frac{\partial F_{d_{11}}}{\partial d_{11}} &
    \frac{\partial F_{d_{11}}}{\partial Q_{11}} &
    \frac{\partial F_{d_{11}}}{\partial E_{11}} &
    \frac{\partial F_{d_{11}}}{\partial R_{11}} &
    \frac{\partial F_{d_{11}}}{\partial D_{1}} \\
    \frac{\partial F_{Q_{11}}}{\partial m_{11}} & 
    \frac{\partial F_{Q_{11}}}{\partial d_{11}} &
    \frac{\partial F_{Q_{11}}}{\partial Q_{11}} &
    \frac{\partial F_{Q_{11}}}{\partial E_{11}} &
    \frac{\partial F_{Q_{11}}}{\partial R_{11}} &
    \frac{\partial F_{Q_{11}}}{\partial D_{1}} \\
    \frac{\partial F_{E_{11}}}{\partial m_{11}} & 
    \frac{\partial F_{E_{11}}}{\partial d_{11}} &
    \frac{\partial F_{E_{11}}}{\partial Q_{11}} &
    \frac{\partial F_{E_{11}}}{\partial E_{11}} &
    \frac{\partial F_{E_{11}}}{\partial R_{11}} &
    \frac{\partial F_{E_{11}}}{\partial D_{1}} \\
    \frac{\partial F_{R_{11}}}{\partial m_{11}} & 
    \frac{\partial F_{R_{11}}}{\partial d_{11}} &
    \frac{\partial F_{R_{11}}}{\partial Q_{11}} &
    \frac{\partial F_{R_{11}}}{\partial E_{11}} &
    \frac{\partial F_{R_{11}}}{\partial R_{11}} &
    \frac{\partial F_{R_{11}}}{\partial D_{1}} \\
    \frac{\partial F_{D_{1}}}{\partial m_{11}} & 
    \frac{\partial F_{D_{1}}}{\partial d_{11}} &
    \frac{\partial F_{D_{1}}}{\partial Q_{11}} &
    \frac{\partial F_{D_{1}}}{\partial E_{11}} &
    \frac{\partial F_{D_{1}}}{\partial R_{11}} &
    \frac{\partial F_{D_{1}}}{\partial D_{1}} \\
    \end{pmatrix}
\end{equation}
has eigenvalue with non-negative real part or not. 
Solving Eq.~\ref{eq:model-matched-fixpoint-eq} and computing the eigenvalues of the Jacobian, one easily finds that fixed points other than two cases have positive eigenvalues for any $\beta$, $\rho$ and $\eta$, indicating that they are unstable fixed points. 
Subsequently, we focus on the two cases. 
In the following, the shorthand expression $P=\eta+\rho$ is employed.

\paragraph{Type (1): Posterior Collapsed Fixed Point}
It is easy to verify that 
\begin{equation}
    m^{\ast}_{11}=d^{\ast}_{11}=Q^{\ast}_{11}=E^{\ast}_{11}=R^{\ast}_{11}=0,~~ D^{\ast}_{1}=1
\end{equation}
is a solution of Eq.~\eqref{eq:model-matched-fixpoint-eq}. This fixed point indicates that the VAE encounters a posterior collapse. From a straightforward eigenvalue computation, the six eigenvalues can be expressed as follows:
\begin{align*}
    \lambda_{1}/\tau &= - \frac{\beta}{2},~~~\lambda_{2}/\tau = -(1+\beta \eta) \\
    \lambda_{3}/\tau &= - \left(1+ \beta \eta + \sqrt{(1+\beta \eta)^{2} + 4\eta (\eta-\beta)}  \right),~~~\lambda_{4}/\tau = - \left(1+ \beta \eta - \sqrt{(1+\beta \eta)^{2} + 4\eta (\eta-\beta)}  \right) \\
    \lambda_{5}/\tau &= -\frac{1}{2} \left(1+\beta P + \sqrt{(1+\beta P)^{2}+4 P(P -\beta)}  \right),~~~
    \lambda_{6}/\tau = -\frac{1}{2} \left(1+\beta P - \sqrt{(1+\beta P)^{2}+4 P(P -\beta)} \right)
\end{align*}
Here, $\lambda_{4}$ is positive when $\beta < \eta$, $\lambda_{6}$ is when $\beta < P$ is positive and the others are negative for any $\beta$, $\rho$ and $\rho$. Thus, type (1) fixed point is stable if $P<\beta$. Moreover, all other fixed points are unstable when $P<\beta$, which indicates that a threshold of the posterior collapse is $\beta=P$.

\paragraph{Type (2): Learnable Fixed Point}
The fixed points equation Eq.~\eqref{eq:model-matched-fixpoint-eq} have following solution:
\begin{align}
    &m_{11}^{\ast} = \pm \sqrt{P-\beta},~d_{11}^{\ast} = \pm \frac{\sqrt{P-\beta}}{P},~Q_{11}^{\ast}=P-\beta,~E_{11}^{\ast}= \frac{P-\beta}{P^{2}},~R_{11}^{\ast} = \frac{P-\beta}{P},~D_{1}^{\ast}= \frac{\beta}{P}
\end{align}
The Jacobian of this fixed point possesses six eigenvalues. The three eigenvalues of them can be expressed as follows:
\begin{align*}
    \lambda_{1}/\tau  &= - (1 + \eta P) \\
    \lambda_{2}/\tau  &= -\left(1+\eta P+ \sqrt{(1+\eta P)^{2}-4\eta \rho}  \right) \\
    \lambda_{3}/\tau  &= -\left(1+\eta P - \sqrt{(1+\eta P)^{2}-4\eta \rho}  \right) 
\end{align*}
This three eigenvalues are negative for any $\beta$, $\eta$ and $\rho$. 
The other three eigenvalues can be expressed as the solutions to the following equation:
\begin{equation*}
    \left(\frac{\lambda}{\tau}\right)^{3} + P^{2}\left(P^{2}+2(1+P^{2})\beta \right) \left(\frac{\lambda}{\tau}\right)^{2} + 2 P^{4} \beta\left(P^{2}(1+P^{2})-8\beta^{3}+2(1+4P)\beta^{2} - 2P \beta   \right) \frac{\lambda}{\tau} + 8P^{8}(P-\beta)\beta^{3}=0.
\end{equation*} 
One of the solutions to this equation is positive when $P > \beta$. Furthermore, by substituting $\beta = P$, the equation can be expressed as
\begin{equation*}
\frac{\lambda}{\tau} \left(\frac{\lambda}{\tau}+P^{4}\right)\left(\frac{\lambda}{\tau} + 2P^{3}(1+P^{2}) \right)=0,    
\end{equation*}
indicating that $\lambda = 0$ when $\beta = P$. Thus, type (2) fixed point is stable when $\beta \le \rho+\eta$.

\subsection{Stability Analysis of Model-Mismatched Case}
For the model-mismatched case, the macroscopic state is described by 16 variables. 
For the sake of simplicity, we also consider the case $\lambda=0$ and small learning limit $\tau =\tau_{W}=\tau_{V}=\tau_{D}$. 
The specific fixed-point equations and their Jacobians can be derived from Eq.~\eqref{eq:supp-f-form-m}-\eqref{eq:supp-f-form-D}, just as in the model-matched case. However, they are not displayed here due to their length.
Similarly, all fixed points other than three cases are unstable fixed points as in the model-matched case, as the eigenvalues of their Jacobians take positive values for any $\beta$, $\rho$ and $\eta$.
Subsequently, we focus on the three types in detail.

\paragraph{Type (1): Posterior Collapsed Fixed Point}
It is easy to verify that the following state is a solution of the ODEs:
\begin{equation*}
m^{\ast}=d^{\ast}=\B{0}_{2},~~Q^{\ast}=E^{\ast}=R^{\ast}=\B{0}_{2 \times 2},~~D^{\ast}=\B{1}_{2}
\end{equation*}
The eigenvalues of the Jacobian can be expressed as follows:
\begin{align*}
    &\frac{\lambda_{1}}{\tau}=\frac{\lambda_{2}}{\tau} = - \frac{\beta}{2},~~\frac{\lambda_{3}}{\tau}=\frac{\lambda_{4}}{\tau} = \frac{\lambda_{5}}{\tau}=\frac{\lambda_{6}}{\tau} = - (1 + \beta \eta) \\
    &\frac{\lambda_{7}}{\tau}=\frac{\lambda_{8}}{\tau} = \frac{\lambda_{9}}{\tau} = - \left(1 + \beta \eta + \sqrt{(1+\beta \eta)^{2} + 4\eta(\eta-\beta)}  \right) \\
    &\frac{\lambda_{10}}{\tau}=\frac{\lambda_{11}}{\tau} = \frac{\lambda_{12}}{\tau} = - \left(1 + \beta \eta - \sqrt{(1+\beta \eta)^{2} + 4\eta(\eta-\beta)}  \right) \\
    &\frac{\lambda_{13}}{\tau}=\frac{\lambda_{14}}{\tau} = - \frac{1}{2} \left(1+\beta P + \sqrt{(1+ \beta P)^{2} + 4 P(P-\beta)}  \right) \\
    &\frac{\lambda_{15}}{\tau}=\frac{\lambda_{16}}{\tau} = - \frac{1}{2} \left(1+\beta P - \sqrt{(1+ \beta P)^{2} + 4 P(P-\beta)}  \right)
\end{align*}
These eigenvalue are positive when $\rho + \eta < \beta$, zero when $\rho + \eta=\beta$ and negative when $\rho+\eta > \beta$ as in the model-matched case. 
Thus this fixed solution is stable when $\rho + \eta \le \beta$.

\paragraph{Type (2): Overfitting Fixed Point}
The fixed point equations have the following solution:
\begin{align*}
    &m^{\ast} = \left(\pm \sqrt{P- \beta}, 0 \right),~ d^{\ast} = \left(\pm \frac{\sqrt{P - \beta}}{P}, 0 \right) \\
    &Q^{\ast} = 
    \begin{pmatrix}
    P-\beta & 0 \\
    0                      & \eta - \beta
    \end{pmatrix},~
    E^{\ast} =
    \begin{pmatrix}
        \frac{P-\beta}{P^{2}} & 0 \\
        0 &\frac{\eta-\beta}{\eta^{2}} 
    \end{pmatrix},~
    R^{\ast} = 
    \begin{pmatrix}
        \frac{P-\beta}{P} & 0 \\
        0 & \frac{\eta-\beta}{\eta}
    \end{pmatrix},~
    D^{\ast} = \left(\frac{\beta}{P}, \frac{\beta}{\eta}  \right)
\end{align*}
and 
\begin{align*}
    &m^{\ast} = \left(0, \pm \sqrt{P- \beta} \right),~ d^{\ast} = \left(0, \pm \frac{\sqrt{P - \beta}}{\rho + \eta} \right) \\
    &Q^{\ast} = 
    \begin{pmatrix}
    \eta-\beta & 0 \\
    0                      & P - \beta
    \end{pmatrix},~
    E^{\ast} =
    \begin{pmatrix}
        \frac{\eta-\beta}{\eta^{2}} & 0 \\
        0 &\frac{P-\beta}{P^{2}} 
    \end{pmatrix},~
    R^{\ast} = 
    \begin{pmatrix}
        \frac{\eta -\beta}{\eta} & 0 \\
        0 & \frac{P-\beta}{P}
    \end{pmatrix},~
    D^{\ast} = \left(\frac{\beta}{\eta}, \frac{\beta}{P}  \right)
\end{align*}
The eigenvalues of the Jacobian can be expressed as follows:
\begin{align*}
    &\frac{\lambda_{1}}{\tau} = - 2(1+\eta^{2}),~~~\frac{\lambda_{2}}{\tau} = -(1+\eta P) \\
    &\frac{\lambda_{3}}{\tau} = -\frac{1}{2} \left(1+\eta P + 2(1+\eta) + \sqrt{(1+\eta P)^{2}-4\eta \rho}  \right),~~\frac{\lambda_{4}}{\tau} = -\frac{1}{2} \left(1+\eta P + 2(1+\eta) - \sqrt{(1+\eta P)^{2}-4\eta \rho}  \right) \\
    &\frac{\lambda_{5}}{\tau} = - \left(1+\eta P + \sqrt{(1+\eta P)^{2}-4\eta \rho}  \right),~~~\frac{\lambda_{6}}{\tau} = - \left(1+\eta P - \sqrt{(1+\eta P)^{2}-4\eta \rho}  \right) \\
    &\frac{\lambda_{7}}{\tau} = \frac{\lambda_{8}}{\tau} = -\frac{1}{2} \left(1+\eta P+ \sqrt{(1+\eta P)^{2} + 8\beta\left(\sqrt{(\beta-\eta)(\beta-P)} + \beta - P + \frac{\rho}{2} \right)}  \right) \\
    &\frac{\lambda_{9}}{\tau} = \frac{\lambda_{10}}{\tau} = -\frac{1}{2} \left(1+\eta P - \sqrt{(1+\eta P)^{2} + 8\beta\left(\sqrt{(\beta-\eta)(\beta-P)} + \beta - P + \frac{\rho}{2} \right)}  \right) \\
\end{align*}
Here, the real parts $\mathrm{Re}(\lambda_{9})$ and $\mathrm{Re}(\lambda_{10})$ are positive when $\beta > \rho/2 + \eta$ and the others are negative for any $\beta$, $\eta$ and $\rho$. 
Additionally, the other eigenvalues are represented as solutions to the following equations:
\begin{align}
    &\left(\frac{\lambda}{\tau}\right)^{3} + (\eta^{3}+2\beta\eta(1+\eta^{2})) \left(\frac{\lambda}{\tau}\right)^{2} -2 \beta \eta^{2} \left(8\beta^{3}+2 \beta(\eta - \beta(1+4\eta)) - \eta^{2}(1+\eta^{2})  \right)P^{4} \frac{\lambda}{\tau} - 8 \beta^{3}(\beta-\eta)\eta^{5} P^{6} = 0 \label{eq:overfit-eq1}\\
    &\left(\frac{\lambda}{\tau}\right)^{3}+\left(\eta P^{4}+2 \eta \beta P^{2}(1+P^{2}) 
 \right) \left(\frac{\lambda}{\tau}\right)^{2} +2\beta \eta^{2} P^{4} \left((P^{2}+P^{4}) + 2\beta^{2}(1+4P) -2 \beta(4\beta^{2}+P)  \right) \frac{\lambda}{\tau} \notag \\
 &~~~~~~~~~~~~~~~~~~~~~~~~~~~~~~~~~~~~~~~~~~~~~~~~~~~~~~~~~~~~~~~~~~~~~~~~~~~~~~~~~~~~~~~~~~~~~~~~~~~~~~~- 8 \beta^{3} \eta^{3} (\beta-P)P^{8}=0 \label{eq:overfit-eq2}
\end{align}
One solution of Eq.~\eqref{eq:overfit-eq1} is positive when $\beta > \eta$, and Eq.~\eqref{eq:overfit-eq1} can be expressed as follows when $\beta = \eta$:
\begin{equation*}
    \frac{\lambda}{\tau}\left(\left(\frac{\lambda}{\tau} \right)^{2} + \frac{\lambda}{\tau} \eta^{2}(2+\eta +2\eta^{2})P^{2}+2\eta^{5}(1+\eta^{2})P^{4}\right)=0,
\end{equation*}
indicating that $\lambda=0$ when $\beta=\eta$. One solution of Eq.~\eqref{eq:overfit-eq2} is positive when $\beta > \eta + \rho$ and Eq.~\eqref{eq:overfit-eq2} can be expressed as follows when $\beta=\eta$:
\begin{equation}
    \frac{\lambda}{\tau}\left(\left(\frac{\lambda}{\tau}\right)^{2}+\frac{\lambda}{\tau} \eta P^{3}(2+P +2P^{2})+2\eta^{2}P^{7}(1+P^{2})\right)=0,
\end{equation}
indicating that $\lambda=0$ when $\beta=\eta+\rho$. Thus, type (2) is stable when $\eta \le \beta \le \eta + \beta$.

\paragraph{Type (3): Learnable Fixed Point} 
\begin{figure}[tb]
    \centering
    \includegraphics[width=\textwidth]{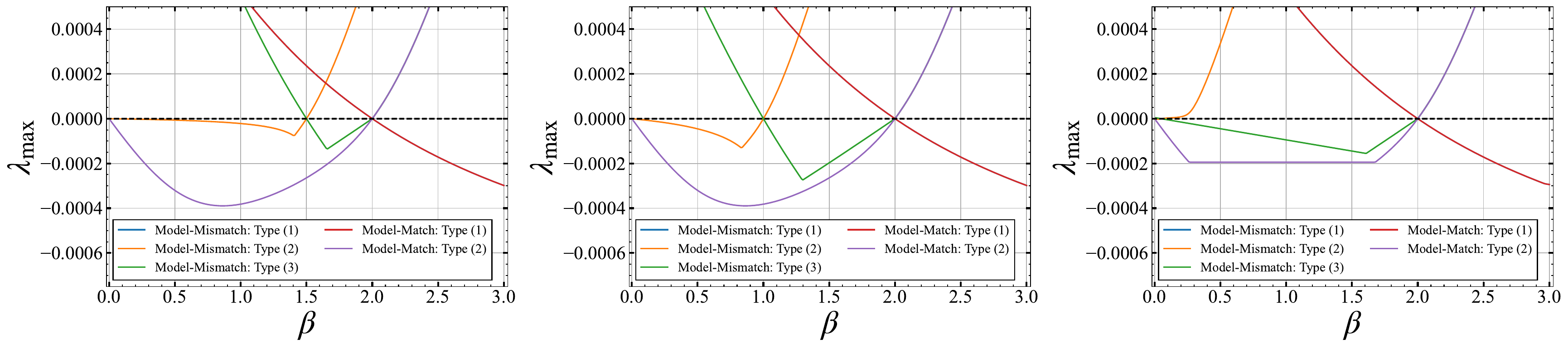}
    \caption{Max eigenvalues of Jacobian for each stable fixed points when $\rho, \eta =0.5, 1.5$ (left), $\rho, \eta = 1.0, 1.0$ (middle) and $\rho, \eta = 1.95, 0.05$ (right) as a function of $\beta$ for both model-matched and model-mismatched cases. }
    \label{fig:max-eigen}
\end{figure}
The fixed point equation has the following solution:
\begin{align*}
    &m^{\ast} = \left(\pm \sqrt{P - \beta}, 0 \right),~ d^{\ast} = \left(\pm \frac{\sqrt{P - \beta}}{P}, 0 \right) \\
    &Q^{\ast} = 
    \begin{pmatrix}
    P-\beta & 0 \\
    0                      & 0
    \end{pmatrix},~
    E^{\ast} =
    \begin{pmatrix}
        \frac{P-\beta}{P^{2}} & 0 \\
        0 & 0 
    \end{pmatrix},~
    R^{\ast} = 
    \begin{pmatrix}
        \frac{P-\beta}{P} & 0 \\
        0 & 0
    \end{pmatrix},~
    D^{\ast} = \left(\frac{\beta}{P}, 1  \right)
\end{align*}
and
\begin{align*}
    &m^{\ast} = \left(0, \pm \sqrt{P - \beta} \right),~ d^{\ast} = \left(0, \pm \frac{\sqrt{P - \beta}}{P} \right) \\
    &Q^{\ast} = 
    \begin{pmatrix}
    0 & 0 \\
    0                      & P - \beta
    \end{pmatrix},~
    E^{\ast} =
    \begin{pmatrix}
        0 & 0 \\
        0 &\frac{P-\beta}{P^{2}} 
    \end{pmatrix},~
    R^{\ast} = 
    \begin{pmatrix}
        0 & 0 \\
        0 & \frac{P-\beta}{\eta+\rho}
    \end{pmatrix},~
    D^{\ast} = \left(1, \frac{\beta}{P}  \right)
\end{align*}
The eigenvalue of the Jacobian can be expressed as follows:
\begin{align*}
    &\frac{\lambda_{1}}{\tau} = -\frac{\beta}{2},~~~\frac{\lambda_{2}}{\tau} = -(1+\eta P)~~~\frac{\lambda_{3}}{\tau} = - (1+\beta \eta), \\
    &\frac{\lambda_{4}}{\tau} = -\left(1+\beta \eta + \sqrt{(1+\beta \eta)^{2}+4\eta(\eta-\beta)}  \right),~~~\frac{\lambda_{5}}{\tau} = -\left(1+\beta \eta - \sqrt{(1+\beta \eta)^{2}+4\eta(\eta-\beta)} \right), \\
    &\frac{\lambda_{6}}{\tau} = - \frac{1}{2} \left(1+\beta P + \sqrt{(1+\beta P)^{2}+4\beta(\beta-P)}  \right),~~~\frac{\lambda_{7}}{\tau} = - \frac{1}{2} \left(1+\beta P - \sqrt{(1+\beta P)^{2}+4\beta(\beta-P)}  \right), \\
    &\frac{\lambda_{8}}{\tau} = - \left(1+\eta P + \sqrt{(1+\eta P)^{2}-4\eta \rho}  \right),~~~\frac{\lambda_{9}}{\tau} = - \left(1+\eta P - \sqrt{(1+\eta P)^{2}-4\eta \rho}  \right), \\
    &\frac{\lambda_{10}}{\tau}  = -\frac{1}{2} \Bigg(2+\eta(\beta+P) + \big((1+\eta \beta)^{2} + (1+\eta P)^{2} + 4\eta(\eta-\rho) + 2\sqrt{\left((1-\beta \eta)^{2}+4\eta^{2} \right)\left((1+\eta P)^{2}-2\eta P\right)}   \big)^{1/2}   \Bigg), \\
    &\frac{\lambda_{11}}{\tau}  = -\frac{1}{2} \Bigg(2+\eta(\beta+P) - \big((1+\eta \beta)^{2} + (1+\eta P)^{2} + 4\eta(\eta-\rho) + 2\sqrt{\left((1-\beta \eta)^{2}+4\eta^{2} \right)\left((1+\eta P)^{2}-2\eta P\right)}   \big)^{1/2}   \Bigg), \\
    &\frac{\lambda_{12}}{\tau}  = -\frac{1}{2} \Bigg(2+\eta(\beta+P) + \big((1+\eta \beta)^{2} + (1+\eta P)^{2} + 4\eta(\eta-\rho) - 2\sqrt{\left((1-\beta \eta)^{2}+4\eta^{2} \right)\left((1+\eta P)^{2}-2\eta P\right)}   \big)^{1/2}   \Bigg), \\
    &\frac{\lambda_{13}}{\tau}  = -\frac{1}{2} \Bigg(2+\eta(\beta+P) - \big((1+\eta \beta)^{2} + (1+\eta P)^{2} + 4\eta(\eta-\rho) - 2\sqrt{\left((1-\beta \eta)^{2}+4\eta^{2} \right)\left((1+\eta P)^{2}-2\eta P\right)}   \big)^{1/2}   \Bigg). \\
\end{align*}
Here, $\lambda_{7}$ is positive when $\beta > \rho+\eta$, $\lambda_{5}$ is positive when $\beta < \eta$, $\lambda_{11}$ is positive when $\beta < \bar{\eta}$ where $\eta < \bar{\eta}$ and the others are negative for any $\beta$, $\eta$ and $\rho$. 
The other eigenvalues are expressed as solutions to the following equation:
\begin{equation*}
    \left(\frac{\lambda}{\tau}\right)^{3} + P^{2}\left(P^{2}+2\beta (1+P^{2})  \right) \left(\frac{\lambda}{\tau}\right)^{2} + 2 \beta P^{4}\left(-2\beta(4\beta^{2}+P) + P^{2}(1+P^{2})+2\beta^{2}(1+2P)  \right) \frac{\lambda}{\tau} - 8 \beta^{3}(\beta-P)P^{8}=0
\end{equation*}
eigenvalue is positive when $\beta > \rho+\eta$, $\beta=\rho+\eta$ and the equation expressed as when $\beta=P$ 
\begin{equation*}
    \frac{\lambda}{\tau}\left(\left(\frac{\lambda}{\tau}\right)^{2} + \frac{\lambda}{\tau} P^{3} (2+ P(1+2P)) + 2P^{7}(1+P^{2})  \right)=0
\end{equation*}
which indicates $\lambda=0$. Thus, type (3) fixed point is stable when $\eta \le \beta \le \rho+\eta$.
Fig.~\ref{fig:max-eigen} presents all types of fixed points and their corresponding maximum eigenvalues as a function of $\beta$.

\subsection{Stability Analysis of Tanh KL Annealing}\label{subsec:stability-tanh-annealing}

\begin{figure}[tb]
    \centering
    \includegraphics[width=\textwidth]{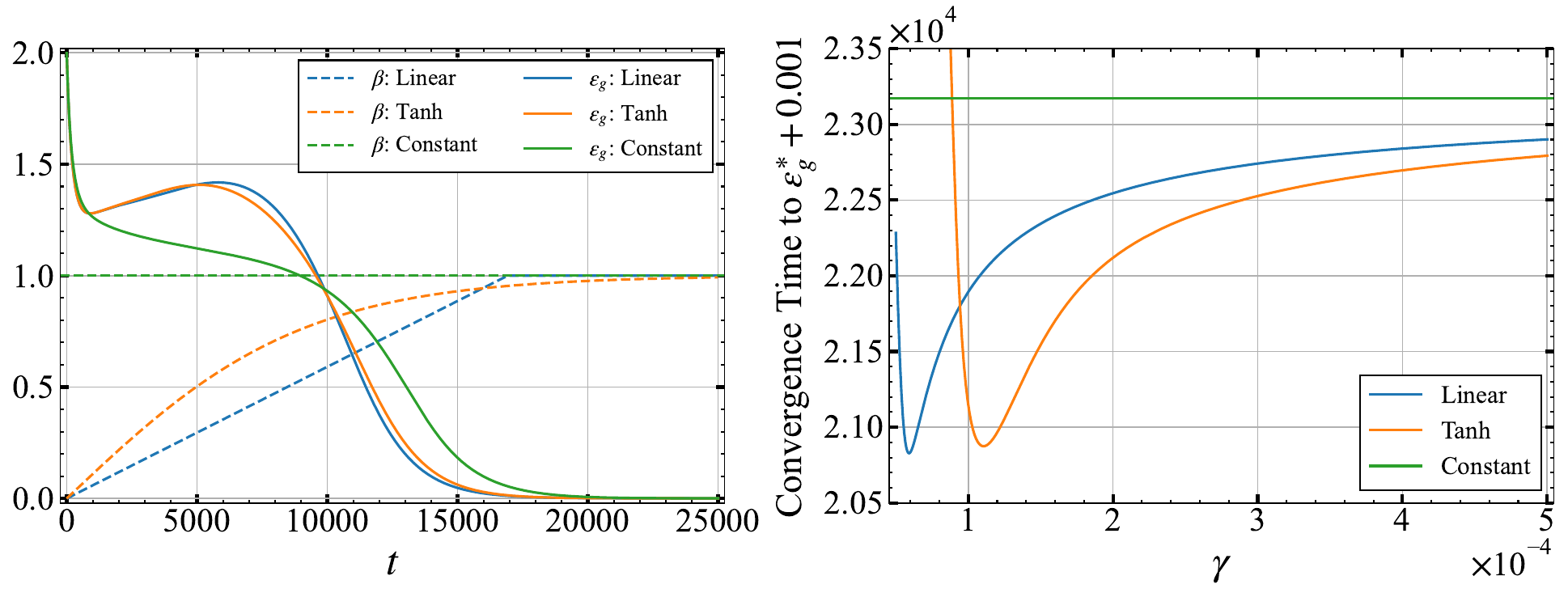}
    \caption{(Top) Time dependence of the generalization error and $\beta$ with linear annealing, tanh KL annealing and constant $\beta=1$ under fixed parameters $\lambda=0$, $\rho=\eta=1$, and $\tau=0.001$. (Bottom) Annealing-rate $\gamma$ dependence of convergence time to the quasi-steady state deviating by $0.001$, i.e., $\varepsilon^{\ast}_{g}+0.001$. The annealing rate $\gamma$ of the learning dynamics with linear annealing, tanh annealing in the top figure is used as the optimal value obtained from the bottom figure.}
    \label{fig:linear-annealing-dynamics}
\end{figure}
For the case of Tanh KL annealing $\beta(t)=\tanh(\gamma t)$, the fixed-point equation can be expressed as follows:
\begin{align*}
\begin{cases}
    F_{m_{11}}(\mac{M}, \beta) = \tau \left(d_{11}(\rho+\eta)-m_{11}(\rho d_{11}^{2} + \eta E_{11} + D_{11}) \right)=0 \\
    F_{d_{11}}(\mac{M}, \beta) =\tau (\rho+\eta) (m_{11} - (Q_{11}+\beta) d_{11}) = 0 \\
    F_{Q_{11}}(\mac{M}, \beta)= 2 \tau \left((\rho m_{11} d_{11} + \eta R_{11}) - Q_{11} (\rho d_{11}^{2} + \eta E_{11} + D_{1})  \right) = 0 \\
    F_{E_{11}}(\mac{M}, \beta) = 2 \tau \left((\rho m_{11}d_{11} + \eta R_{11}) - (Q_{11}+\beta)(\rho d_{11}^{2} + \eta E_{11}) \right) =0 \\
    F_{R_{11}}(\mac{M}, \beta) = \tau \left((1-R_{11})(\rho d_{11}^{2} + \eta E_{11}) - D_{1} R_{11} + (\rho m_{11}^{2} + \eta Q_{11}) - (Q_{11} + \beta)(\rho m_{11} d_{11} + \eta R_{11})  \right)=0 \\
    F_{D_{1}}(\mac{M}, \beta)= \tau \left(\frac{\beta}{D_{1}} - (Q_{11}+\beta)  \right) = 0,\\
    F_{\beta}(\mac{M}, \beta) = \gamma (1-\beta^{2})=0
\end{cases}
\end{align*}
This fixed-point equation has the same stable fixed points as the model-matched case; that is, type (1) posterior collapsed fixed point is stable when $\beta > \eta + \rho$ and type (2) Learnable fixed point is stable when $\beta < \eta + \rho$. 
Additionally, the Jacobian possesses the same eigenvalues as the model-matched case, along with a new eigenvalue of $\lambda_{7}=-2\gamma$ originated from tanh KL annealing. Specifically, for the learnable fixed point, and  excluding $-2\gamma$, the maximal eigenvalue can be expressed as follows when $\rho=2-\nu$ and $\eta = \nu$:
\begin{equation}
    \lambda_{\max}(\nu) =
    \begin{cases}
        \frac{\tau}{2}(\sqrt{5}-3) & \tau(1-2\sqrt{2}+\sqrt{5})/4 \le \nu \le \tau (1+2\sqrt{2} + \sqrt{5})/4 \\
        -\tau(1+2\nu) + \tau\sqrt{1-4\nu(1-4\nu)} &\mathrm{otherwise}
    \end{cases}
\end{equation}
Thus, the conditions under which tanh KL annealing slows down the convergence are expressed as
    \begin{equation*}
        \gamma \le  
        \begin{cases}
            \frac{\tau}{4}(3-\sqrt{5}),~\tau (1-2\sqrt{2}+\sqrt{5})/4 \le \nu \le \tau (1+2\sqrt{2}+\sqrt{5})/4 \\
            \tau \left(\nu +\frac{1}{2}\right) - \tau \sqrt{\nu(2\nu-1) +\frac{1}{4}},~\mathrm{otherwise}.
        \end{cases}
    \end{equation*}
\section{Additional Results}
\subsection{Linear Annealing}\label{subsec:linear-annealing}
In this section, we demonstrate the properties of the linear annealing $\beta(t)=\gamma t$ which is used in various applications. Fig.~\ref{fig:linear-annealing-dynamics} demonstrates the generalization error as a function of $t$ for both Linear and tanh KL annealing using the optimal annealing rate and for the constant $\beta=1$. It also demonstrates the $\gamma$ dependency of the quasi-steady-state convergence times for linear and tanh KL annealing and constant $\beta=1$. As a result, this experiment demonstrates that both linear KL annealing and tanh KL annealing exhibit qualitatively similar behavior.

\end{document}